\documentclass{article}

\usepackage{microtype}
\usepackage{graphicx}

\usepackage{booktabs}
\usepackage{caption}
\usepackage{multirow}
\usepackage{tabularx}
\usepackage{hyperref}

\usepackage[accepted]{icml2026}
\usepackage{amsmath}
\usepackage{amssymb}
\usepackage{mathtools}
\usepackage{amsthm}
\usepackage{epstopdf}
\usepackage{enumitem}
\usepackage{titletoc}
\usepackage[capitalize,noabbrev]{cleveref}

\theoremstyle{plain}
\newtheorem{theorem}{Theorem}[section]
\newtheorem{proposition}[theorem]{Proposition}
\newtheorem{lemma}[theorem]{Lemma}
\newtheorem{corollary}[theorem]{Corollary}
\theoremstyle{definition}

\newtheorem{assumption}[theorem]{Assumption}
\theoremstyle{remark}

\icmltitlerunning{Geometry-Aware Neural Optimizer for Shape Optimization and Inversion}

\begin{document}

\twocolumn[
  \icmltitle{Geometry-Aware Neural Optimizer for Shape Optimization and Inversion}

  % It is OKAY to include author information, even for blind submissions: the
  % style file will automatically remove it for you unless you've provided
  % the [accepted] option to the icml2026 package.

  % List of affiliations: The first argument should be a (short) identifier you
  % will use later to specify author affiliations Academic affiliations
  % should list Department, University, City, Region, Country Industry
  % affiliations should list Company, City, Region, Country

  % You can specify symbols, otherwise they are numbered in order. Ideally, you
  % should not use this facility. Affiliations will be numbered in order of
  % appearance and this is the preferred way.
  \icmlsetsymbol{equal}{*}

  \begin{icmlauthorlist}
    \icmlauthor{Guoze Sun}{equal,ruc}
    \icmlauthor{Tianya Miao}{equal,ruc}
    \icmlauthor{Haoyang Huang}{ruc}
    \icmlauthor{Huaguan Chen}{ruc}
    \icmlauthor{Han Wan}{ruc}
    \icmlauthor{Rui Zhang}{ruc}
    \icmlauthor{Hao Sun}{ruc}
  \end{icmlauthorlist}

  \icmlaffiliation{ruc}{Gaoling School of Artificial Intelligence, Renmin University of China}

  \icmlcorrespondingauthor{Rui Zhang}{rayzhang@ruc.edu.cn}
  \icmlcorrespondingauthor{Hao Sun}{haosun@ruc.edu.cn}

  % You may provide any keywords that you find helpful for describing your
  % paper; these are used to populate the "keywords" metadata in the PDF but
  % will not be shown in the document
  \icmlkeywords{Machine Learning, ICML}

  \vskip 0.3in
]

% this must go after the closing bracket ] following \twocolumn[ ...

% This command actually creates the footnote in the first column listing the
% affiliations and the copyright notice. The command takes one argument, which
% is text to display at the start of the footnote. The \icmlEqualContribution
% command is standard text for equal contribution. Remove it (just {}) if you
% do not need this facility.

% Use ONE of the following lines. DO NOT remove the command.
% If you have no special notice, KEEP empty braces:
%\printAffiliationsAndNotice{}  % no special notice (required even if empty)
% Or, if applicable, use the standard equal contribution text:
\printAffiliationsAndNotice{\icmlEqualContribution}

\begin{abstract}
Geometry is central to PDE-governed systems, motivating shape optimization and inversion. Classical pipelines conduct costly forward simulation with geometry processing, requiring substantial expert effort. Neural surrogates accelerate forward analysis but do not close the loop because gradients from objectives to geometry are often unavailable. Existing differentiable methods either rely on restrictive parameterizations or unstable latent optimization driven by scalar objectives, limiting interpretability and part-wise control. To address these challenges, we propose Geometry-Aware Neural Optimizer (\textbf{\textsc{GANO}}), an end-to-end differentiable framework that unifies geometry representation, field-level prediction, and automated optimization/inversion in a single latent-space loop. \textsc{GANO} encodes shapes with an auto-decoder and stabilizes latent updates via a denoising mechanism, and a geometry-informed surrogate provides a reliable gradient pathway for geometry updates. Moreover, \textsc{GANO} supports part-wise control through null-space projection and uses remeshing-free projection to accelerate geometry processing. We further prove that denoising induces an implicit Jacobian regularization that reduces decoder sensitivity, yielding controlled deformations. Experiments on three benchmarks spanning 2D Helmholtz, 2D airfoil, and 3D vehicles show state-of-the-art accuracy and stable, controllable updates, achieving up to $+55.9\%$ lift-to-drag improvement for airfoils and $\sim 7\%$ drag reduction for vehicles.
\end{abstract}

\section{Introduction}

\emph{Geometry} is one of the most important control variables for PDE-governed systems~\citep{yau1982survey}. This motivates two key tasks: \emph{optimization}~\citep{fei2025survey}, which designs geometries to meet objectives under manufacturing constraints, and \emph{inversion}~\citep{bastek2023inverse,salo2024geometric}, which infers unknown shapes from observations. Efficient and reliable geometric optimization/inversion reduces computational cost and is crucial for manufacturing. Classical geometry optimization and inversion~\citep{jameson2003aerodynamic,peter2010numerical} follow an iterative workflow that includes forward analysis, geometry modification, and remeshing (Fig.~\ref{fig:1}\textbf{a}). Therefore, forward solves and geometry processing become computational bottlenecks, leading to slow iteration and heavy expert intervention.

\begin{figure}[!t]
    \centering
    \includegraphics[width=\linewidth]{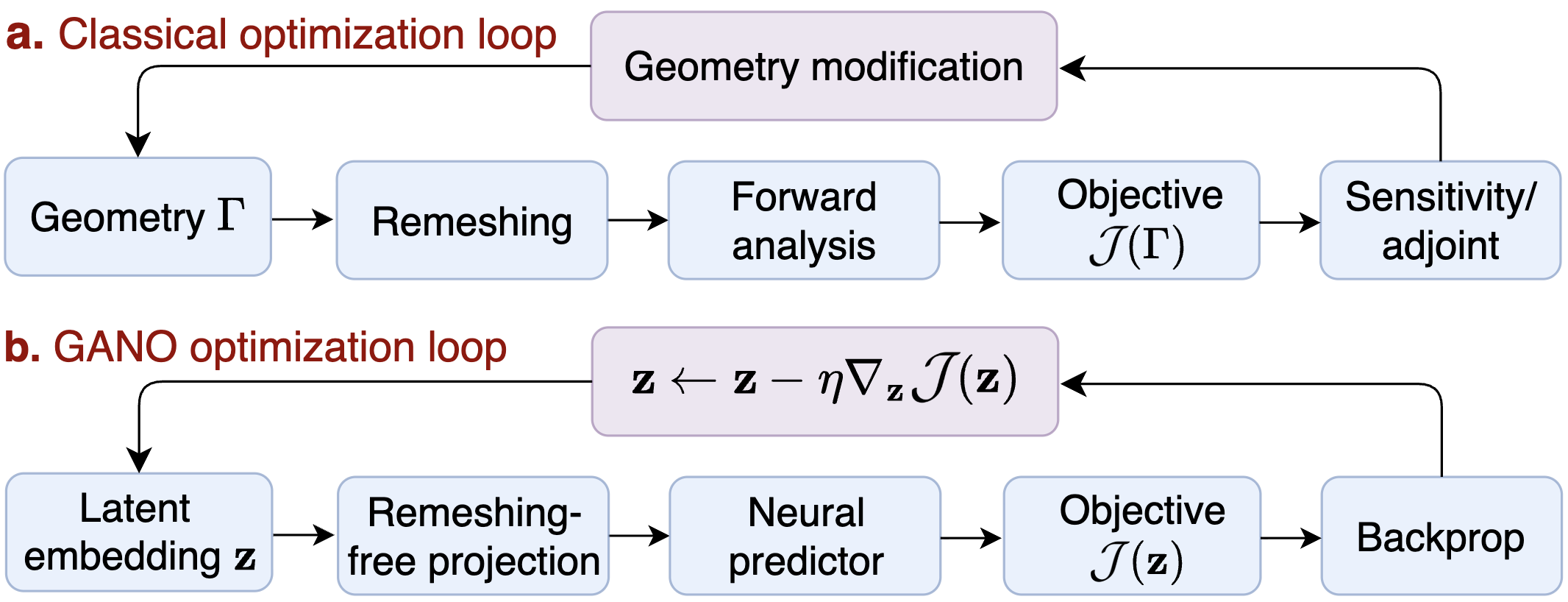}
    \caption{Comparison between a classical geometry optimization loop (\textbf{a}) and the proposed \textsc{GANO} loop (\textbf{b}).}
    \vspace{-10pt}
    \label{fig:1}
\end{figure}

To reduce the cost of the forward step, recent advances have enabled fast neural surrogates for PDE-governed systems with complex geometries~\citep{li2023geometry,wu2024Transolver,zeng2025phympgn}. These models can predict solution fields directly on irregular meshes or point clouds, replacing expensive numerical solvers with faster inference. However, accelerating the forward pass alone does not yield end-to-end geometry optimization. In practice, geometry updates are typically performed through non-differentiable operations (e.g., CAD edits and remeshing), which significantly limit optimization speed and require strong specialized knowledge and experience.

Beyond accelerating forward analysis, deep learning has also been leveraged for geometry optimization and inversion. For geometric representation, existing approaches can be divided into \emph{parametric} and \emph{latent-space} methods. \emph{Parametric methods} optimize over human-designed, low-dimensional parameterizations (e.g., design-variable formulations), often coupled with learned surrogates or learning-guided search~\citep{allen2022inverse,shukla2024deep,rehmann2025surrogate}. Their limited design spaces yield stable, interpretable updates, but restrict expressivity and coverage of complex 3D geometries. In contrast, \emph{latent-space methods}~\cite{vatani2025tripoptimizer,hao2025_3did} learn encoder--decoder representations and optimize directly in the latent space, offering richer geometric variability. Yet gradient-based latent updates can easily drift off the data-supported shape manifold because the latent space is high-dimensional and weakly constrained, resulting in geometries catering to optimization targets but impractical in reality ~\citep{wu2024compositional}. Moreover, many pipelines predict only scalar objectives and optimize them directly~\citep{tran2024aerodynamics,vatani2025tripoptimizer}, rather than operating on field-level predictions, which weakens interpretability and fine-scale refinement.

To address the lack of industrial-grade end-to-end automation in geometry optimization, we propose the \textbf{G}eometry-\textbf{A}ware \textbf{N}eural \textbf{O}ptimizer ({\textsc{\textbf{GANO}}}), an end-to-end differentiable framework that unifies \emph{representation}, \emph{prediction}, and \emph{optimization}, yielding results with excellent performance improvements as well as practicality for industrial applications. \textsc{GANO} replaces the classical loop with an automated latent-space procedure that enables accurate prediction, stable geometry updates, and part-wise control (Fig.~\ref{fig:1}\textbf{b}). For \emph{representation}, \textsc{GANO} encodes each shape into a latent code $\mathbf{z}$ with an auto-decoder \textsc{StableSDF}, and uses a denoising technique to stabilize gradient-based optimization and inversion. For \emph{forward prediction}, \textsc{GANO} incorporates the \textsc{Transolver}~\cite{wu2024Transolver} backbone with an efficient geometry-informed mechanism, providing an informative gradient pathway from objectives to $\mathbf{z}$. For \emph{optimization} and \emph{inversion}, \textsc{GANO} optimizes geometry by updating $\mathbf{z}$ while supporting variable, region-aware objectives, and part-wise constraints through null-space projection (e.g., freezing specified components and optimizing others). Moreover, we maintain boundary-consistent queries onto the updated surface via remeshing-free projection. 

We evaluate \textsc{GANO} on three challenging benchmarks, including forward modeling and inversion for the Helmholtz equation, 2D airfoil flow prediction and lift-maximizing optimization, and 3D vehicle surface-pressure prediction and drag-minimizing optimization. Across all tasks, \textsc{GANO} achieves state-of-the-art performance and enables stable, remeshing-free geometry updates with part-wise control. Overall, our contributions can be summarized as follows:

\noindent 1. We propose \textbf{\textsc{GANO}}, an end-to-end differentiable framework for industrial-grade geometry optimization, which unifies \emph{geometry representation}, \emph{field-level prediction}, and \emph{automated optimization/inversion} in a single loop.

\noindent 2. We prove that the denoising mechanism in the geometry representation induces Jacobian regularization that reduces the decoder's sensitivity to latent perturbations and yields controlled shape deformations during optimization.

\noindent 3. \textsc{GANO} achieves consistent state-of-the-art performance across three challenging benchmarks. Specifically, it achieves up to \textbf{+55.9\%} improvement in the lift-to-drag ratio (2D airfoil) and up to \textbf{$\sim$7\%} reduction in drag (3D vehicle).

\section{Related Works}

\paragraph{Neural PDE Solvers.} Neural PDE solvers are used to accelerate the forward computation for PDE-governed systems, achieving fast inference once trained. Representative works include \textsc{FNO}~\citep{li2021fourier,10.5555/3692070.3692773}, \textsc{DeepONet}~\citep{lu2019deeponet,karumuri2026physics} and other variants~\cite{zhang2024deciphering,hu2025wavelet,bhaganagar2025accelerated}.  
To handle complex geometries, recent work extends neural PDE solvers to irregular discretization settings. Graph-based surrogate models PDE states on unstructured meshes, allowing direct learning on domains with complex boundaries~\citep{zeng2025phympgn,liu2025aerogto}. In parallel, transformer-style architectures treat point samples as tokens and capture long-range interactions with attention~\citep{hao2023gnot,wu2024Transolver,luo2025transolver++}. Geometry-informed operators map irregular geometries to a regular grid, enabling efficient inference in the latent space~\cite{li2023geometry,liu2026geometry}. Furthermore, several approaches aim to accelerate the forward analysis of aerodynamics~\cite{chen2025tripnet,liu2025dragsolver,gu2026geoformer}. Despite these advances, most neural PDE solvers focus solely on accelerating the forward pass. 
For shape optimization/inversion, the geometry is updated across iterations, which is non-differentiable and disrupts gradient propagation to the shape. 

\begin{figure*}[!t]
    \centering
    \includegraphics[width=0.95\linewidth]{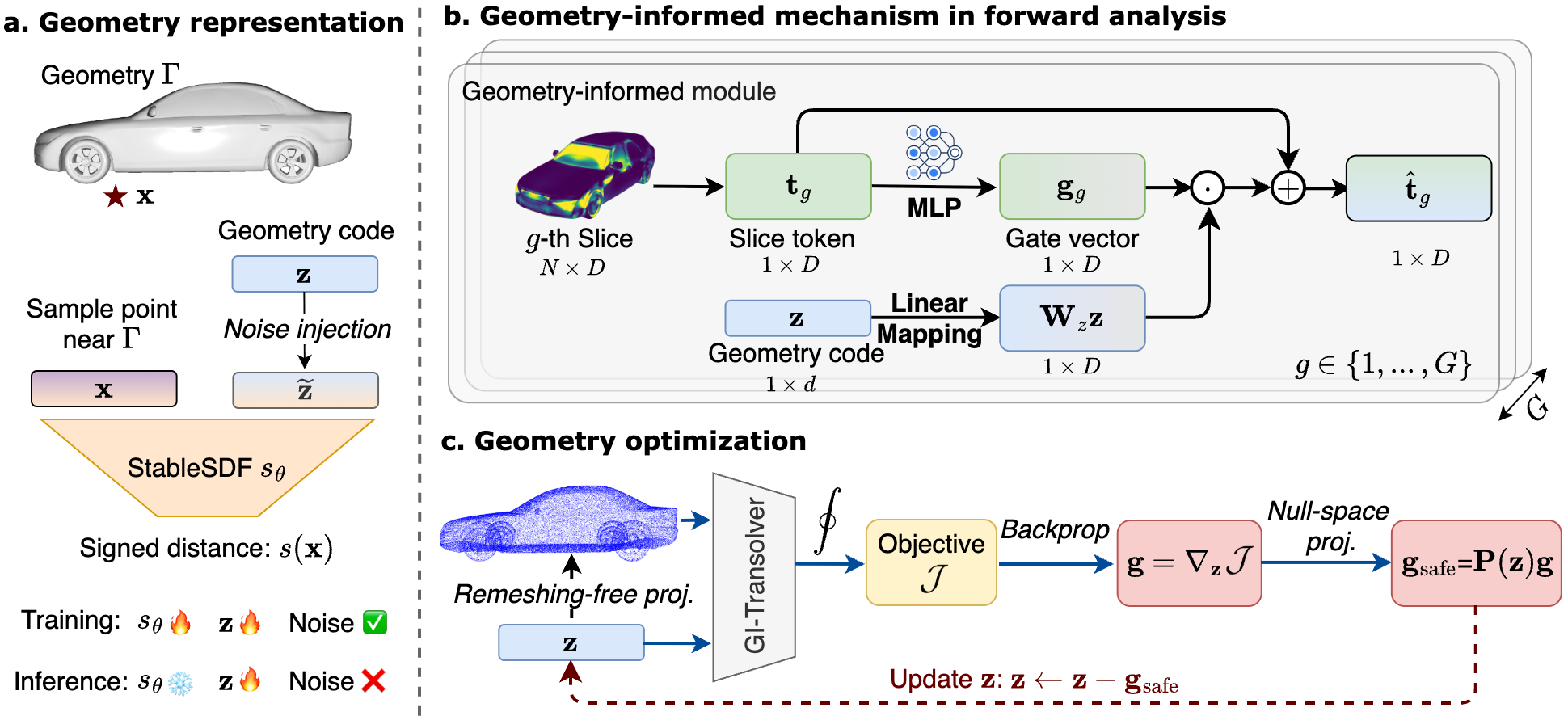}
\caption{\textbf{Pipeline of \textsc{GANO}}. \textbf{a}. \textit{Geometry representation:} \textsc{GANO} models $\Gamma$ as an implicit SDF $s_\theta(\mathbf{x})$ and uses denoising-style augmentation during training. \textbf{b}. \textit{Geometry-informed mechanism in forward analysis:} A geometry-informed module to inject geometry information and creates a gradient pathway. \textbf{c.} \textit{Optimization:} We backpropagate $\nabla_{\mathbf{z}}\mathcal{J}$ and apply a null-space projection $P(\mathbf{z})$ to achieve controllable geometry update.}
\vspace{-10pt}
\label{fig:main}
\end{figure*}

 \paragraph{Neural Optimizer.}  Prior work has explored different ways to achieve neural shape optimization~\cite{li2022evolutionary}. Parametric methods optimize geometry in a hand-crafted parameter space by predicting physical quantities and updating the design variables via backpropagation~\citep{sun2019review,allen2022inverse,shukla2024deep,rehmann2025surrogate}. While effective on structured designs, this route is constrained by limited parametrization and struggles with complex objects like vehicles. To enlarge design space, several methods perform gradient-based optimization directly in a learned \emph{latent} space~\citep{gomez2018automatic,mescheder2019occupancy}. Typical pipelines encode shapes, learn a surrogate for optimization objectives (e.g., drag), and iteratively update the latent code via gradients~\citep{tran2024aerodynamics}. Some variants fine-tune parts of the encoder to modify the geometry-latent mapping~\citep{vatani2025tripoptimizer}. Despite improved flexibility, voxel/occupancy representations often sacrifice geometric fidelity. Recently, generative approaches embed optimization into diffusion or flow matching by injecting objective gradients into the sampling dynamics~\citep{hao2025_3did,you2025physgen}, which still rely on decoded-space refinement and increase computation overhead. Moreover, \citep{chen2026optimization} depends only on scalar prediction and lacks the ability of part-wise optimization. We provide a more detailed discussion in Appendix~\ref{app:comparison}.
 
\paragraph{ Implicit Neural Representations.}Implicit neural representations (INRs) represent continuous signals by mapping spatial or spatio-temporal coordinates to signal values with neural networks. They have been widely used for geometry, vision, and physical-field modeling because they provide a resolution-independent and differentiable representation of continuous domains. For geometry modeling, DeepSDF~\citep{park2019deepsdf} learns a continuous signed distance function conditioned on a latent code, enabling compact shape representation, interpolation, and reconstruction. Beyond conventional MLPs, SIREN~\citep{sitzmann2020siren} introduces sinusoidal activation functions to improve the representation of high-frequency signals and their derivatives, which is particularly useful for physical signals governed by differential equations. Recently, INRs have also been introduced into PDE modeling and operator learning. CORAL~\citep{serrano2023coral} represents functions on general geometries using coordinate-based neural fields and learns operators in the latent space of these representations, improving flexibility across irregular domains and samplings. GridMix~\citep{wang2025gridmix} further studies spatial modulation for neural fields in PDE modeling by combining grid-based representations to preserve locality while modeling global structures. Different from these works, our method uses an INR-style SDF auto-decoder as a stable and controllable geometry representation, and further couples it with a geometry-informed field predictor to enable differentiable shape optimization and inversion. 
\section{Method}
\label{sec:method}

\paragraph{Problem setup.}
We study PDE-governed systems whose computational domain $\Omega(\Gamma)$ is determined by an unknown geometry $\Gamma$. Given $\Gamma$, solving the PDE on $\Omega(\Gamma)$ produces physical fields $\mathbf{u}(\cdot;\Gamma)$. Our goal is to obtain an optimal geometry $\Gamma^{\star}$ by minimizing a task-dependent functional,
\begin{equation}
\Gamma^{\star} = \arg\min_{\Gamma}\; \mathcal{J}\big(\mathbf{u}(\cdot;\Gamma)\big),\label{eq:target}
\end{equation}
where $\mathcal{J}$ denotes a performance objective for optimization or an observation-matching objective for inversion.

\subsection{Overview of \textsc{GANO}}

Solving Eq.~\ref{eq:target} efficiently and reliably relies on three key components: (i) a \emph{geometry representation} for $\Gamma$ that admits a compact latent space and stable updates, (ii) a \emph{forward surrogate} to predict the physical fields $\mathbf{u}(\cdot;\Gamma)$, and (iii) an \emph{optimization mechanism} that updates $\Gamma$ given task-specific objectives. To this end, we propose the \textbf{G}eometry-\textbf{A}ware \textbf{N}eural \textbf{O}ptimizer (\textbf{\textsc{GANO}}), a framework that enables \emph{ full cycle end-to-end differentiability} and \emph{automation} for geometry optimization. \textsc{GANO} couples a compact implicit geometry representation (Sec.~\ref{sec:stablesdf}) with a geometry-informed field predictor (Sec.~\ref{sec:gi-transolver}), and performs a remeshing-free optimization in the latent space (Sec.~\ref{sec:diff_opt_control}), which supports flexible, region-aware objectives.

\subsection{Geometry Representation with \textsc{StableSDF}}
\label{sec:stablesdf}

We build on \textsc{DeepSDF}~\citep{park2019deepsdf} as a standard implicit representation, and introduce a simple but effective denoising technique to make the latent-to-geometry mapping locally smooth, which is critical for stable gradient-based optimization. We propose our adapted model as \textsc{\textbf{StableSDF}}.

\paragraph{Preliminary: \textsc{DeepSDF}.}
A \emph{signed distance function} (SDF) assigns each spatial location $\mathbf{x}\in\mathbb{R}^3$ a signed distance to a surface $\Gamma$: $s(\mathbf{x})=0$ on $\Gamma$, $s(\mathbf{x})<0$ inside, and $s(\mathbf{x})>0$ outside, with $|s(\mathbf{x})|=\mathrm{dist}(\mathbf{x},\Gamma)$ denoting the Euclidean distance to $\Gamma$.
\textsc{DeepSDF} represents a shape by learning a neural implicit decoder that predicts the signed distance given a coordinate $\mathbf{x}$ and a shape embedding $\mathbf{z}$:
\begin{equation}
{s} \;=\; s_{\theta}(\mathbf{x},\mathbf{z}),
\qquad
\mathbf{x}\in\mathbb{R}^{3},\ \mathbf{z}\in\mathbb{R}^{d},\ {s}\in\mathbb{R},
\end{equation}
where $\mathbf{z}$ is the latent code corresponding to a specific geometry and $\theta$ denotes shared network parameters.

\begin{figure}
    \centering
    \includegraphics[width=1\linewidth]{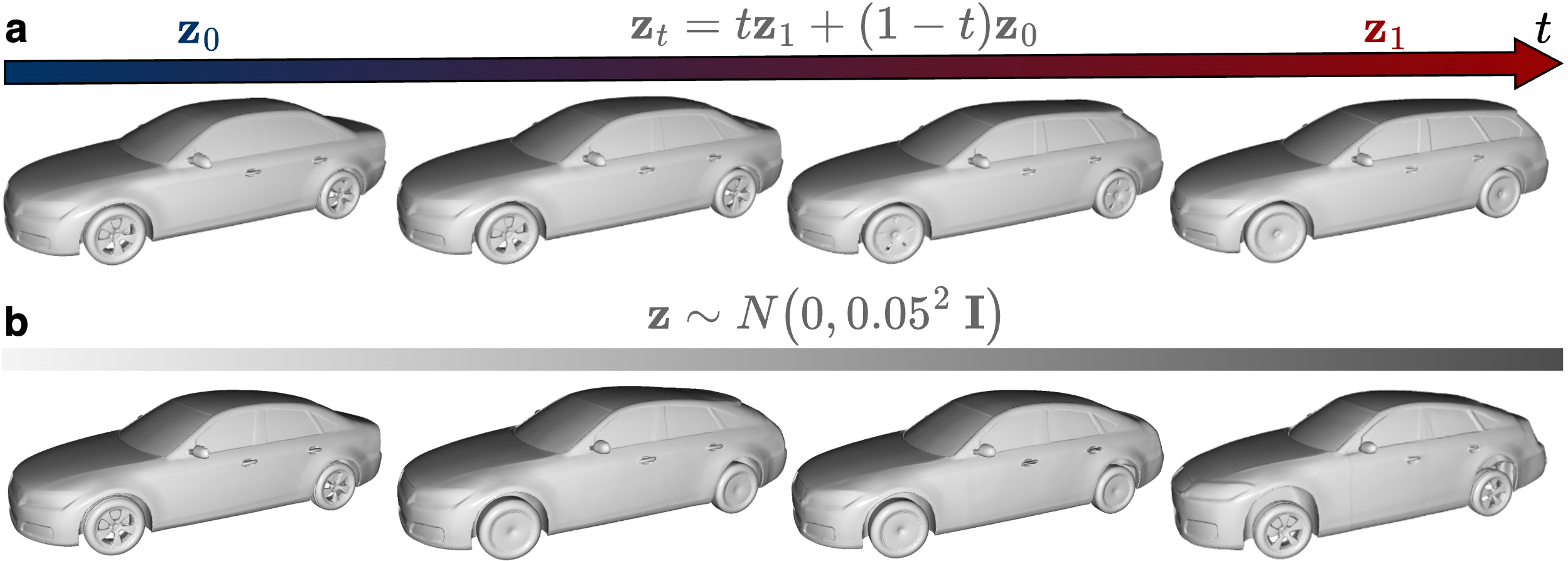}
    \caption{\textbf{Latent-space of \textsc{StableSDF}.} \textbf{a}. Linear interpolation between two latent codes $\mathbf{z}_0$ and $\mathbf{z}_1$ in the dataset. \textbf{b}. Sampling latent codes from $\mathbf{z}\sim\mathcal{N}(\mathbf{0},0.05^2\mathbf{I})$.}
\label{fig:3}
\vspace{-6pt}
\end{figure}

\paragraph{\textsc{StableSDF}.} To enable stable latent-space optimization and inversion, we propose an SDF-based auto-decoder with a denoising mechanism, which we term \emph{\textsc{StableSDF}}. Given a training set of geometries $\{\Gamma_i\}_{i=1}^{N}$, we link each geometry $\Gamma_i$ with a learnable latent code $\mathbf{z}_i\in\mathbb{R}^{d}$ and learn a shared decoder $s_{\theta}$. For each $\Gamma_i$, we sample points $\mathbf{x}\in\mathbb{R}^{3}$ in surrounding space and compute target signed distances $s$. We jointly optimize the decoder parameters and the latent codes using a $\ell_{1}$ reconstruction loss with a $\ell_{2}$ latent regularizer,
\begin{equation}
\label{eq:stablesdf_loss}
\min_{\theta,\{\mathbf{z}_i\}}
\sum_{i=1}^{N}\;
\mathbb{E}_{(\mathbf{x},s)\sim\mu_i}
\Big[
\big|\, s_{\theta}(\mathbf{x},\tilde{\mathbf{z}}_i) - s \,\big|
\Big]
\;+\;
\lambda \|\mathbf{z}_i\|_2^2,
\end{equation}
where $\mu_i$ denotes the sampling distribution associated with geometry  $\Gamma_i$, and the $\ell_2$ term corresponds to a Gaussian prior on the latent codes.

A key design in \emph{\textsc{StableSDF}} is adding small Gaussian perturbations to the latent code during training, i.e., $\tilde{\mathbf{z}}_i \;=\; \mathbf{z}_i + \boldsymbol{\epsilon}$ with $\boldsymbol{\epsilon}\sim \mathcal{N}(\mathbf{0}, \sigma^2 \mathbf{I}).$ This denoising-style augmentation encourages the decoder to reconstruct valid geometry not only at the single point $\mathbf{z}_i$, but also throughout its local neighborhood. Consequently, the induced mapping from $\mathbf{z}$ to geometry ${\Gamma}(\mathbf{z})=\{\mathbf{x}\mid s_{\theta}(\mathbf{x},\mathbf{z})=0\}$ becomes smoother and less sensitive to perturbations, which can ensure the optimization loop in the geometry-consistent latent space. As shown in Fig.~\ref{fig:3}\textbf{a}, smoothly varying $\mathbf{z}$ leads to coherent and meaningful deformations of $\Gamma(\mathbf{z})$. 

\subsection{\textsc{GI-Transolver} for Forward Analysis}
\label{sec:gi-transolver}

A central requirement of \textsc{GANO} is a \emph{geometry-conditioned} forward surrogate that (i) accurately predicts solution fields on irregular discretizations and (ii) remains \textit{differentiable} with respect to latent $\mathbf{z}$. We incorporate the \textsc{Transolver}~\citep{wu2024Transolver} backbone with an efficient geometry-informed mechanism, termed \textbf{\textsc{GI-Transolver}}, which not only explicitly injects geometry information but creates an effective gradient path from the optimization objectives to the latent code $\mathbf{z}$.

\paragraph{Preliminary: \textsc{Transolver}.}
Let the query set be $\mathcal{Q}=\{\mathbf{q}_i\}_{i=1}^{N}$, where each
$\mathbf{q}_i$ contains spatial coordinates on an irregular mesh, optionally
together with observed physical quantities. \textsc{Transolver}  first embeds each query point into a feature vector
$\mathbf{h}_i\in\mathbb{R}^{D}$. It introduces a Physics-Attention mechanism that
adaptively groups points into $G$ learnable \emph{slices}, where each slice is
intended to represent an intrinsic physical state. Specifically, the slice assignment is produced by a point-wise projection
followed by a Softmax over the slice dimension:
\begin{equation}
\label{eq:slice_weight}
\boldsymbol{\ell}_i = \mathbf{W}_{s}\mathbf{h}_i,\
\mathbf{w}_i=\mathrm{Softmax}(\boldsymbol{\ell}_i),
\ \sum_{g=1}^{G} w_{i,g}=1,
\end{equation}
where $\mathbf{W}_{s}$ is a learnable projection and $w_{i,g}$ denotes the
degree to which point $\mathbf{q}_i$ is assigned to the $g$-th slice. 

For each slice, \textsc{Transolver} then normalizes the assignment weights
across points and aggregates point features into a physics-aware token:
\begin{equation}
\label{eq:slice_alpha}
\alpha_{i,g}
=
\frac{w_{i,g}}
{\sum_{j=1}^{N} w_{j,g}+\varepsilon},
\end{equation}
\begin{equation}
\label{eq:slice_agg}
\mathbf{t}_{g}
=
\sum_{i=1}^{N}
\alpha_{i,g}\,\mathbf{W}_{v}\mathbf{h}_{i},
\qquad g=1,\dots,G,
\end{equation}
where $\mathbf{W}_{v}$ is a learnable value projection. 

The token set
$\mathbf{T}=\{\mathbf{t}_{g}\}_{g=1}^{G}$ is then processed by standard
self-attention in the compact slice-token space:
\begin{equation}
\label{eq:slice_attention}
\tilde{\mathbf{T}}
=
\mathrm{Attention}(\mathbf{T}),
\end{equation}
Finally, the updated tokens are
broadcast back to point-level features through the original slice assignment
weights:
\begin{equation}
\label{eq:deslice_revise}
\tilde{\mathbf{h}}_{i}
=
\sum_{g=1}^{G}
w_{i,g}\,\mathbf{W}_{o}\tilde{\mathbf{t}}_{g},
\end{equation}
where $\mathbf{W}_{o}$ is an output projection. A point-wise prediction head
then maps $\tilde{\mathbf{h}}_i$ to the target physical quantity
$\hat{\mathbf{u}}(\mathbf{q}_i)$.
\paragraph{\textsc{GI-Transolver}.} Geometry optimization requires the forward surrogate to be differentiable with respect to $\mathbf{z}$ so that the objective, depending on predicted fields, can update geometry through gradient descent. However,  \textsc{Transolver} does not take \textit{compact} geometry information as input, so gradients from model outputs cannot backpropagate to latent code $\mathbf{z}$. Therefore, we propose geometry-informed Transolver (\textsc{\textbf{GI-Transolver}}), which injects geometry information into the slice-token space through a gated mechanism, making the predicted fields explicitly differentiable with respect to $\mathbf{z}$ produced by \textsc{StableSDF}.

Given a slice token $\mathbf{t}_g\in\mathbb{R}^{D}$ and latent code $\mathbf{z}\in\mathbb{R}^{d}$, we first project $\mathbf{z}$ to token dimension ${D}$, then compute a gate vector and use it to modulate the latent code, after which the modulated $\mathbf{z}$ is injected through a residual update:
\begin{equation}
\label{eq:gate}
\mathbf{g}_{g}
=
\sigma\left(
\mathbf{W}_{2}\,\mathrm{SiLU}\left(\mathbf{W}_{1}\mathbf{t}_{g}\right)
\right)
\in(0,1)^{D},
\end{equation}
\begin{equation}
\label{eq:gated_z}
\Delta \mathbf{t}_{g} = \mathbf{g}_{g}\odot \mathbf{W}_{z} \mathbf{z},\quad \hat{\mathbf{t}}_{g} \leftarrow \mathbf{t}_{g} + \Delta {\mathbf{t}}_{g}.
\end{equation}

Importantly, the injection in Eqs.~\ref{eq:gate}-\ref{eq:gated_z} is fully differentiable, enabling the information from optimization target to be passed to $\mathbf{z}$ efficiently, i.e., $\partial_{\mathbf{z}}\mathcal{J}
=
\frac{\partial \mathcal{J}}{\partial \hat{\mathbf{u}}}
\frac{\partial \hat{\mathbf{u}}}{\partial \mathbf{z}}$.

\subsection{Differentiable Optimization with \textsc{GANO}}
\label{sec:diff_opt_control}

We perform \emph{gradient-based} geometry optimization or inversion in latent space by iteratively updating the latent code $\mathbf{z}$, freezing the geometry decoder $s_{\theta}$ and the forward surrogate. At each iteration, we maintain a surface point set $\mathcal{X}_t=\{\mathbf{x}_i\}_{i=1}^{N}\subset\Gamma(\mathbf{z}_t)$ and evaluate task objectives using predicted physical fields at these \emph{boundary-consistent} query points, then backpropagating gradients to update $\mathbf{z}$. Notably, predicting \emph{full physical fields} (instead of a fixed scalar) allows objectives $\mathcal{J}$ to be defined flexibly from field-level quantities (such as local properties). Moreover, this design enables (i) \emph{part-wise control} via null-space projection, and (ii) \emph{remeshing-free} iterations via SDF-based projection.

\paragraph{Part-wise control via null-space projection.}
In many real-world scenarios, only a subset of the geometry is allowed to change, while other components are expected to remain fixed. For example, drag minimization in vehicle aerodynamics often optimizes the body shape while keeping standardized parts (e.g., wheels and side mirrors) fixed.  To achieve part-wise control, we specify a set of constraint points $\mathcal{X}_{\mathrm{const}}=\{\mathbf{x}_m\}_{m=1}^{M}$ on components that should remain fixed. Owing to the generative ability of \textsc{StableSDF} (Fig.~\ref{fig:3}\textbf{b}), by preserving these points during latent updates, we can keep the geometry of the corresponding parts.
Let
\begin{equation}
\label{eq:nullspace_jacobian_main}
\mathbf{G}(\mathbf{z})
=
\frac{\partial s_{\theta}(\mathcal{X}_{\mathrm{const}},\mathbf{z})}{\partial \mathbf{z}}
\in\mathbb{R}^{M\times d}.
\end{equation}

Given the unconstrained gradient $\mathbf{g}=\nabla_{\mathbf{z}}\mathcal{J}(\mathbf{z})$, we project it onto the null space of $\mathbf{G}(\mathbf{z})$:
\begin{equation}
\label{eq:nullspace_projector_main}
\mathbf{P}(\mathbf{z})=\mathbf{I}-\mathbf{G}(\mathbf{z})^{\dagger}\mathbf{G}(\mathbf{z}),
\qquad
\mathbf{g}_{\mathrm{safe}}=\mathbf{P}(\mathbf{z})\,\mathbf{g},
\end{equation}
where $\mathbf{G}^{\dagger}$ is the Moore--Penrose pseudo-inverse.
This ensures $\mathbf{G}(\mathbf{z})\,\mathbf{g}_{\mathrm{safe}}=\mathbf{0}$, suppressing first-order changes of the signed distances at $\mathcal{X}_{\mathrm{const}}$ and thereby protecting the specified parts (detailed in Appendix~\ref{app:nullspace_projection}).

\paragraph{Remeshing-free projection for boundary-consistent queries.}
The surface $\Gamma(\mathbf{z})$ changes after updating $\mathbf{z}$, so the previous samples $\mathcal{X}_t$ drift off the new boundary. We maintain boundary consistency by adjusting sample points back to the updated $\Gamma(\mathbf{z})$ using an SDF-based projection to replace time-consuming remeshing:
\begin{equation}
\mathbf{x} \leftarrow
\mathbf{x}
-
s_{\theta}(\mathbf{x},\mathbf{z})\,
\frac{\nabla_{\mathbf{x}} s_{\theta}(\mathbf{x},\mathbf{z})}
{\|\nabla_{\mathbf{x}} s_{\theta}(\mathbf{x},\mathbf{z})\|_2^{2}+\varepsilon}.\label{eq:point_reprojection_main}
\end{equation}

Eq.~\ref{eq:point_reprojection_main} is a first-order step that moves $\mathbf{x}$ toward nearest $s_{\theta}(\mathbf{x},\mathbf{z})=0$ along the local SDF gradient direction (Appendix~\ref{app:proj}). In practice, only a few projection steps are needed, yielding an efficient operator that keeps $\mathcal{X}_t$ boundary-consistent throughout iterations while avoiding explicit mesh reconstruction (detailed in Appendix~\ref{app:remeshing_free_projection}).

\paragraph{Optimization loop.}
Starting from the initial code $\mathbf{z}_0$ and samples $\mathcal{X}_0\subset\Gamma(\mathbf{z}_0)$, each iteration (i) predicts fields on $\mathcal{X}_t$, (ii) calculates a task objective $\mathcal{J}(\mathbf{z}_t)$, (iii) back-propagates to obtain $\nabla_{\mathbf{z}}\mathcal{J}$ and optionally uses null-space projection for part-wise control, (iv) updates $\mathbf{z}_t$ to $\mathbf{z}_{t+1}$, and (v) projects $\mathcal{X}_t$ onto $\Gamma(\mathbf{z}_{t+1})$ using remeshing-free projection. Alg.~\ref{alg:geomaster_opt} in Appendix~\ref{app:alg_detail} summarizes the procedure.

\section{Theory} \label{sec:theory}
We provide a theoretical explanation for the effectiveness and stability of \textsc{GANO}. Our analysis begins with the denoising mechanism in \textsc{StableSDF}, which induces a Jacobian regularization and thus reduces its sensitivity to latent perturbations. Under mild assumptions, this leads to smooth surface deformations across optimization iterations. Moreover, this sensitivity control bounds the bias introduced by the stopping-gradients operation during optimization. Formal theorems and proofs can be seen in Appendix~\ref{app:theory}.

\paragraph{(1) Denoising mechanism regularizes decoder sensitivity.}
During \textsc{StableSDF} training, we inject Gaussian noise into each latent code (Eq.~\ref{eq:stablesdf_loss}). Theorem~\ref{theo:latent_jacobian} (Appendix~\ref{app:latent_jacobian}) shows that, in the small-noise regime and with mild local smoothness of $f(\mathbf{z}) \triangleq s_\theta(\mathbf{x},\mathbf{z})$, we have
\begin{equation}
\label{eq:main_l1_jacobian_bound}
\mathbb{E}[| f(\mathbf{z}+\boldsymbol{\varepsilon})-s |]
\le
|f(\mathbf{z})-s|+\sigma\sqrt{\frac{2}{\pi}}\|\nabla_{\mathbf{z}} f(\mathbf{z})\|_2+\mathcal{O}(\sigma^2).
\end{equation}
Therefore, the additional penalty due to perturbation is approximately
$\sigma\sqrt{\frac{2}{\pi}}\|\nabla_{\mathbf{z}} s_\theta(\mathbf{x},\mathbf{z})\|_2$, which acts as an implicit latent-Jacobian regularizer. This makes the decoder less sensitive to latent perturbations near the geometry manifold.

\paragraph{(2) Reduced sensitivity yields controlled surface displacement.}
Assume (i) level-set regularity $\|\nabla_\mathbf{x} s_\theta\|\ge m$ on $\Gamma(\mathbf{z})$ and (ii) a latent-sensitivity bound $\|\nabla_\mathbf{z} s_\theta\|\le L_z$ in a tubular neighborhood of $\Gamma(\mathbf{z})$.
Then, for sufficiently small $\delta \mathbf{z}$, the surface discrepancy $d_H(\cdot,\cdot)$ satisfies a first-order bound
(Theorem~\ref{thm:surface_lipschitz} in Appendix~\ref{app:surface_lipschitz}):

\begin{equation}
d_H\big(\Gamma(\mathbf{z}),\Gamma(\mathbf{z}+\delta \mathbf{z})\big)
\;\le\;
\frac{L_z}{m}\,\|\delta \mathbf{z}\|_2
+\mathcal{O}(\|\delta \mathbf{z}\|_2^2).    
\end{equation}

Therefore, reducing latent-sensitivity $L_z$ makes each latent update produce a more controlled geometric deformation.

\paragraph{(3) Why detaching remeshing-free projection yields effective optimization.}
Our optimization maintains boundary-consistent surface samples $X(\mathbf{z})\subset\Gamma(\mathbf{z})$ via a projection operator in Eq.~\ref{eq:point_reprojection_main}, while gradients are taken only through the explicit differentiable path $\mathbf{z}\rightarrow\hat{\mathbf u}$ created by \textsc{GI-Transolver}.
Let $\mathcal{J}(\mathbf{z})=\widehat{\mathcal{J}}(\mathbf{z},X(\mathbf{z}))$; then the total derivative follows the chain rule:
{
\small
\begin{equation}
\nabla_{\mathbf z}\mathcal{J}(\mathbf z)\;=\;\underbrace{\nabla_{\mathbf z}\widehat{\mathcal{J}}(\mathbf z, X)}_{\text{explicit path } \mathbf z \rightarrow \hat{\mathbf u}}
\;+\;
\underbrace{\big(\nabla_{X}\widehat{\mathcal{J}}(\mathbf z, X)\big)\,\frac{dX(\mathbf z)}{d\mathbf z}}_{\text{geometry transport via projection}}.    
\end{equation}
}
In practice, we use the first term as our optimization gradient and stop gradients through the projection pathway, thus omitting the transport term.
Theorem~\ref{thm:grad_mismatch} in Appendix~\ref{app:detach_geometry} shows that the norm of the omitted term is bounded by the same sensitivity that governs surface motion. Therefore, reducing latent sensitivity (smaller $L_z$) also reduces the bias introduced by detaching the projection of $X(\mathbf{z})$.

\begin{figure}[!t]
    \centering
    \includegraphics[width=1\linewidth]{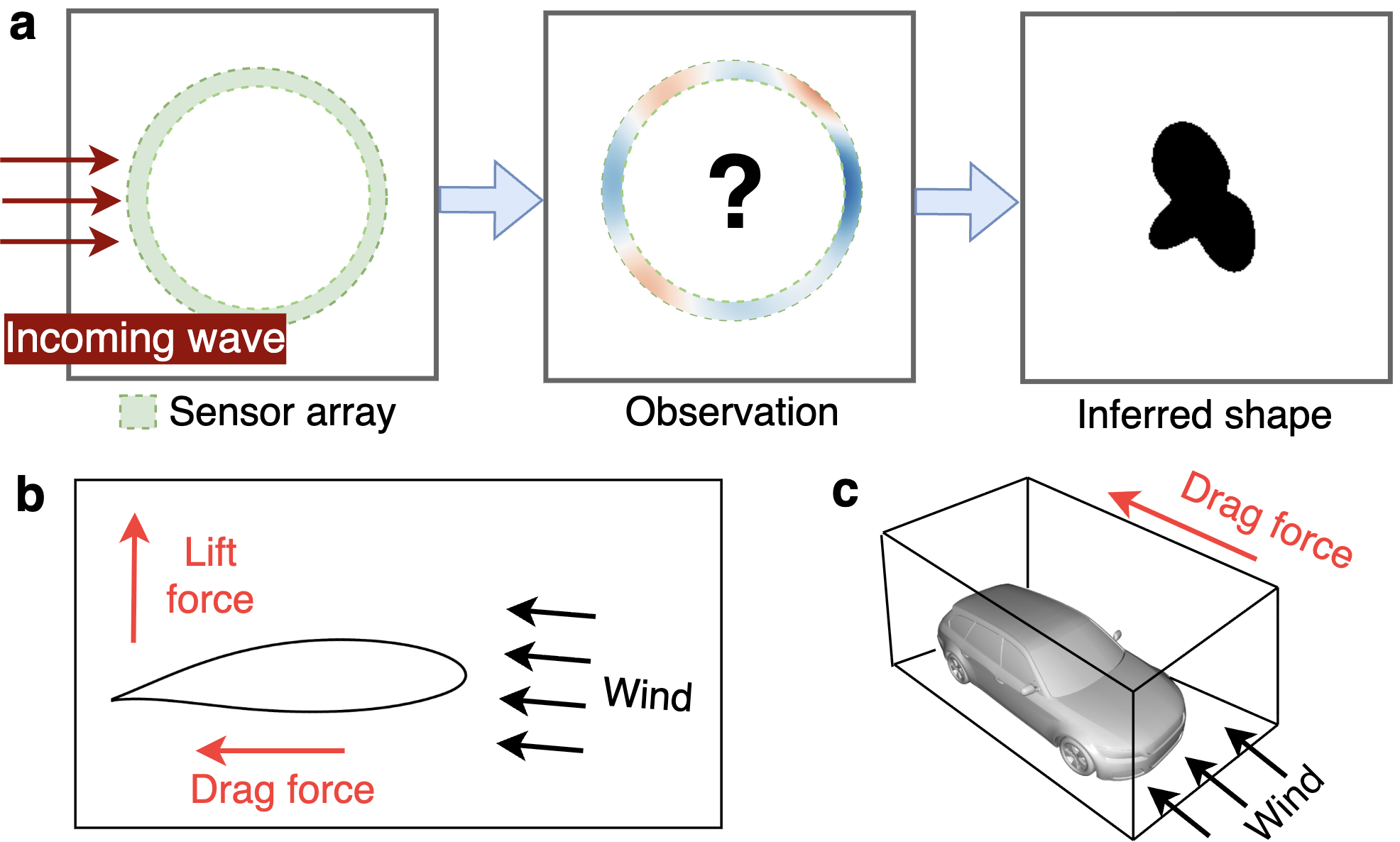}
\caption{\textbf{Experimental setups.}
\textbf{a}. \emph{2D Helmholtz;} \textbf{b}. \emph{2D airfoil;} \textbf{c}. \emph{3D vehicle.}}
\label{fig:6}
\end{figure}
\begin{figure*}[t!]
    \centering
    \includegraphics[width=1\linewidth]{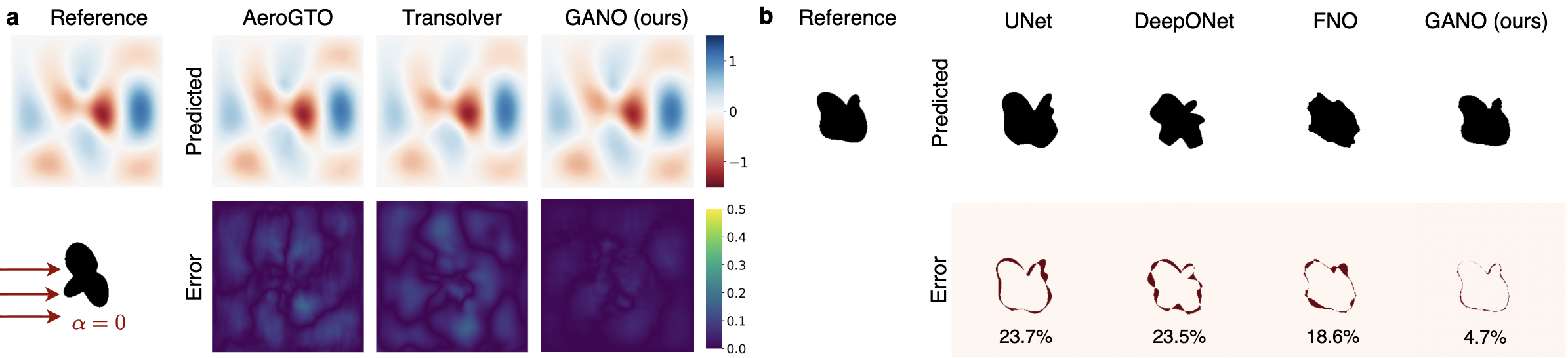}
\caption{\textbf{2D Helmholtz: forward prediction and shape inversion.} \textbf{a}. Comparison of the real-part. \textbf{b}. Shape inversion from sensor array.}
\label{fig:7}
\end{figure*}
\begin{figure}[t!]
    \centering
    \includegraphics[width=\linewidth,clip]{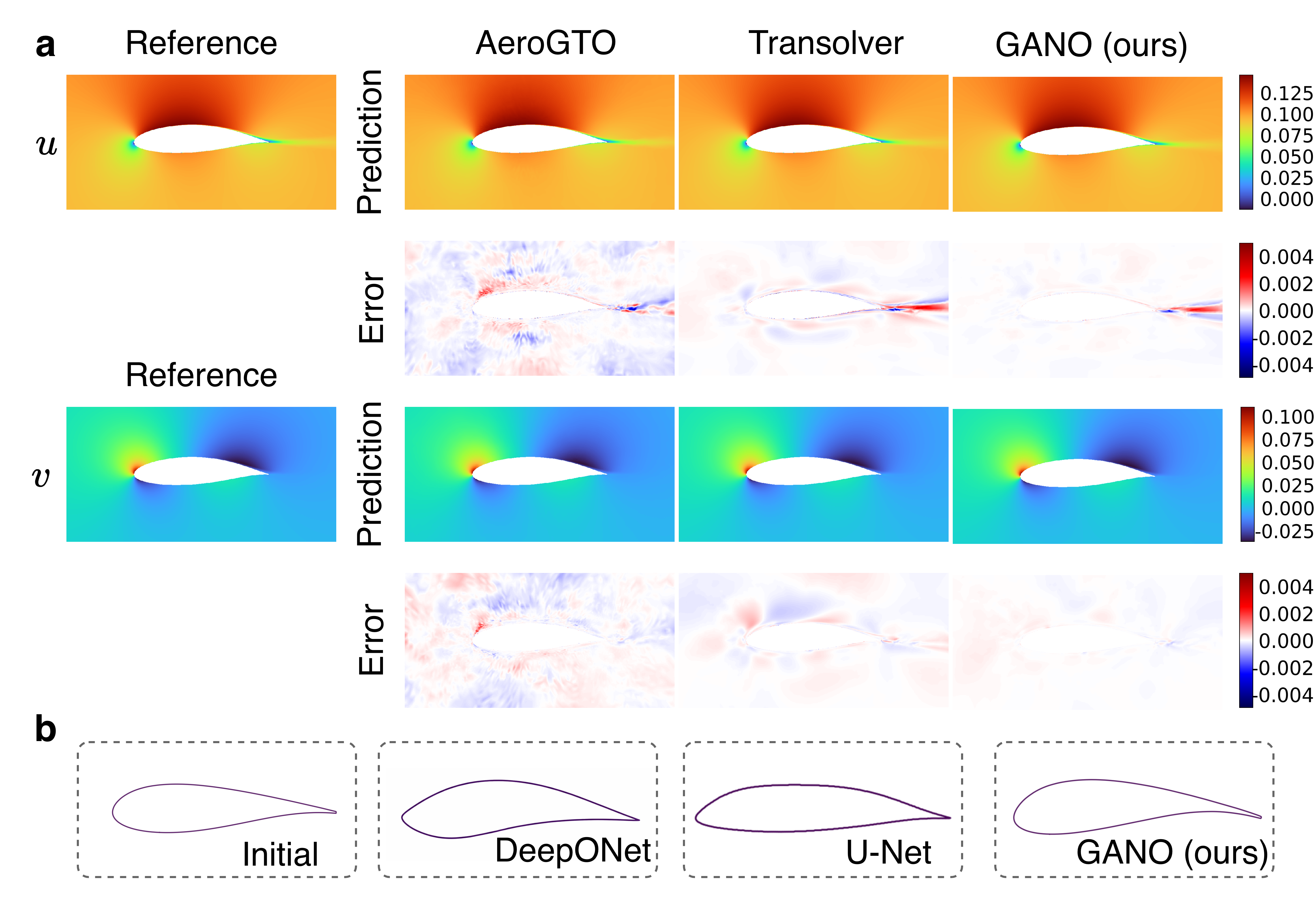}
\caption{\textbf{2D Airfoil: forward prediction and shape optimization.} \textbf{a}. Comparison of predicted flow fields. \textbf{b}. Optimized airfoil shapes under soft drag constraint ($C_D<0.02$). The shown airfoils are the best results from 100 iterations of each method.}
\label{fig:airfoil_vis}
\vspace{-6pt}
\end{figure}
\section{Experiments}

\subsection{Experimental Setup}

\paragraph{Benchmarks.} We evaluate \textsc{\textbf{GANO}} on three benchmarks that cover \emph{forward prediction} and \emph{geometry inversion/optimization} across 2D/3D geometries (Fig.~\ref{fig:6}).
First, we consider a \emph{2D Helmholtz inverse scattering} problem (Fig.~\ref{fig:6}\textbf{a}). The \emph{forward task} is to predict the scattered field given the geometry, while the \emph{inverse task} is to reconstruct the obstacle shape from boundary measurements. Second, we study \emph{2D airfoil aerodynamics} on irregular meshes using AirFoil\_9k~\citep{OEDI_Dataset_5889} (Fig.~\ref{fig:6}\textbf{b}) with the \emph{forward task} to predict velocity and pressure fields, and the \emph{optimization task} to update the airfoil geometry to maximize lift with a soft drag constraint. Third, we conduct large-scale \emph{3D vehicle} experiments on DrivAerNet++~\citep{elrefaie2024drivaernet++} (Fig.~\ref{fig:6}\textbf{c}). The \emph{forward task} predicts surface pressure, and the \emph{optimization task} minimizes drag. More details can be seen in Appendix~\ref{app:exp}. {\textbf{The code will be released at \url{https://github.com/intell-sci-comput/GANO}.}}

\paragraph{Baselines.} We compare \textsc{\textbf{GANO}} against representative neural surrogates and differentiable geometry-optimization pipelines. For forward prediction, we include Transformer-based surrogates \textbf{\textsc{Transolver}}~\citep{wu2024Transolver} and \textbf{\textsc{Transolver++}}~\citep{luo2025transolver++}, the generic point-set encoder \textbf{\textsc{PointNet++}}~\citep{qi2017pointnet++}, and \textbf{\textsc{AeroGTO}}~\citep{liu2025aerogto} as a graph-transformer operator. For 2D inversion and optimization, we further compare to three surrogate-based optimization baselines built on \textbf{\textsc{DeepONet}}~\citep{shukla2024deep}, \textbf{U-Net}~\citep{rehmann2025surrogate}, and \textbf{\textsc{FNO}}~\citep{li2021fourier}, which predict full fields from geometric inputs and optimize objectives via back-propagation. For further comparison, we also conduct geometry code injection to \textbf{\textsc{DeepONet}}, \textbf{U-Net}, and add \textbf{\textsc{CORAL}} as representative INR method  on 2D Helmholtz task.  Details could be seen in Appendix~\ref{fig:inr}. For 3D optimization, we compare our method with \textsc{\textbf{PhysGen}} including pressure prediction and optimization results.

\begin{table}[t!]
\centering
\caption{\textbf{Comparison of relative errors for forward analysis.} Best results are in \textbf{bold} and second-best results are \underline{underlined}.}
\label{tab:rel_l1_l2_three_datasets}
\resizebox{\linewidth}{!}{%
\begin{tabular}{l|cc|cc|cc}
\toprule
\multirow{2}{*}[-0.25em]{\textbf{Model}} &
\multicolumn{2}{c|}{\textbf{2D Helmholtz}} &
\multicolumn{2}{c|}{\textbf{2D AirFoil}} &
\multicolumn{2}{c}{\textbf{3D vehicle}} \\
\cmidrule(lr){2-3}\cmidrule(lr){4-5}\cmidrule(lr){6-7}
& \textbf{Rel. L1}  & \textbf{Rel. L2}
& \textbf{Rel. L1}& \textbf{Rel. L2}
& \textbf{Rel. L1} & \textbf{Rel. L2} \\
\midrule
% ===== Group: Transolver family =====
\textsc{Transolver}        & 0.0242 & 0.0267 & 0.0027 & 0.0060 & 0.1708 & 0.1841 \\
\textsc{Transolver++}      & 0.0317 & 0.0351 & 0.0078 & 0.0154 & 0.1799 & 0.1920 \\
\midrule

% ===== Group: AeroGTO / PointNet++ =====
\textsc{AeroGTO}         & \underline{0.0212} & \underline{0.0234} & \underline{0.0021} & \underline{0.0057} & \underline{0.1678} & \underline{0.1790} \\
\textsc{PointNet++}        & 0.0427 & 0.0462 & 0.0305 & 0.0409 & 0.3100 & 0.4004 \\
\midrule

% ===== Group: FNO / U-Net / DeepONet =====
\textsc{FNO}               & 0.0317 & 0.0299 & /      & /      & /      & /      \\
\textsc{U-Net}             & 0.1884 & 0.1929 & 0.0150 & 0.0137 & /      & /      \\
\textsc{DeepONet}          & 0.1749 & 0.1656 & 0.0198 & 0.0340 & /      & /      \\
\midrule\midrule

% ===== Ours (best in all columns) =====
\textbf{\textsc{GANO} (ours)} & \textbf{0.0171} & \textbf{0.0170}
                    & \textbf{0.0008} & \textbf{0.0022}
                    & \textbf{0.1655} & \textbf{0.1782} \\
\bottomrule
\end{tabular}%
}
\vspace{-6pt}
\end{table}

\begin{figure*}[!t]
    \centering
    \includegraphics[width=0.95\linewidth]{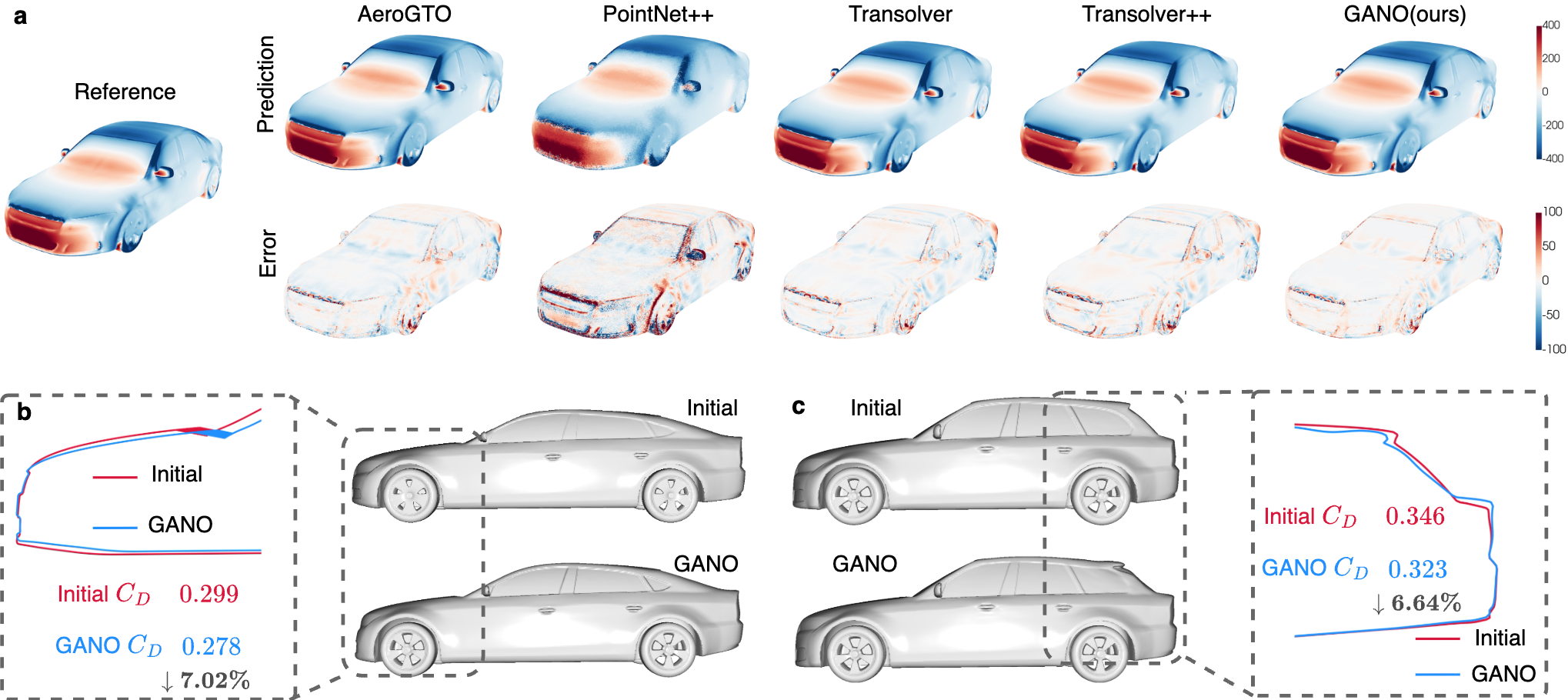}
    \caption{\textbf{3D Vehicle: surface pressure prediction and optimization on DrivAerNet++.} \textbf{a}. Comparison of surface pressure. \textbf{b-c}. Two representative optimization cases. For each case, we show the initial shape and the optimized result from \textsc{GANO}.}
    \label{fig:car}
\end{figure*}

\begin{figure*}[!t]
    \centering
    \includegraphics[width=\linewidth]{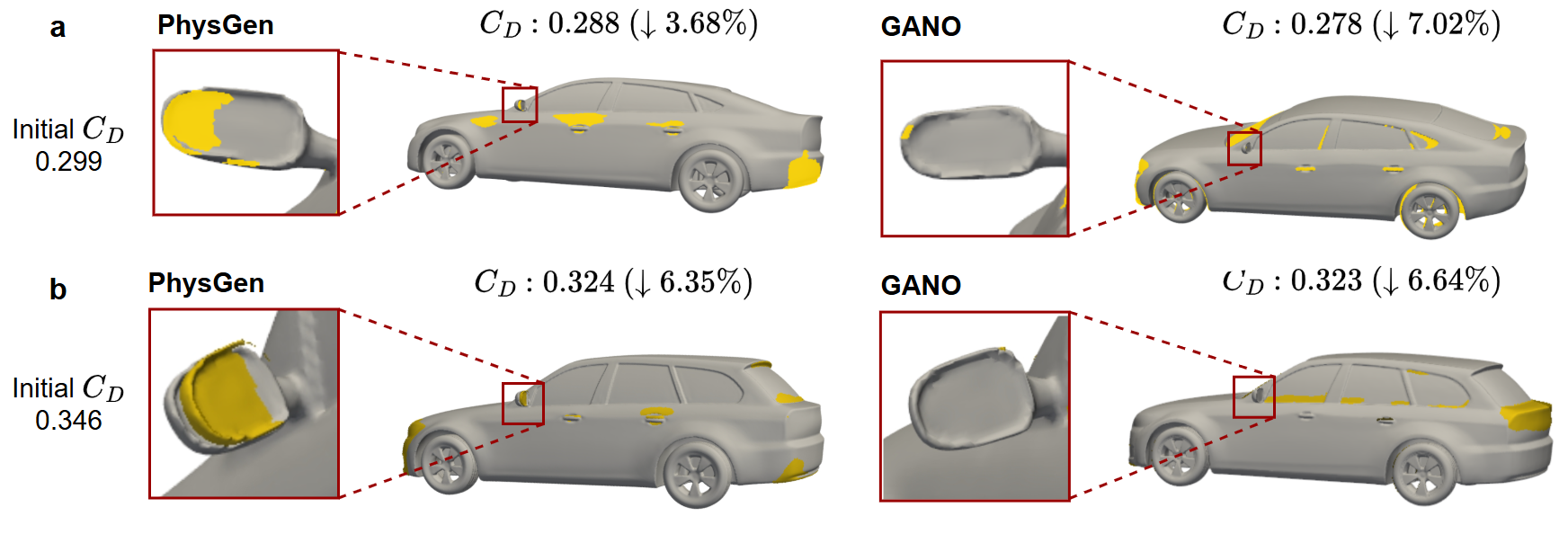}
    \caption{\textbf{Comparison of GANO and PhysGen on vehicle optimization.} Gray denotes the input vehicle, and yellow highlights regions with large geometric changes after optimization. \textbf{a.} GANO achieves a better result than PhysGen; \textbf{b.} The two are comparable. However, PhysGen often reduces drag by shrinking or sweeping back the side mirrors, while GANO largely preserves them due to null-space projection while still achieving meaningful drag reduction. The enlarged views highlight this difference.}
    \label{fig:vop}
\end{figure*}

\begin{figure*}[t]
\centering    
\includegraphics[width=0.95\linewidth]{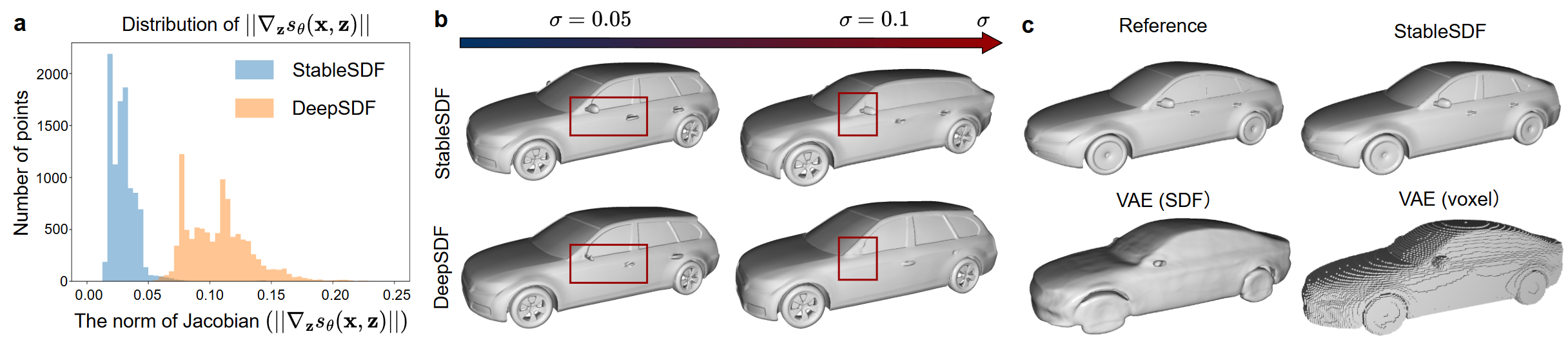}
\caption{\textbf{Analysis of \textsc{StableSDF}.} \textbf{a}. The distribution of latent Jacobian norms $\|\nabla_{\mathbf z} s_\theta(\mathbf x,\mathbf z)\|$ for \textsc{DeepSDF} and \textsc{StableSDF}. \textbf{b}. Decoded shapes under increasing Gaussian perturbations around a test latent code. \textbf{c}. Reconstruction comparison on a test vehicle.}
\label{fig:deep-vs-stable}
\end{figure*}

\subsection{2D Helmholtz Equation}
\label{sec:exp_helmholtz}

We benchmark \emph{forward scattering prediction} on the 2D Helmholtz benchmark. \textbf{\textsc{GANO}} achieves the lowest errors among all baselines, outperforming the strongest model \textsc{AeroGTO} and \textsc{Transolver}/\textsc{Transolver++} (Table~\ref{tab:rel_l1_l2_three_datasets}). Fig.~\ref{fig:7}\textbf{a} shows qualitative comparisons, where \textsc{\textbf{GANO}} yields noticeably reduced error maps. In the \emph{inverse problem}, the objective is to recover the unknown obstacle from sparse observations on a circle (Fig.~\ref{fig:6}\textbf{a}).  Compared with differentiable inversion baselines (Fig.~\ref{fig:7}\textbf{b}), \textsc{\textbf{GANO}} produces shapes with finer-scale details and exhibits substantially smaller reconstruction errors. We fix the sensor array size at $100$ in this case, and ablations with different sensor numbers can be seen in Appendix~\ref{app:sensors}.

\subsection{2D Airfoil Aerodynamics}
\label{sec:exp_airfoil}

\begin{table}[t!]
\centering
\caption{\textbf{Optimization results on 2D airfoil.} The soft drag constraint is $C_D<0.02$.}
\resizebox{\linewidth}{!}{%
\begin{tabular}{l | c c c c }\toprule
&  \textbf{Initial}& \textbf{\textsc{DeepONet}} & \textbf{\textsc{U-Net}} & \textbf{\textsc{GANO} (ours)}   \\
\midrule
$C_L$ ($\uparrow$) & 0.75  & 0.93  & 1.20  &  1.42\\
$C_D\ (<0.02)$ & 0.014 & 0.021 & 0.016 &  0.017  \\
${C_L}/{C_D}$ ($\uparrow$)
& 53.6 (+0.0\%)
& 44.3 (-17.3\%)
& 75.0 (+40.0\%)
& 83.4 (+55.9\%)\\
\bottomrule
\end{tabular}}
\label{tab:ld_inline_improve}
\vspace{-6pt}
\end{table}

In this section, we assess \emph{forward aerodynamic prediction} on AirFoil\_9k. \textbf{\textsc{GANO}} achieves the best accuracy of Rel.L2 $0.0022$ much lower than the second best \textsc{AeroGTO} of $0.0057$. We further conduct airfoil optimization by maximizing lift under a penalty-based soft drag constraint ($C_D<0.02$). As shown in Fig.~\ref{fig:airfoil_vis}, our model exhibits more aerodynamically reasonable shape characteristics while preserving the smoother leading edge of the original design, which is favorable in our setting. Quantitatively, Table~\ref{tab:ld_inline_improve} shows that \textbf{\textsc{GANO}} achieves the highest lift-to-drag ratio $C_L/C_D=83.4$, surpassing U-Net ($75.0$) and DeepONet (44.3). Optimization results are validated by COMSOL.

\subsection{3D Vehicle Aerodynamics on DrivAerNet++.}
\label{sec:exp_drivaer}

To evaluate under more complex 3D conditions, we use DrivAerNet++ as a challenging benchmark. Table~\ref{tab:rel_l1_l2_three_datasets} shows that \textbf{\textsc{GANO}} achieves the best performance. Fig.~\ref{fig:car}\textbf{a} provides a qualitative comparison. These results indicate that geometric injection remains effective at large scale and under the challenging, irregular dense point clouds.
We further validate stable latent-space vehicle optimization by reducing a drag proxy in line with~\cite{hao2025_3did,you2025physgen} while constraining the geometry to avoid deviating excessively from the initial design (optionally with local protection via null-space projection; Appendix~\ref{app:nullspace_projection}). 
We provide detailed visual comparisons for two vehicle categories (estateback and fastback) in Fig.~\ref{fig:car}\textbf{b-c}. The optimized shapes are plausible while maintaining industry-level geometric fidelity, preserving key design elements. High-fidelity OpenFOAM simulations verify that our method reduces the drag coefficient $C_D$ from $0.346$ to $0.323$ (\textbf{$6.64\% \downarrow$}) for the estateback and from $0.299$ to $0.278$ (\textbf{$7.02\% \downarrow$}) for the fastback, respectively.  We use \textsc{PhysGen} to optimize the same cars and Fig.~\ref{fig:vop} demonstrates that \textsc{GANO} maintains better drag reduction and practicability. Typically, PhysGen often reduces drag by shrinking or sweeping back the side mirrors, while GANO largely preserves them due to null-space projection while still achieving meaningful drag reduction. The enlarged views highlight this difference.

\section{Model Analysis}
\label{sec:analysis}

\subsection{Latent Space Properties}
We propose \textsc{StableSDF} to obtain a continuous and compact representation. As shown in Theorem~\ref{theo:latent_jacobian}, injecting Gaussian noise during training approximates a penalty on the decoder's latent Jacobian norm $\|\nabla_{\mathbf z}s_{\theta}(\mathbf{x}, \mathbf{z})\|$, thereby improving robustness to latent perturbations. Consistent with this analysis, Fig.~\ref{fig:deep-vs-stable}\textbf{a} shows that the empirical distribution of $\|\nabla_{\mathbf z} s_\theta(\mathbf x,\mathbf z)\|$ for \textsc{StableSDF} concentrates at substantially smaller values than that of standard \textsc{DeepSDF}. We further probe robustness by perturbing a latent code $\mathbf{z}_0$ from a test vehicle with Gaussian noise (Fig.~\ref{fig:deep-vs-stable}\textbf{b}). At a moderate noise level ($5\times10^{-2}$), \textsc{DeepSDF} exhibits severe deformation of fine details such as mirrors and door handles, whereas \textsc{StableSDF} largely preserves these structures and maintains a coherent surface. Furthermore, we conduct ablation studies on the level of noise in Appendix~\ref{app:noise}. And Appendix~\ref{fig:sdf} compares results after optimizing the same Fastback and Estateback car using StableSDF and
DeepSDF, respectively, showing StableSDF's advantage during optimization. 

\subsection{Necessity of Auto-Decoder Formulation}
To assess whether an auto-decoder architecture is necessary for high-quality geometric modeling, we train a variational autoencoder (VAE) baseline~\cite{DBLP:journals/corr/KingmaW13} that uses a tri-plane representation to parameterize the latent space. The VAE takes surface point clouds as input and is trained with two output targets: (i) SDF and (ii) occupancy (voxel). We find that SDF prediction with the VAE is difficult to optimize and often collapses into high-deformation reconstructions, whereas occupancy prediction is more stable. However, occupancy is a binary voxel signal and discards fine-grained geometric details. Visual comparison could be found in Fig.~\ref{fig:deep-vs-stable}\textbf{c}. These results support our choice of an auto-decoder SDF formulation as a more suitable approach for detailed geometry reconstruction.

\begin{figure}[!t]
    \centering
    \includegraphics[width=1\linewidth]{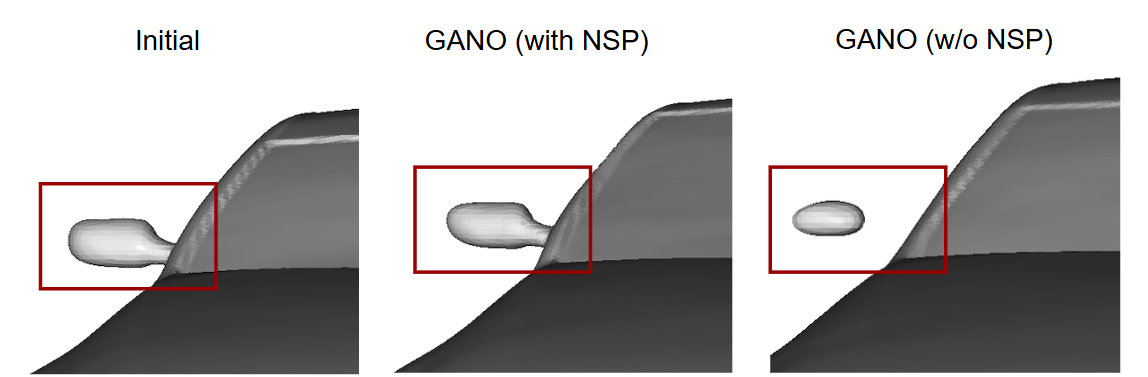}
\caption{\textbf{Null-space projection (NSP) preserves the mirror during optimization.} Mirror close-up for the initial design and \textsc{GANO} optimized with/without NSP.}
    \label{fig:nsp}
    \vspace{-8pt}
\end{figure}

\subsection{The Effects of Null-space Projection (NSP)}
In industrial optimization, preserving specific sub-components is often critical, motivating the design of our NSP technique in \textsc{GANO}. To validate the effectiveness of NSP, we optimize the same vehicle under two settings:
(i) \textbf{with NSP}, where we sample constraint points on the mirrors and project each update of $\mathbf{z}$ into the null space of the corresponding constraint Jacobian;
(ii) \textbf{without NSP}, where we apply the same updates without geometric constraints.
Fig.\ref{fig:nsp} shows that after optimization, the baseline (w/o NSP) exhibits severe mirror
deformation and detachment from the car body, while the projected variant preserves the mirror geometry. These results demonstrate that NSP effectively protects local geometric details during optimization.

\begin{table}[t]
\centering
\caption{\label{tab:error_vs_points}Error metrics under different numbers of points.}
% \resizebox{\textwidth}{!}{%
\resizebox{\linewidth}{!}{%
\begin{tabular}{l|*{7}{c}}
\toprule
\multirow{2}{*}[-0.25em]{\textbf{Model}} & \multicolumn{7}{c}{\textbf{Number of points} ($\times10^3$)} \\
\cmidrule(lr){2-8}
 &  1 & 5 & 10 & 50 & 100 & 200 & 500 \\
\midrule
\textsc{Transolver} &27.5  &19.3  & 18.6 & 18.4 & 17.6 &17.6  &  17.6 \\
\textsc{AeroGTO}   &64.0 &38.7 &26.6  &17.9 &18.1 &20.3 &26.9 \\
\textsc{GANO} (ours)       &\textbf{20.4}  &\textbf{18.1}  &\textbf{17.9}  &\textbf{17.8}  & \textbf{17.2} & \textbf{17.2}  & \textbf{ 17.2} \\
\bottomrule
\end{tabular}}
\vspace{-8pt}
\end{table}

\subsection{Robustness to the Number of Input Points}
We further evaluate robustness under varying point-cloud densities. All models are trained on $5\times10^4$ points and tested on point clouds ranging from sparse ($10^3$ points) to very dense ($5\times10^5$ points). \textsc{GI-Transolver} remains stable across this range, benefiting from explicit geometric injection, while \textsc{Transolver} degrades notably under sparse inputs. \textsc{AeroGTO} performs poorly in both sparse and dense regimes, indicating weaker invariance to query density. Since high-fidelity CFD typically requires dense meshes, our results are particularly encouraging: a model trained with relatively modest resolution can generalize to substantially denser point sets, thereby reducing data and computational costs during training.

\subsection{Additional Experiments}

Appendix~\ref{app:more_results} details ablation studies for \textsc{GI-Transolver} and \textsc{StableSDF}, and the analysis of remeshing-free projection as well as computational cost.

\section{Conclusion}
\label{sec:conclusion}

We presented \textbf{\textsc{GANO}}, an end-to-end differentiable framework that unifies geometry representation, field-level prediction, and automated shape optimization/inversion in a single latent-space loop. By combining a \textsc{StableSDF}-style auto-decoder and a geometry-informed surrogate that provides a reliable gradient pathway, \textsc{GANO} enables efficient, remeshing-free geometry updates with part-wise control via null-space projection. \textsc{GANO} achieves state-of-the-art forward accuracy and delivers stable, controllable shape updates for both optimization and inversion. A current limitation is that \textsc{GANO} focuses on steady-state PDE settings and does not apply to spatiotemporal dynamics. A promising future work is extending \textsc{GANO} to \emph{time-dependent} PDE systems and \emph{dynamic} shape optimization.

\section*{Acknowledgment}
The work is supported by the National Natural Science Foundation of China (No. 62506367 and No. 62276269) and the Beijing Natural Science Foundation (No. F261002). R.Z. would like to acknowledge the supported from the China Postdoctoral Science Foundation under Grant Number 2025M771582 and the Postdoctoral Fellowship Program of CPSF under Grant Number GZB20250408.

\section*{Impact Statement}
This paper aims to advance machine learning methods for differentiable geometry modeling and physics-guided optimization.
Potential societal impacts are similar to those of related work in surrogate modeling and design optimization.
We do not anticipate immediate negative impacts that warrant specific discussion beyond standard considerations of responsible use.

\bibliography{ref}
\bibliographystyle{icml2026}

\onecolumn
\appendix

\newcommand{\R}{\mathbb{R}}
 \newcommand{\Tr}{\mathrm{Tr}}
\newcommand{\op}{\mathrm{op}}
 \newcommand{\opnorm}[1]{\left\|#1\right\|_{\op}}
\newcommand{\norm}[1]{\left\|#1\right\|_2}
 \newcommand{\dist}{\mathrm{dist}}

\clearpage 
\section{Theory Results}
\label{app:theory}
We provide detailed derivations and proofs for the theory presented in the main text here.

\subsection{Small-noise limit: latent perturbation training as latent-sensitivity regularization}
\label{app:latent_jacobian}

\paragraph{Overview.}
This section analyzes the denoising mechanism used in \textsc{StableSDF}, where isotropic Gaussian noise is injected into the latent code
and an $\ell_1$ reconstruction loss is minimized (Eq.~\eqref{eq:stablesdf_loss}).
Although the pointwise $\ell_1$ loss is non-smooth, the Gaussian expectation yields a \emph{Gaussian-smoothed} objective in latent space.
We show that, in the small-noise regime, this perturbed objective can be well-approximated by a first-order linearization of the decoder output,
with an explicit $\mathcal{O}(\sigma^2)$ remainder controlled by the local curvature.
As a consequence, when reconstruction is accurate, latent perturbation training induces an implicit penalty proportional to
$\|\nabla_{\mathbf{z}} s_\theta(\mathbf{x},\mathbf{z})\|_2$, discouraging excessive decoder sensitivity to latent perturbations.

\paragraph{Setup.}
Fix a query input $\mathbf{x}\in\mathbb{R}^3$ and a target scalar $s\in\mathbb{R}$.
Define the scalar decoder output
\[
f(\mathbf{z}) \triangleq s_\theta(\mathbf{x},\mathbf{z}),\qquad \mathbf{z}\in\mathbb{R}^d,
\]
and the $\ell_1$ reconstruction loss
\[
\ell(\mathbf{z}) \triangleq \big| f(\mathbf{z})-s \big|.
\]
Latent perturbation training injects isotropic Gaussian noise:
\[
\tilde{\ell}_\sigma(\mathbf{z}) \triangleq
\mathbb{E}_{\boldsymbol{\varepsilon}\sim\mathcal{N}(\mathbf{0},\sigma^2\mathbf{I}_d)}
\bigl[\ell(\mathbf{z}+\boldsymbol{\varepsilon})\bigr]
=
\mathbb{E}\Big[\big|f(\mathbf{z}+\boldsymbol{\varepsilon})-s\big|\Big].
\]

\paragraph{Main result.}
\begin{theorem}[\textbf{Latent perturbation under an $\ell_1$ loss induces a latent-Jacobian-norm regularization}]
\label{theo:latent_jacobian}
Assume $f$ is twice continuously differentiable and $ \forall \mathbf{z}$ and $\xi$ its Hessian is uniformly bounded there:
\[
\big\|\nabla_{\mathbf{z}}^2 f(\boldsymbol{\xi})\big\|_{\mathrm{op}} \le M < \infty.
\]
where \[
\|A\|_{\mathrm{op}}
\;\triangleq\;
\sup_{\|x\|_2=1}\|Ax\|_2
\;=\;
\sup_{x\neq 0}\frac{\|Ax\|_2}{\|x\|_2}.
\]
Let $\boldsymbol{\varepsilon}\sim\mathcal{N}(\mathbf{0},\sigma^2\mathbf{I}_d)$ and denote
$r \triangleq f(\mathbf{z})-s$ and $g \triangleq \nabla_{\mathbf{z}} f(\mathbf{z})$.
Then the perturbed $\ell_1$ loss admits the approximation bound
\begin{equation}
\label{eq:A_l1_approx_bound}
\Big|
\tilde{\ell}_\sigma(\mathbf{z})
-
\mathbb{E}\big[\,|r + g^\top \boldsymbol{\varepsilon}|\,\big]
\Big|
\;\le\;
\frac{M}{2}\,\mathbb{E}\big[\|\boldsymbol{\varepsilon}\|_2^2\big]
=
\frac{M}{2}\,\sigma^2 d.
\end{equation}
Moreover, since $g^\top \boldsymbol{\varepsilon}\sim\mathcal{N}(0,\sigma^2\|g\|_2^2)$, we have
\begin{equation}
\label{eq:A_gaussian_abs_bounds}
\Big|
\mathbb{E}\big[\,|r + g^\top \boldsymbol{\varepsilon}|\,\big]
-
\mathbb{E}\big[\,|g^\top \boldsymbol{\varepsilon}|\,\big]
\Big|
\;\le\; |r|,
\qquad
\mathbb{E}\big[\,|g^\top \boldsymbol{\varepsilon}|\,\big]
=
\sigma\sqrt{\frac{2}{\pi}}\;\|g\|_2.
\end{equation}
\end{theorem}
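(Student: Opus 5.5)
The plan is a second-order Taylor expansion of $f$ around $\mathbf{z}$, combined with the fact that $|\cdot|$ is $1$-Lipschitz. For each realization of $\boldsymbol{\varepsilon}$, Taylor's theorem with Lagrange (or integral) remainder gives
\[
f(\mathbf{z}+\boldsymbol{\varepsilon}) = f(\mathbf{z}) + g^\top\boldsymbol{\varepsilon} + \tfrac12\,\boldsymbol{\varepsilon}^\top \nabla_{\mathbf{z}}^2 f(\boldsymbol{\xi})\,\boldsymbol{\varepsilon}
\]
for some $\boldsymbol{\xi}$ on the segment $[\mathbf{z},\mathbf{z}+\boldsymbol{\varepsilon}]$. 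Since $f\in C^2$ everywhere, this is valid. The uniform Hessian bound then gives $|f(\mathbf{z}+\boldsymbol{\varepsilon}) - (f(\mathbf{z})+g^\top\boldsymbol{\varepsilon})|\le \tfrac{M}{2}\|\boldsymbol{\varepsilon}\|_2^2$, because $|\mathbf{v}^\top A\mathbf{v}|\le \|A\|_{\mathrm{op}}\|\mathbf{v}\|_2^2$.

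Next, subtract $s$ inside and apply the reverse triangle inequality $\big||a|-|b|\big|\le|a-b|$ with $a=f(\mathbf{z}+\boldsymbol{\varepsilon})-s$ and $b=r+g^\top\boldsymbol{\varepsilon}$. This bounds the pointwise discrepancy by $\tfrac{M}{2}\|\boldsymbol{\varepsilon}\|_2^2$. Taking expectations and using Jensen's inequality ($|\mathbb{E}X|\le\mathbb{E}|X|$) yields the first bound. Integrability needs a short remark: $|f(\mathbf{z}+\boldsymbol{\varepsilon})|$ grows at most quadratically by the same Taylor bound, so all expectations are finite under the Gaussian. Finally, $\mathbb{E}\|\boldsymbol{\varepsilon}\|_2^2=\sum_{j=1}^d \mathbb{E}\varepsilon_j^2=\sigma^2 d$.

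For the second display, apply the same Lipschitz property of $|\cdot|$ pointwise: $\big||r+g^\top\boldsymbol{\varepsilon}|-|g^\top\boldsymbol{\varepsilon}|\big|\le |r|$. Taking expectations, again via Jensen, gives the first inequality. For the identity, note that $g^\top\boldsymbol{\varepsilon}$ is a linear functional of an isotropic Gaussian, hence distributed as $\mathcal{N}(0,\sigma^2\|g\|_2^2)$. If $g=0$ the identity is trivial. Otherwise write $g^\top\boldsymbol{\varepsilon}=\sigma\|g\|_2 Z$ with $Z\sim\mathcal{N}(0,1)$, and compute
\[
\mathbb{E}|Z| = 2\int_0^\infty \frac{t}{\sqrt{2\pi}}e^{-t^2/2}\,dt = \sqrt{2/\pi}.
\]

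No step is a genuine obstacle. The only point needing care is justifying the Taylor remainder with a uniform constant, which the global Hessian bound supplies directly, along with the integrability remark above. Combining the two displays with the triangle inequality then gives the main-text bound Eq.~\eqref{eq:main_l1_jacobian_bound}, with an explicit constant in place of $\mathcal{O}(\sigma^2)$ and with $|r|$ appearing additively.
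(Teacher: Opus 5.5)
Your proposal is correct and follows essentially the same route as the paper: a second-order Taylor expansion with Lagrange remainder controlled by the uniform Hessian bound, the $1$-Lipschitz property of $|\cdot|$ applied pointwise and then averaged, and, for the second display, the triangle inequality together with the half-normal mean identity $\mathbb{E}|\mathcal{N}(0,\tau^2)|=\tau\sqrt{2/\pi}$. Your extra remarks on integrability under the Gaussian and on using $|\mathbb{E}X|\le\mathbb{E}|X|$ spell out details the paper's proof leaves implicit.
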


\begin{proof}
By Taylor's theorem applied to $f$ at $\mathbf{z}$, for each $\boldsymbol{\varepsilon}$ there exists $t\in(0,1)$ such that
\[
f(\mathbf{z}+\boldsymbol{\varepsilon})
=
f(\mathbf{z})
+
g^\top \boldsymbol{\varepsilon}
+
\frac{1}{2}\boldsymbol{\varepsilon}^\top
\nabla_{\mathbf{z}}^2 f(\mathbf{z}+t\boldsymbol{\varepsilon})
\boldsymbol{\varepsilon}.
\]
By the Hessian bound, we have
\[
\Big|
f(\mathbf{z}+\boldsymbol{\varepsilon}) - f(\mathbf{z}) - g^\top \boldsymbol{\varepsilon}
\Big|
\le
\frac{1}{2}\big\|\nabla_{\mathbf{z}}^2 f(\mathbf{z}+t\boldsymbol{\varepsilon})\big\|_{\mathrm{op}}\;\|\boldsymbol{\varepsilon}\|_2^2
\le
\frac{M}{2}\|\boldsymbol{\varepsilon}\|_2^2.
\]
Let $a = f(\mathbf{z}+\boldsymbol{\varepsilon})-s$ and $b = r + g^\top\boldsymbol{\varepsilon}$.
Since the absolute value is 1-Lipschitz, $||a|-|b||\le |a-b|$, hence
\[
\Big|
|f(\mathbf{z}+\boldsymbol{\varepsilon})-s| - |r + g^\top \boldsymbol{\varepsilon}|
\Big|
\le
\frac{M}{2}\|\boldsymbol{\varepsilon}\|_2^2.
\]
Taking expectation over $\boldsymbol{\varepsilon}$ yields Eq.~\eqref{eq:A_l1_approx_bound}.
Finally, Eq.~\eqref{eq:A_gaussian_abs_bounds} follows from the triangle inequality
$||r+t|-|t||\le |r|$ and the identity
$\mathbb{E}|\,\mathcal{N}(0,\tau^2)\,|=\tau\sqrt{2/\pi}$ with $\tau=\sigma\|g\|_2$.
\end{proof}

\begin{corollary}[\textbf{Near-optimal reconstruction: dominant latent-Jacobian-norm penalty}]
\label{cor:near_opt}
Under the assumptions of Theorem~\ref{theo:latent_jacobian}, if $|f(\mathbf{z})-s|\le \eta$, then
\begin{equation}
\label{eq:A_l1_near_opt}
\Big|
\tilde{\ell}_\sigma(\mathbf{z})
-
\sigma\sqrt{\frac{2}{\pi}}\;\big\|\nabla_{\mathbf{z}} f(\mathbf{z})\big\|_2
\Big|
\;\le\;
\eta + \frac{M}{2}\sigma^2 d.
\end{equation}
Thus, for well-reconstructed samples (small $\eta$) and small noise $\sigma$, latent perturbation training effectively penalizes
$\|\nabla_{\mathbf{z}} s_\theta(\mathbf{x},\mathbf{z})\|_2$, discouraging excessive decoder sensitivity to latent changes.
\end{corollary}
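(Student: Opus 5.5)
The corollary follows by chaining the two estimates of Theorem~\ref{theo:latent_jacobian} with the triangle inequality. Write $r=f(\mathbf{z})-s$ and $g=\nabla_{\mathbf{z}} f(\mathbf{z})$ as in the theorem. We insert the intermediate quantities $\mathbb{E}\big[|r+g^\top\boldsymbol{\varepsilon}|\big]$ and $\mathbb{E}\big[|g^\top\boldsymbol{\varepsilon}|\big]$:
\[
\Big|\tilde{\ell}_\sigma(\mathbf{z})-\sigma\sqrt{\tfrac{2}{\pi}}\|g\|_2\Big|
\le
\Big|\tilde{\ell}_\sigma(\mathbf{z})-\mathbb{E}\big[|r+g^\top\boldsymbol{\varepsilon}|\big]\Big|
+\Big|\mathbb{E}\big[|r+g^\top\boldsymbol{\varepsilon}|\big]-\mathbb{E}\big[|g^\top\boldsymbol{\varepsilon}|\big]\Big|
+\Big|\mathbb{E}\big[|g^\top\boldsymbol{\varepsilon}|\big]-\sigma\sqrt{\tfrac{2}{\pi}}\|g\|_2\Big|.
\]

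The three terms are bounded as follows.
\begin{enumerate}
\item By Eq.~\eqref{eq:A_l1_approx_bound}, the first term is at most $\frac{M}{2}\sigma^2 d$. This uses only the uniform Hessian bound and $\mathbb{E}\|\boldsymbol{\varepsilon}\|_2^2=\sigma^2 d$.
\item By the first part of Eq.~\eqref{eq:A_gaussian_abs_bounds}, the second term is at most $|r|\le\eta$. This is the pointwise reverse triangle inequality $\big||r+t|-|t|\big|\le|r|$, integrated over $\boldsymbol{\varepsilon}$.
\item By the second part of Eq.~\eqref{eq:A_gaussian_abs_bounds}, the third term vanishes. Indeed $g^\top\boldsymbol{\varepsilon}\sim\mathcal{N}(0,\sigma^2\|g\|_2^2)$, whose absolute first moment is exactly $\sigma\|g\|_2\sqrt{2/\pi}$. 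If $g=0$, both sides are zero.
\end{enumerate}
Summing gives $\eta+\frac{M}{2}\sigma^2 d$, which is Eq.~\eqref{eq:A_l1_near_opt}.

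There is no real obstacle here. The only point to check is integrability: each expectation is finite because $f$ has at most quadratic growth under the Hessian bound, so Gaussian moments exist, and this justifies the pointwise-to-expectation steps. The interpretive sentence that follows is then immediate. When $\eta$ and $\sigma$ are small, the penalty $\sigma\sqrt{2/\pi}\,\|\nabla_{\mathbf{z}}s_\theta\|_2$ is first order in $\sigma$, while the error is $\eta+\mathcal{O}(\sigma^2)$. So, provided $\eta=o(\sigma)$, the Jacobian-norm term dominates the perturbed loss.
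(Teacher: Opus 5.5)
Your proposal is correct and is essentially the paper's proof: the paper also applies the triangle inequality through $\mathbb{E}|r+g^\top\boldsymbol{\varepsilon}|$, bounding the two pieces by Eq.~\eqref{eq:A_l1_approx_bound} and the first part of Eq.~\eqref{eq:A_gaussian_abs_bounds}, and it uses the exact Gaussian identity to replace $\mathbb{E}|g^\top\boldsymbol{\varepsilon}|$ by $\sigma\sqrt{2/\pi}\,\|g\|_2$ (your vanishing third term). Your integrability check, and your point that the Jacobian term genuinely dominates only when $\eta=o(\sigma)$, are sensible details the paper leaves implicit.
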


\begin{proof}
By Eq.~\eqref{eq:A_l1_approx_bound} and Eq.~\eqref{eq:A_gaussian_abs_bounds},
\[
\Big|
\tilde{\ell}_\sigma(\mathbf{z}) - \sigma\sqrt{\frac{2}{\pi}}\|g\|_2
\Big|
\le
\Big|
\tilde{\ell}_\sigma(\mathbf{z}) - \mathbb{E}|r+g^\top\boldsymbol{\varepsilon}|
\Big|
+
\Big|
\mathbb{E}|r+g^\top\boldsymbol{\varepsilon}| - \mathbb{E}|g^\top\boldsymbol{\varepsilon}|
\Big|
\le
\frac{M}{2}\sigma^2 d + |r|
\le
\frac{M}{2}\sigma^2 d + \eta.
\]
\end{proof}

\paragraph{Remark.}
If one replaces the $\ell_1$ loss by a smooth squared loss (or a smooth robust loss such as pseudo-Huber),
a classical second-order small-noise expansion yields a quadratic latent-Jacobian penalty of the form $\sigma^2\|\nabla_{\mathbf{z}} f(\mathbf{z})\|_2^2$.
Our main text focuses on the $\ell_1$ setting used in \textsc{StableSDF}.

% ============================================================
\subsection{Controlled implicit-surface displacement under latent updates}
\label{app:surface_lipschitz}

\paragraph{Overview.}
This section shows that under a regular level-set condition and a bounded latent-sensitivity assumption for the implicit field $s_\theta(\mathbf{x},\mathbf{z})$, a small latent update $\delta\mathbf{z}$ induces only a small displacement of the zero level set
$\Gamma(\mathbf{z})=\{\mathbf{x}: s_\theta(\mathbf{x},\mathbf{z})=0\}$ in geometry. Measured by the Hausdorff distance, the surface displacement admits a first-order Lipschitz upper bound, namely $d_H(\Gamma(\mathbf{z}),\Gamma(\mathbf{z}+\delta\mathbf{z})) \lesssim (L_{\mathbf{z}}/m)\|\delta\mathbf{z}\|_2$, together with a second-order remainder term.

\paragraph{Setup.}
Let $s_\theta:\mathbb{R}^3\times\mathbb{R}^d\to\mathbb{R}$ be a scalar implicit field (e.g., our \textsc{StableSDF} model).
Define the zero level set
\[
\Gamma(\mathbf{z})\triangleq\{\mathbf{x}\in\mathbb{R}^3:\ s_\theta(\mathbf{x},\mathbf{z})=0\}.
\]
The Hausdorff distance between sets $A,B\subset\mathbb{R}^3$ is
\[
d_H(A,B)
\triangleq
\max\Bigl\{
\sup_{\mathbf{a}\in A}\inf_{\mathbf{b}\in B}\|\mathbf{a}-\mathbf{b}\|_2,\;
\sup_{\mathbf{b}\in B}\inf_{\mathbf{a}\in A}\|\mathbf{a}-\mathbf{b}\|_2
\Bigr\}.
\]

\paragraph{Regularity and bounded sensitivity.}
Let $\mathcal{Z}\subset\mathbb{R}^d$ be a set of latent codes. For a set $S\subset\mathbb{R}^3$ and $\rho>0$, define the tubular neighborhood
$N_\rho(S)\triangleq\{\mathbf{x}:\dist(\mathbf{x},S)\le\rho\}$.

\begin{assumption}[Regular level set and bounded latent sensitivity]
\label{ass:regular_sensitivity}
Assume $s_\theta$ is $\mathcal{C}^2$ in $(\mathbf{x},\mathbf{z})$ on a region containing
$\bigcup_{\mathbf{z}\in\mathcal{Z}} N_\rho(\Gamma(\mathbf{z}))\times B_{\rho_{\mathbf{z}}}(\mathbf{z})$
for some $\rho,\rho_{\mathbf{z}}>0$.
There exist constants $m>0$ and $L_{\mathbf{z}}>0$ such that for all $\mathbf{z}\in\mathcal{Z}$:
\begin{enumerate}
\item For all $\mathbf{x}\in\Gamma(\mathbf{z})$, $\|\nabla_{\mathbf{x}} s_\theta(\mathbf{x},\mathbf{z})\|_2 \ge m$.
\item For all $\mathbf{x}\in N_\rho(\Gamma(\mathbf{z}))$, $\|\nabla_{\mathbf{z}} s_\theta(\mathbf{x},\mathbf{z})\|_2 \le L_{\mathbf{z}}$.
\end{enumerate}
Moreover, there exist $M_{\mathbf{x}\mathbf{z}},M_{\mathbf{x}\mathbf{x}}<\infty$ such that on the same region
\[
\|\nabla^2_{\mathbf{x}\mathbf{z}} s_\theta(\mathbf{x},\mathbf{z})\|_{\mathrm{op}} \le M_{\mathbf{x}\mathbf{z}},
\qquad
\|\nabla^2_{\mathbf{x}\mathbf{x}} s_\theta(\mathbf{x},\mathbf{z})\|_{\mathrm{op}} \le M_{\mathbf{x}\mathbf{x}}.
\]
In addition, assume each $\Gamma(\mathbf{z})$ is contained in a common compact set $K\subset\mathbb{R}^3$ (e.g., due to a fixed bounding box of interest), so that $d_H(\Gamma(\mathbf{z}),\Gamma(\mathbf{z}'))<\infty$ is well-defined. Finally, to ensure uniformity over $\mathbf{z}\in\mathcal{Z}$, assume $\sup_{\mathbf{z}\in\mathcal{Z}} L_{\mathbf{z}}<\infty$.
\end{assumption}

\paragraph{Main theorem.}
\begin{theorem}[Latent-to-surface Lipschitz displacement bound]
\label{thm:surface_lipschitz}
Under Assumption~\ref{ass:regular_sensitivity}, there exists $\delta_0>0$ such that for all $\mathbf{z}\in\mathcal{Z}$ and $\delta \mathbf{z}$ with $\|\delta \mathbf{z}\|_2\le \delta_0$,
\begin{equation}
\label{eq:B_hausdorff_bound}
d_H\bigl(\Gamma(\mathbf{z}),\Gamma(\mathbf{z}+\delta \mathbf{z})\bigr)
\le
\frac{L_{\mathbf{z}}}{m}\,\|\delta \mathbf{z}\|_2
+
\mathcal{O}\bigl(\|\delta \mathbf{z}\|_2^2\bigr),
\end{equation}
where the $\mathcal{O}(\|\delta \mathbf{z}\|_2^2)$ term is uniform over $\mathbf{z}\in\mathcal{Z}$.
\end{theorem}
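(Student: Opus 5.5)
The plan is to bound each of the two one-sided Hausdorff terms separately. For a point on one zero level set, we move it along the normal direction of the other field to land on the other level set, and show the distance travelled is at most $(L_{\mathbf z}/m)\|\delta\mathbf z\|_2+\mathcal O(\|\delta\mathbf z\|_2^2)$. Write $\mathbf z'=\mathbf z+\delta\mathbf z$.

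\emph{Step 1 (value change).} Fix $\mathbf x\in\Gamma(\mathbf z)$, so $s_\theta(\mathbf x,\mathbf z)=0$. By the mean value theorem in $\mathbf z$ and Assumption~\ref{ass:regular_sensitivity}(2), $|s_\theta(\mathbf x,\mathbf z')|\le L_{\mathbf z}\|\delta\mathbf z\|_2$, provided $\|\delta\mathbf z\|_2\le\rho_{\mathbf z}$.

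\emph{Step 2 (gradient persistence).} Using the mixed Hessian bound,
\[
\|\nabla_{\mathbf x}s_\theta(\mathbf x,\mathbf z')\|_2\ge m-M_{\mathbf x\mathbf z}\|\delta\mathbf z\|_2 .
\]
Set $\mathbf n=\nabla_{\mathbf x}s_\theta(\mathbf x,\mathbf z')/\|\nabla_{\mathbf x}s_\theta(\mathbf x,\mathbf z')\|_2$ and $\phi(t)=s_\theta(\mathbf x-t\,\mathrm{sgn}(s_\theta(\mathbf x,\mathbf z'))\mathbf n,\mathbf z')$ for $|t|\le\rho$. By the bound $M_{\mathbf x\mathbf x}$ on the spatial Hessian,
\[
\big|\phi(t)-\phi(0)+|\nabla_{\mathbf x}s_\theta(\mathbf x,\mathbf z')|\,t\,\mathrm{sgn}(\cdot)\big|\le \tfrac12 M_{\mathbf x\mathbf x}t^2 .
\]
So $\phi$ moves monotonically toward zero at rate at least $m'-M_{\mathbf x\mathbf x}t$, with $m'=m-M_{\mathbf x\mathbf z}\|\delta\mathbf z\|_2$. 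This is a one-dimensional statement, so the intermediate value theorem is enough.

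\emph{Step 3 (crossing time).} The intermediate value theorem gives a root $t^\ast$ with
\[
t^\ast\le \frac{L_{\mathbf z}\|\delta\mathbf z\|_2}{m'}+\mathcal O(\|\delta\mathbf z\|_2^2).
\]
One can obtain this by solving the quadratic inequality $|\phi(0)|\le m't-\tfrac12M_{\mathbf x\mathbf x}t^2$, or more crudely by working on $t\le m/(2M_{\mathbf x\mathbf x})$. Expanding $1/m'=1/m+\mathcal O(\|\delta\mathbf z\|_2)$ then gives $\inf_{\mathbf b\in\Gamma(\mathbf z')}\|\mathbf x-\mathbf b\|_2\le (L_{\mathbf z}/m)\|\delta\mathbf z\|_2+C\|\delta\mathbf z\|_2^2$. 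The constant $C$ depends only on $m,L_{\mathbf z},M_{\mathbf x\mathbf z},M_{\mathbf x\mathbf x}$, which makes it uniform over $\mathcal Z$ since $\sup L_{\mathbf z}<\infty$. Choose $\delta_0$ small enough that $t^\ast\le\rho$, $\|\delta\mathbf z\|_2\le\rho_{\mathbf z}$, and $m'\ge m/2$. Every point used then stays inside the tube, where the assumptions hold.

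\emph{Step 4 (reverse direction; the main obstacle).} For $\mathbf y\in\Gamma(\mathbf z')$ we need the analogous bound moving back to $\Gamma(\mathbf z)$. Here the assumptions are stated only on $N_\rho(\Gamma(\mathbf z))$ and at points of $\Gamma(\mathbf z)$, not on $\Gamma(\mathbf z')$. The difficulty is first to show that $\Gamma(\mathbf z')\subset N_\rho(\Gamma(\mathbf z))$, i.e.\ that no new components of the zero set appear far from $\Gamma(\mathbf z)$.

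The approach I would try first uses compactness of $K$. On $K\setminus N_\rho(\Gamma(\mathbf z))$ the continuous function $|s_\theta(\cdot,\mathbf z)|$ has a positive minimum $c_\rho$, and a uniform Lipschitz bound in $\mathbf z$ keeps it nonzero for small $\delta\mathbf z$. This may require slightly strengthening the assumption to a uniform bound on $\nabla_{\mathbf z}s_\theta$ over $K$, which I would state explicitly, or alternatively invoke $\mathbf z'\in\mathcal Z$ as well and shrink $\delta_0$ accordingly.

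Once $\mathbf y$ is known to lie in the tube, I would handle the gradient as follows. Rather than needing it at $\mathbf y$ directly, first take a nearby $\mathbf x_0\in\Gamma(\mathbf z)$ within distance $\rho$. Use the $M_{\mathbf x\mathbf x}$ bound to get $\|\nabla_{\mathbf x}s_\theta(\mathbf y,\mathbf z)\|_2\ge m-M_{\mathbf x\mathbf x}\,\mathrm{dist}(\mathbf y,\Gamma(\mathbf z))$. Then run Steps 1–3 with the roles of $\mathbf z$ and $\mathbf z'$ swapped.

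A preliminary crude estimate, $\mathrm{dist}(\mathbf y,\Gamma(\mathbf z))=\mathcal O(\|\delta\mathbf z\|_2)$, comes from this same argument with the smaller gradient lower bound. Feeding it back in gives the gradient lower bound $m-\mathcal O(\|\delta\mathbf z\|_2)$, and hence the sharp constant $L_{\mathbf z}/m$ up to an $\mathcal O(\|\delta\mathbf z\|_2^2)$ error. Taking the maximum of the two one-sided bounds gives~\eqref{eq:B_hausdorff_bound}.
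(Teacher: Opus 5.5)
Your forward directed bound is the paper's argument. You bound $|s_\theta(\mathbf x,\mathbf z+\delta\mathbf z)|\le L_{\mathbf z}\|\delta\mathbf z\|_2$ for $\mathbf x\in\Gamma(\mathbf z)$, keep the spatial gradient above $m-M_{\mathbf x\mathbf z}\|\delta\mathbf z\|_2$ via the mixed Hessian, and find a root on a normal line. The only difference is the root-finding tool. The paper walks along the \emph{old} normal at $(\mathbf x,\mathbf z)$, forms the Newton step $\alpha_0$, and cites a Newton--Kantorovich argument for $|\alpha^\star-\alpha_0|=\mathcal O(\alpha_0^2)$. You walk along the new normal and use the intermediate value theorem with the explicit inequality $|\phi(0)|\le m't-\tfrac12 M_{\mathbf x\mathbf x}t^2$, which is more elementary and makes the uniform second-order constant explicit. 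For the reverse direction the paper only says ``repeat with $\mathbf z$ and $\mathbf z+\delta\mathbf z$ swapped''. You are right that this is licensed only when $\mathbf z+\delta\mathbf z\in\mathcal Z$, because condition (1), bound (2), and the region carrying the Hessian bounds are all anchored at $\Gamma(\mathbf z)$ with $\mathbf z\in\mathcal Z$.

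Of your two repairs, only the second gives the theorem with a uniform $\delta_0$. The compactness route yields, for each fixed $\mathbf z$, a constant $c_\rho(\mathbf z)=\min\{|s_\theta(\mathbf y,\mathbf z)|:\mathbf y\in K,\ \mathrm{dist}(\mathbf y,\Gamma(\mathbf z))\ge\rho\}>0$. Nothing keeps $c_\rho(\mathbf z)$ away from $0$ over $\mathcal Z$, even with a bound on $\nabla_{\mathbf z}s_\theta$ over all of $K$.

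For a counterexample, work in $\mathbb R^2$ with a scalar latent and set $s_\theta(\mathbf y,z)=(1-\chi(\mathbf y))(\|\mathbf y\|_2-1)+\chi(\mathbf y)\,(\|\mathbf y-\mathbf y_0\|_2^2+z^2)$. Here $\mathbf y_0=(3,0)$, $\chi$ is a smooth cutoff supported in $B_{1/2}(\mathbf y_0)$ and equal to $1$ near $\mathbf y_0$, and $\mathcal Z=\{z_k=k^{-1/2}:k\ge1\}$. Every $\Gamma(z_k)$ is the unit circle, and all hypotheses hold with $\rho=1/2$ and $m=1$. Yet $\Gamma(0)$ contains $\mathbf y_0$, so $d_H(\Gamma(z_k),\Gamma(0))=2$ while $|\delta z|=k^{-1/2}\to0$.

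So you should take $\mathbf z+\delta\mathbf z\in\mathcal Z$ as a hypothesis; a smaller $\delta_0$ does not buy it. This is the natural setting for consecutive iterates. With it, the reverse direction is literally your Steps 1--3 with the roles exchanged, and the bootstrap is unnecessary.

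Two smaller points. If you keep the bootstrap for a fixed $\mathbf z$, run the compactness step with radius $\min\{\rho,m/(2M_{\mathbf x\mathbf x})\}$; otherwise $m-M_{\mathbf x\mathbf x}\rho$ may be nonpositive and the crude estimate would not start. Also, your Step 1, like the paper's, applies bound (2) at the intermediate latents $\mathbf z+t\delta\mathbf z$, which the assumption does not cover. A latent-Hessian bound $M_{\mathbf z\mathbf z}$ on the same region repairs this, since its contribution $\tfrac12 M_{\mathbf z\mathbf z}\|\delta\mathbf z\|_2^2$ is absorbed into the $\mathcal O(\|\delta\mathbf z\|_2^2)$ term.
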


\begin{proof}
Fix $\mathbf{z}\in\mathcal{Z}$ and $\delta \mathbf{z}$ small. Pick any $\mathbf{x}\in\Gamma(\mathbf{z})$ so $s_\theta(\mathbf{x},\mathbf{z})=0$ and define $r\triangleq s_\theta(\mathbf{x},\mathbf{z}+\delta \mathbf{z})$.
By the mean value theorem in $\mathbf{z}$ and Assumption~\ref{ass:regular_sensitivity}(2),
\begin{equation}
|r|
=
\bigl|\nabla_{\mathbf{z}} s_\theta(\mathbf{x},\mathbf{z}+t\delta \mathbf{z})^\top \delta \mathbf{z}\bigr|
\le
L_{\mathbf{z}}\|\delta \mathbf{z}\|_2
\quad\text{for some }t\in(0,1).
\label{eq:B_r_bound}
\end{equation}
Let the unit normal at $(\mathbf{x},\mathbf{z})$ be
\[
\mathbf{n} \triangleq \frac{\nabla_{\mathbf{x}} s_\theta(\mathbf{x},\mathbf{z})}{\|\nabla_{\mathbf{x}} s_\theta(\mathbf{x},\mathbf{z})\|_2}.
\]
Consider $\phi(\alpha)\triangleq s_\theta(\mathbf{x}-\alpha \mathbf{n}, \mathbf{z}+\delta \mathbf{z})$. Then $\phi(0)=r$ and $\phi'(0)=-\nabla_{\mathbf{x}} s_\theta(\mathbf{x},\mathbf{z}+\delta \mathbf{z})^\top \mathbf{n}$.
Using Assumption~\ref{ass:regular_sensitivity} and the mixed Hessian bound,
\[
\nabla_{\mathbf{x}} s_\theta(\mathbf{x},\mathbf{z}+\delta \mathbf{z})^\top \mathbf{n}
\ge
\|\nabla_{\mathbf{x}} s_\theta(\mathbf{x},\mathbf{z})\|_2 - M_{\mathbf{x}\mathbf{z}}\|\delta \mathbf{z}\|_2
\ge
m - M_{\mathbf{x}\mathbf{z}}\|\delta \mathbf{z}\|_2.
\]
For $\|\delta \mathbf{z}\|_2$ sufficiently small, we have $\nabla_{\mathbf{x}} s_\theta(\mathbf{x},\mathbf{z}+\delta \mathbf{z})^\top \mathbf{n} \ge m/2$.
Define the Newton-style step
\[
\alpha_0 \triangleq \frac{\phi(0)}{\nabla_{\mathbf{x}} s_\theta(\mathbf{x},\mathbf{z}+\delta \mathbf{z})^\top \mathbf{n}}
=
\frac{r}{\nabla_{\mathbf{x}} s_\theta(\mathbf{x},\mathbf{z}+\delta \mathbf{z})^\top \mathbf{n}}.
\]
Combining with Eq.~\ref{eq:B_r_bound} yields
\begin{equation}
|\alpha_0|
\le
\frac{L_{\mathbf{z}}}{m}\|\delta \mathbf{z}\|_2 + \mathcal{O}(\|\delta \mathbf{z}\|_2^2).
\label{eq:B_alpha0_bound}
\end{equation}

We now justify existence of a true root $\alpha^\star$ near $\alpha_0$. Since $s_\theta$ is $\mathcal{C}^2$ and $\|\nabla_{\mathbf{x}} s_\theta\|\ge m/2$ in a neighborhood for small $\|\delta \mathbf{z}\|_2$, the one-dimensional function $\phi$ is $\mathcal{C}^2$ with $\phi'(0)\le -m/2$ and $|\phi''(\alpha)|\le M_{\mathbf{x}\mathbf{x}}$ for $\alpha$ in a sufficiently small interval (by the spatial Hessian bound). A quantitative implicit function / Newton--Kantorovich argument then guarantees that there exists a unique $\alpha^\star$ in a neighborhood of $0$ such that $\phi(\alpha^\star)=0$, and moreover $|\alpha^\star-\alpha_0|=\mathcal{O}(\alpha_0^2)$ as $\alpha_0\to 0$.
Consequently,
\[
|\alpha^\star|
\le
|\alpha_0| + |\alpha^\star-\alpha_0|
\le
\frac{L_{\mathbf{z}}}{m}\|\delta \mathbf{z}\|_2 + \mathcal{O}(\|\delta \mathbf{z}\|_2^2).
\]
Let $\mathbf{x}'\triangleq \mathbf{x}-\alpha^\star \mathbf{n}\in\Gamma(\mathbf{z}+\delta \mathbf{z})$. Then $\|\mathbf{x}'-\mathbf{x}\|_2=|\alpha^\star|$ satisfies the same bound, implying
\[
\sup_{\mathbf{x}\in\Gamma(\mathbf{z})}\inf_{\mathbf{y}\in\Gamma(\mathbf{z}+\delta \mathbf{z})}\|\mathbf{x}-\mathbf{y}\|_2
\le
\frac{L_{\mathbf{z}}}{m}\|\delta \mathbf{z}\|_2 + \mathcal{O}(\|\delta \mathbf{z}\|_2^2).
\]
Repeating the argument with $(\mathbf{z}+\delta \mathbf{z})$ and $\mathbf{z}$ swapped gives the reverse directed bound. Taking the maximum of the two directed bounds yields Eq.~\ref{eq:B_hausdorff_bound}. Uniformity over $\mathbf{z}\in\mathcal{Z}$ follows from the uniform constants in Assumption~\ref{ass:regular_sensitivity} (including the compact containment ensuring finiteness of $d_H$).
\end{proof}

\begin{corollary}[Per-iteration geometric change under gradient descent]
\label{cor:gd_step}
Under Theorem~\ref{thm:surface_lipschitz}, for an update $\mathbf{z}_{t+1}=\mathbf{z}_t-\eta \mathbf{g}_t$ with $\eta\|\mathbf{g}_t\|_2\le\delta_0$,
\[
d_H\bigl(\Gamma(\mathbf{z}_t),\Gamma(\mathbf{z}_{t+1})\bigr)
\le
\frac{L_{\mathbf{z}}}{m}\,\eta\|\mathbf{g}_t\|_2
+
\mathcal{O}\left(\eta^2\|\mathbf{g}_t\|_2^2\right).
\]
\end{corollary}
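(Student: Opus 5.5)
This corollary is a direct specialization of Theorem~\ref{thm:surface_lipschitz}, so the plan is to instantiate that theorem with the gradient-descent step as the latent perturbation. Fix an iteration $t$ and assume $\mathbf{z}_t\in\mathcal{Z}$, which is implicit in applying the theorem along the trajectory. Set $\mathbf{z}=\mathbf{z}_t$ and $\delta\mathbf{z}\triangleq\mathbf{z}_{t+1}-\mathbf{z}_t=-\eta\mathbf{g}_t$. Then $\|\delta\mathbf{z}\|_2=\eta\|\mathbf{g}_t\|_2$, where $\eta>0$ is the step size. The hypothesis $\eta\|\mathbf{g}_t\|_2\le\delta_0$ is therefore exactly the smallness condition $\|\delta\mathbf{z}\|_2\le\delta_0$ under which Eq.~\eqref{eq:B_hausdorff_bound} holds.

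Substituting into Eq.~\eqref{eq:B_hausdorff_bound} gives
\[
d_H\bigl(\Gamma(\mathbf{z}_t),\Gamma(\mathbf{z}_{t+1})\bigr)\le \frac{L_{\mathbf{z}}}{m}\,\eta\|\mathbf{g}_t\|_2+\mathcal{O}\bigl(\eta^2\|\mathbf{g}_t\|_2^2\bigr).
\]
The only point needing care is the remainder. Theorem~\ref{thm:surface_lipschitz} states that its $\mathcal{O}(\|\delta\mathbf{z}\|_2^2)$ term is uniform over $\mathbf{z}\in\mathcal{Z}$. Concretely, there is a constant $C$ depending only on $m$, $L_{\mathbf{z}}$, $M_{\mathbf{x}\mathbf{z}}$ and $M_{\mathbf{x}\mathbf{x}}$ such that the remainder is at most $C\|\delta\mathbf{z}\|_2^2$ whenever $\|\delta\mathbf{z}\|_2\le\delta_0$. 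So the same $C$ works at every iterate $\mathbf{z}_t$ and for every gradient $\mathbf{g}_t$. This is what allows the $\mathcal{O}(\eta^2\|\mathbf{g}_t\|_2^2)$ term to be read as a bound independent of $t$, rather than a pointwise asymptotic at a single $\mathbf{z}$.

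The main (mild) obstacle is confirming that the iterate stays in the region where the assumptions hold, namely $\mathbf{z}_t\in\mathcal{Z}$. For the single step considered, only $\mathbf{z}_t\in\mathcal{Z}$ is needed, because the theorem itself covers perturbations of size at most $\delta_0$. If the bound is to be chained over many iterations, I would additionally assume each $\mathbf{z}_t\in\mathcal{Z}$; one could also cite the projected update $\mathbf{P}(\mathbf{z})\mathbf{g}$, whose norm is at most $\|\mathbf{g}\|_2$, to note that the same bound holds with null-space projection. No further computation is required beyond this substitution.
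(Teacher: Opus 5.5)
Your proposal is correct and matches the paper's intended argument. The paper gives no separate proof: the corollary is an immediate specialization of Theorem~\ref{thm:surface_lipschitz} with $\mathbf{z}=\mathbf{z}_t$ and $\delta\mathbf{z}=-\eta\mathbf{g}_t$, so that $\|\delta\mathbf{z}\|_2=\eta\|\mathbf{g}_t\|_2\le\delta_0$. Your point that the $\mathcal{O}(\|\delta\mathbf{z}\|_2^2)$ remainder is uniform over $\mathcal{Z}$, and hence independent of the iterate once $\mathbf{z}_t\in\mathcal{Z}$, is the right thing to check.
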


% ============================================================
\subsection{Detached geometry updates during optimization}
\label{app:detach_geometry}

\paragraph{Overview.}
This section shows that when a downstream objective depends on both the latent code $\mathbf{z}$ and geometry-dependent samples $\mathbf{X}(\mathbf{z})$ induced by $\mathbf{z}$, the \emph{full} reduced gradient decomposes into (i) a partial-gradient term that holds $\mathbf{X}$ fixed and (ii) a transport term that propagates through the geometry-feasibility map. Moreover, when the implicit-surface sensitivity is controlled (under the assumptions of the previous subsection), the transport term discarded by stop-gradient/detach through $\mathbf{X}(\mathbf{z})$ can be upper bounded, explaining why detached optimization provides a principled and stable approximation.

\paragraph{Setup.}
Let $\mathbf{X}=(\mathbf{x}_1,\dots,\mathbf{x}_N)\in(\mathbb{R}^3)^N$ denote quadrature/sample points constrained to lie on the surface $\Gamma(\mathbf{z})^N$. Consider a generic downstream objective
\begin{equation}
\label{eq:D_Jhat_def}
\widehat{\mathcal{J}}(\mathbf{z},\mathbf{X})
\triangleq
\frac{1}{N}\sum_{i=1}^N
\psi\bigl(\widehat{u}_\phi(\mathbf{x}_i,\mathbf{z}),\mathbf{x}_i\bigr),
\end{equation}
where $\widehat{u}_\phi(\cdot,\mathbf{z})$ is a differentiable predictor (e.g., \textsc{GI-Transolver}) and $\psi$ is a scalar loss.
The pipeline maintains feasibility via a (possibly iterative) reprojection operator
\begin{equation}
\label{eq:D_reproject}
\mathbf{X}(\mathbf{z})\in\Gamma(\mathbf{z})^N,
\qquad
\mathbf{X}_{t+1}=\Pi(\mathbf{X}_t,\mathbf{z}_{t+1}).
\end{equation}
Define the reduced objective
\[
\mathcal{J}(\mathbf{z})\triangleq \widehat{\mathcal{J}}(\mathbf{z},\mathbf{X}(\mathbf{z})).
\]

\paragraph{Full gradient vs.\ detached gradient.}
\begin{proposition}[\textbf{Full gradient decomposition}]
\label{prop:full_grad_decomp}
Assume $\widehat{\mathcal{J}}$ is differentiable and $\mathbf{X}(\cdot)$ is differentiable at $\mathbf{z}$.
Then
\begin{equation}
\label{eq:D_full_grad}
\nabla_{\mathbf{z}} \mathcal{J}(\mathbf{z})
=
\nabla_{\mathbf{z}} \widehat{\mathcal{J}}(\mathbf{z},\mathbf{X}(\mathbf{z}))
+
\Bigl(\nabla_{\mathbf{X}} \widehat{\mathcal{J}}(\mathbf{z},\mathbf{X}(\mathbf{z}))\Bigr)\,\frac{d\mathbf{X}(\mathbf{z})}{d\mathbf{z}},
\end{equation}
where $\nabla_{\mathbf{z}} \widehat{\mathcal{J}}$ denotes the partial gradient holding $\mathbf{X}$ fixed,
and $\nabla_{\mathbf{X}} \widehat{\mathcal{J}}$ is the gradient w.r.t.\ the stacked variable $\mathbf{X}\in\mathbb{R}^{3N}$.
\end{proposition}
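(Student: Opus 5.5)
This is the multivariable chain rule applied to the composite map $\mathbf{z}\mapsto(\mathbf{z},\mathbf{X}(\mathbf{z}))$ followed by $\widehat{\mathcal{J}}$. Define the auxiliary map $\Phi:\mathbb{R}^d\to\mathbb{R}^d\times\mathbb{R}^{3N}$, $\Phi(\mathbf{z})=(\mathbf{z},\mathbf{X}(\mathbf{z}))$, so that $\mathcal{J}=\widehat{\mathcal{J}}\circ\Phi$. By hypothesis $\mathbf{X}(\cdot)$ is differentiable at $\mathbf{z}$, hence $\Phi$ is differentiable there. Its Jacobian is the block matrix
\[
D\Phi(\mathbf{z})=\begin{pmatrix}\mathbf{I}_d\\ \frac{d\mathbf{X}(\mathbf{z})}{d\mathbf{z}}\end{pmatrix}\in\mathbb{R}^{(d+3N)\times d}.
\]

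Next, I will invoke the chain rule for Fréchet derivatives. This requires $\widehat{\mathcal{J}}$ to be (Fréchet) differentiable at the point $(\mathbf{z},\mathbf{X}(\mathbf{z}))$, which is what ``$\widehat{\mathcal{J}}$ is differentiable'' should be read as. It gives
\[
D\mathcal{J}(\mathbf{z})=D\widehat{\mathcal{J}}(\Phi(\mathbf{z}))\,D\Phi(\mathbf{z}).
\]
I then split the row vector $D\widehat{\mathcal{J}}$ into its partial derivatives $\bigl(\nabla_{\mathbf{z}}\widehat{\mathcal{J}},\,\nabla_{\mathbf{X}}\widehat{\mathcal{J}}\bigr)$, with $\mathbf{X}$ stacked as a vector in $\mathbb{R}^{3N}$. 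Multiplying the blocks gives $\nabla_{\mathbf{z}}\widehat{\mathcal{J}}\cdot\mathbf{I}_d+\nabla_{\mathbf{X}}\widehat{\mathcal{J}}\cdot\frac{d\mathbf{X}}{d\mathbf{z}}$, which is Eq.~\eqref{eq:D_full_grad} once gradients are written as row vectors. This is the convention implicit in the statement, since $(\nabla_{\mathbf{X}}\widehat{\mathcal{J}})\,\frac{d\mathbf{X}}{d\mathbf{z}}$ is a $1\times3N$ row times a $3N\times d$ matrix. For column gradients, transpose: $\nabla_{\mathbf{z}}\mathcal{J}=\nabla_{\mathbf{z}}\widehat{\mathcal{J}}+\bigl(\frac{d\mathbf{X}}{d\mathbf{z}}\bigr)^{\!\top}\nabla_{\mathbf{X}}\widehat{\mathcal{J}}$.

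The only point needing care is the regularity hypothesis. Mere existence of partial derivatives of $\widehat{\mathcal{J}}$ is not enough for the chain rule, so joint differentiability must be assumed. For the concrete form in Eq.~\eqref{eq:D_Jhat_def} this holds automatically whenever $\psi$ and $\widehat{u}_\phi$ are $\mathcal{C}^1$: each summand is then a composition of $\mathcal{C}^1$ maps, and $\nabla_{\mathbf{X}}\widehat{\mathcal{J}}$ is block-structured, with block $i$ equal to $\frac1N\bigl(\partial_u\psi\,\nabla_{\mathbf{x}}\widehat{u}_\phi+\nabla_{\mathbf{x}}\psi\bigr)$ evaluated at $\mathbf{x}_i$.

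I would also remark that the identity is pointwise in $\mathbf{z}$ and does not use how the feasibility map $\mathbf{X}(\mathbf{z})$ is realized, whether by the reprojection $\Pi$ or otherwise. Its differentiability in practice follows from the implicit function theorem applied to $s_\theta(\mathbf{x}_i,\mathbf{z})=0$ under the gradient lower bound $\|\nabla_{\mathbf{x}}s_\theta\|_2\ge m$ of Assumption~\ref{ass:regular_sensitivity}. That observation, however, belongs to the subsequent bound on the transport term, not to this proposition.
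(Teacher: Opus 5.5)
Your proposal is correct and matches the paper's proof, which is just the chain rule applied to $\mathcal{J}(\mathbf{z})=\widehat{\mathcal{J}}(\mathbf{z},\mathbf{X}(\mathbf{z}))$. Your composite map $\Phi(\mathbf{z})=(\mathbf{z},\mathbf{X}(\mathbf{z}))$, the row-vector convention, and the note that joint differentiability of $\widehat{\mathcal{J}}$ is required simply spell out that one-line argument; your closing implicit-function-theorem remark would also need a choice of selection, since one scalar equation does not determine $\mathbf{x}_i\in\mathbb{R}^3$, but it plays no role in this proposition.
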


\begin{proof}
Apply the chain rule to $\mathcal{J}(\mathbf{z})=\widehat{\mathcal{J}}(\mathbf{z},\mathbf{X}(\mathbf{z}))$.
\end{proof}

Many implementations stop-gradient through $\mathbf{X}(\mathbf{z})$ and use the \emph{detached} direction
\begin{equation}
\label{eq:D_detached_grad}
\mathbf{g}_{\mathrm{det}}(\mathbf{z})\triangleq \nabla_{\mathbf{z}} \widehat{\mathcal{J}}(\mathbf{z},\mathbf{X}(\mathbf{z})).
\end{equation}

\paragraph{Bounding the dropped ``transport" term via implicit sensitivity}
We reuse Assumption~\ref{ass:regular_sensitivity} from Appendix~\ref{app:surface_lipschitz}.

\begin{lemma}[Per-point surface sensitivity bound]
\label{lem:dx_dz_bound}
Under Assumption~\ref{ass:regular_sensitivity}, for any $\mathbf{z}\in\mathcal{Z}$ and any $\mathbf{x}\in\Gamma(\mathbf{z})$, there exists a locally feasible selection $\mathbf{x}(\cdot)$ with $\mathbf{x}(\mathbf{z})=\mathbf{x}$ and $\mathbf{x}(\mathbf{z}')\in\Gamma(\mathbf{z}')$ for $\mathbf{z}'$ near $\mathbf{z}$ such that
\[
\left\|\frac{d\mathbf{x}(\mathbf{z})}{d\mathbf{z}}\right\|_{\mathrm{op}}
\le
\frac{\|\nabla_{\mathbf{z}} s_\theta(\mathbf{x},\mathbf{z})\|_2}{\|\nabla_{\mathbf{x}} s_\theta(\mathbf{x},\mathbf{z})\|_2}
\le
\frac{L_{\mathbf{z}}}{m}.
\]
\end{lemma}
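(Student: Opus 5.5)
We construct the selection explicitly as the minimal-norm (normal-direction) motion of the level set and then bound its Jacobian with the implicit function theorem. Fix $\mathbf{z}\in\mathcal{Z}$ and $\mathbf{x}\in\Gamma(\mathbf{z})$, and write $\mathbf{a}\triangleq\nabla_{\mathbf{x}} s_\theta(\mathbf{x},\mathbf{z})\in\mathbb{R}^3$ and $\mathbf{b}\triangleq\nabla_{\mathbf{z}} s_\theta(\mathbf{x},\mathbf{z})\in\mathbb{R}^d$. By Assumption~\ref{ass:regular_sensitivity}(1), $\|\mathbf{a}\|_2\ge m>0$, and let $\mathbf{n}=\mathbf{a}/\|\mathbf{a}\|_2$. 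Parametrize candidate points along the fixed normal line, $\mathbf{x}(\mathbf{z}')=\mathbf{x}+\alpha(\mathbf{z}')\,\mathbf{n}$, and define the scalar map $F(\alpha,\mathbf{z}')\triangleq s_\theta(\mathbf{x}+\alpha\mathbf{n},\mathbf{z}')$. This map is $\mathcal{C}^2$ on the region of Assumption~\ref{ass:regular_sensitivity}, since $\mathbf{x}+\alpha\mathbf{n}\in N_\rho(\Gamma(\mathbf{z}))$ for $|\alpha|\le\rho$. It satisfies $F(0,\mathbf{z})=0$ and $\partial_\alpha F(0,\mathbf{z})=\mathbf{a}^\top\mathbf{n}=\|\mathbf{a}\|_2\ne 0$.

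Applying the implicit function theorem to $F$ then gives a $\mathcal{C}^1$ function $\alpha(\cdot)$ on a neighborhood of $\mathbf{z}$. It satisfies $\alpha(\mathbf{z})=0$ and $F(\alpha(\mathbf{z}'),\mathbf{z}')=0$, which is exactly the required feasibility $\mathbf{x}(\mathbf{z}')\in\Gamma(\mathbf{z}')$. Its derivative is
\[
\nabla_{\mathbf{z}}\alpha(\mathbf{z}) = -\frac{\nabla_{\mathbf{z}}F(0,\mathbf{z})}{\partial_\alpha F(0,\mathbf{z})} = -\frac{\mathbf{b}}{\|\mathbf{a}\|_2}.
\]
Hence $\frac{d\mathbf{x}(\mathbf{z})}{d\mathbf{z}} = -\mathbf{n}\,\mathbf{b}^\top/\|\mathbf{a}\|_2$, which is a rank-one $3\times d$ matrix. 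The operator norm of a rank-one matrix $\mathbf{u}\mathbf{v}^\top$ equals $\|\mathbf{u}\|_2\|\mathbf{v}\|_2$, and $\|\mathbf{n}\|_2=1$. This gives the first inequality, with equality: $\|d\mathbf{x}/d\mathbf{z}\|_{\mathrm{op}}=\|\mathbf{b}\|_2/\|\mathbf{a}\|_2$.

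The second inequality follows from the bounds $\|\mathbf{b}\|_2\le L_{\mathbf{z}}$ and $\|\mathbf{a}\|_2\ge m$. The first holds by Assumption~\ref{ass:regular_sensitivity}(2), since $\mathbf{x}\in\Gamma(\mathbf{z})\subset N_\rho(\Gamma(\mathbf{z}))$; the second holds by Assumption~\ref{ass:regular_sensitivity}(1).

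The only delicate point is ensuring that the implicit function theorem applies on a neighborhood contained in the $\mathcal{C}^2$ region, so that the selection is genuinely well defined. This requires $|\alpha(\mathbf{z}')|\le\rho$ and $\|\mathbf{z}'-\mathbf{z}\|_2<\rho_{\mathbf{z}}$. Both hold by continuity of $\alpha$ after shrinking the neighborhood, and the statement is only local, so no uniform radius is needed. I would remark that this normal-line selection is also the one that minimizes $\|d\mathbf{x}/d\mathbf{z}\|$ among feasible selections, because any feasible selection must satisfy $\mathbf{a}^\top\frac{d\mathbf{x}}{d\mathbf{z}}=-\mathbf{b}^\top$. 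This matches the Newton-style step used in the proof of Theorem~\ref{thm:surface_lipschitz} and the first-order behavior of the projection in Eq.~\eqref{eq:point_reprojection_main}.
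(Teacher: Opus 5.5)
Your proof is correct and follows essentially the same route as the paper, which differentiates $s_\theta(\mathbf{x}(\mathbf{z}),\mathbf{z})=0$ and takes the minimum-norm solution $d\mathbf{x}/d\mathbf{z}=-(\nabla_{\mathbf{x}} s_\theta)^\dagger\nabla_{\mathbf{z}} s_\theta$; this is exactly your rank-one matrix $-\mathbf{n}\mathbf{b}^\top/\|\mathbf{a}\|_2$. Your normal-line implicit-function-theorem construction also supplies a step the paper leaves implicit, namely that a feasible selection actually realizes this minimum-norm derivative, and your rank-one computation shows the first inequality holds with equality.
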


\begin{proof}
Differentiate the implicit constraint $s_\theta(\mathbf{x}(\mathbf{z}),\mathbf{z})=0$:
\[
\nabla_{\mathbf{x}} s_\theta(\mathbf{x}(\mathbf{z}),\mathbf{z})\frac{d\mathbf{x}(\mathbf{z})}{d\mathbf{z}}
+
\nabla_{\mathbf{z}} s_\theta(\mathbf{x}(\mathbf{z}),\mathbf{z})
=
\mathbf{0}.
\]
Taking the minimum-norm solution gives
\[
\frac{d\mathbf{x}}{d\mathbf{z}}
=
-(\nabla_{\mathbf{x}} s_\theta)^\dagger \nabla_{\mathbf{z}} s_\theta.
\]
Since $\|\nabla_{\mathbf{x}} s_\theta\|_2\ge m$, we have $\|(\nabla_{\mathbf{x}} s_\theta)^\dagger\|_{\mathrm{op}}=1/\|\nabla_{\mathbf{x}} s_\theta\|_2$, yielding
\[
\left\|\frac{d\mathbf{x}}{d\mathbf{z}}\right\|_{\mathrm{op}}
\le
\|(\nabla_{\mathbf{x}} s_\theta)^\dagger\|_{\mathrm{op}}\;\|\nabla_{\mathbf{z}} s_\theta\|_2
\le
\frac{\|\nabla_{\mathbf{z}} s_\theta(\mathbf{x},\mathbf{z})\|_2}{\|\nabla_{\mathbf{x}} s_\theta(\mathbf{x},\mathbf{z})\|_2}
\le
\frac{L_{\mathbf{z}}}{m}.
\]
\end{proof}

\begin{theorem}[\textbf{Gradient mismatch bound (detach drops a controlled term)}]
\label{thm:grad_mismatch}
Let $\mathbf{X}(\mathbf{z})=(\mathbf{x}_1(\mathbf{z}),\dots,\mathbf{x}_N(\mathbf{z}))$ be a locally feasible selection with each $\mathbf{x}_i(\mathbf{z})\in\Gamma(\mathbf{z})$
satisfying Lemma~\ref{lem:dx_dz_bound}. Then
\begin{equation}
\label{eq:D_mismatch}
\bigl\|\nabla_{\mathbf{z}} \mathcal{J}(\mathbf{z}) - \mathbf{g}_{\mathrm{det}}(\mathbf{z})\bigr\|_2
=
\left\|
\Bigl(\nabla_{\mathbf{X}} \widehat{\mathcal{J}}(\mathbf{z},\mathbf{X}(\mathbf{z}))\Bigr)\frac{d\mathbf{X}(\mathbf{z})}{d\mathbf{z}}
\right\|_2
\le
\opnorm{\nabla_{\mathbf{X}} \widehat{\mathcal{J}}(\mathbf{z},\mathbf{X}(\mathbf{z}))}\;
\opnorm{\frac{d\mathbf{X}(\mathbf{z})}{d\mathbf{z}}}.
\end{equation}
Moreover, if $\opnorm{d\mathbf{x}_i/d\mathbf{z}}\le L_{\mathbf{z}}/m$ for all $i$, then (viewing $\mathbf{X}$ as stacked in $\mathbb{R}^{3N}$)
\begin{equation}
\label{eq:D_mismatch_simple}
\opnorm{\frac{d\mathbf{X}(\mathbf{z})}{d\mathbf{z}}}
\le
\sqrt{N}\,\frac{L_{\mathbf{z}}}{m},
\qquad\Rightarrow\qquad
\bigl\|\nabla_{\mathbf{z}} \mathcal{J}(\mathbf{z}) - \mathbf{g}_{\mathrm{det}}(\mathbf{z})\bigr\|_2
\le
\sqrt{N}\,\frac{L_{\mathbf{z}}}{m}\;\opnorm{\nabla_{\mathbf{X}} \widehat{\mathcal{J}}(\mathbf{z},\mathbf{X}(\mathbf{z}))}.
\end{equation}
\end{theorem}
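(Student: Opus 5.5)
The plan is a direct chain-rule-plus-operator-norm argument, using Proposition~\ref{prop:full_grad_decomp} and Lemma~\ref{lem:dx_dz_bound}.

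\textbf{Step 1 (the equality).} By Proposition~\ref{prop:full_grad_decomp}, $\nabla_{\mathbf{z}}\mathcal{J}(\mathbf{z})$ equals the partial gradient plus the transport term. The partial gradient is exactly $\mathbf{g}_{\mathrm{det}}(\mathbf{z})$ from Eq.~\eqref{eq:D_detached_grad}, so subtracting gives the equality in Eq.~\eqref{eq:D_mismatch}. Differentiability of $\mathbf{X}(\cdot)$ is supplied by the locally feasible selection of Lemma~\ref{lem:dx_dz_bound}, applied coordinatewise.

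\textbf{Step 2 (the first inequality).} Regard $\nabla_{\mathbf{X}}\widehat{\mathcal{J}}$ as a $1\times 3N$ row vector and $d\mathbf{X}/d\mathbf{z}$ as a $3N\times d$ matrix. The $\ell_2$ norm of their product is bounded by submultiplicativity of the operator norm. For a row vector, the operator norm coincides with its Euclidean norm, so the bound is consistent however one reads $\opnorm{\cdot}$.

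\textbf{Step 3 (the stacked Jacobian).} The stacked Jacobian $d\mathbf{X}/d\mathbf{z}$ is the vertical concatenation of the $3\times d$ blocks $A_i = d\mathbf{x}_i/d\mathbf{z}$. For any unit $\mathbf{v}\in\R^d$,
\[
\Bigl\|\tfrac{d\mathbf{X}}{d\mathbf{z}}\mathbf{v}\Bigr\|_2^2=\sum_{i=1}^N\|A_i\mathbf{v}\|_2^2\le \sum_{i=1}^N\opnorm{A_i}^2\le N\,\frac{L_{\mathbf{z}}^2}{m^2}.
\]
Each block bound comes from Lemma~\ref{lem:dx_dz_bound}. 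Taking square roots and the supremum over $\mathbf{v}$ gives $\opnorm{d\mathbf{X}/d\mathbf{z}}\le \sqrt{N}L_{\mathbf{z}}/m$. Substituting this into Step 2 yields Eq.~\eqref{eq:D_mismatch_simple}.

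\textbf{Main obstacle.} The computation is routine; the only delicate point is the meaning of $d\mathbf{X}/d\mathbf{z}$. The actual reprojection $\Pi$ is an iterative map, not a canonical function of $\mathbf{z}$. I would state that the theorem concerns the minimum-norm (normal-velocity) selection from Lemma~\ref{lem:dx_dz_bound}, which the implicit function theorem makes differentiable since $\|\nabla_{\mathbf{x}}s_\theta\|\ge m>0$. The bound then holds for that selection, and any other feasible selection differs only by tangential motion. I would not attempt to prove that the iterated $\Pi$ converges to this selection.
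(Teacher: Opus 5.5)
Your proposal is correct and follows the paper's proof essentially step for step. The equality comes from subtracting $\mathbf{g}_{\mathrm{det}}$ from the chain-rule decomposition in Proposition~\ref{prop:full_grad_decomp}, the first inequality from submultiplicativity, and the $\sqrt{N}\,L_{\mathbf{z}}/m$ bound from summing the squared block norms $\|(d\mathbf{x}_i/d\mathbf{z})\mathbf{v}\|_2^2$ and taking the supremum over $\mathbf{v}$. Your extra remarks are accurate and only make explicit what the paper leaves implicit: the operator norm of a row vector is its Euclidean norm, and the bound concerns the minimum-norm, normal-velocity selection from Lemma~\ref{lem:dx_dz_bound}.
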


\begin{proof}
Eq.~\ref{eq:D_mismatch} follows from Eq.~\ref{eq:D_full_grad}--Eq.~\ref{eq:D_detached_grad} and $\| \mathbf{A}\mathbf{B}\|_2\le \opnorm{\mathbf{A}}\opnorm{\mathbf{B}}$.

It remains to prove the bound $\opnorm{d\mathbf{X}/d\mathbf{z}}\le \sqrt{N}(L_{\mathbf{z}}/m)$ under $\opnorm{d\mathbf{x}_i/d\mathbf{z}}\le L_{\mathbf{z}}/m$.
View $d\mathbf{X}/d\mathbf{z}$ as the stacked linear map $\mathbf{v}\mapsto \bigl((d\mathbf{x}_1/d\mathbf{z})\mathbf{v},\dots,(d\mathbf{x}_N/d\mathbf{z})\mathbf{v}\bigr)\in\mathbb{R}^{3N}$.
Then for any $\mathbf{v}\in\mathbb{R}^d$,
\[
\left\|\frac{d\mathbf{X}(\mathbf{z})}{d\mathbf{z}}\mathbf{v}\right\|_2^2
=
\sum_{i=1}^N \left\|\frac{d\mathbf{x}_i(\mathbf{z})}{d\mathbf{z}}\mathbf{v}\right\|_2^2
\le
\sum_{i=1}^N \left\|\frac{d\mathbf{x}_i(\mathbf{z})}{d\mathbf{z}}\right\|_{\mathrm{op}}^2 \|\mathbf{v}\|_2^2
\le
N\left(\frac{L_{\mathbf{z}}}{m}\right)^2\|\mathbf{v}\|_2^2.
\]
Taking supremum over $\mathbf{v}\neq \mathbf{0}$ yields
\[
\opnorm{\frac{d\mathbf{X}(\mathbf{z})}{d\mathbf{z}}}
=
\sup_{\mathbf{v}\neq \mathbf{0}}\frac{\|(d\mathbf{X}/d\mathbf{z})\mathbf{v}\|_2}{\|\mathbf{v}\|_2}
\le
\sqrt{N}\,\frac{L_{\mathbf{z}}}{m}.
\]
Substituting into Eq.~\ref{eq:D_mismatch} gives Eq.~\ref{eq:D_mismatch_simple}.
\end{proof}

\begin{corollary}[Why \textsc{StableSDF} helps detached optimization]
\label{cor:stable_reduces_mismatch}
If training reduces $\sup_{\mathbf{x}\in N_\rho(\Gamma(\mathbf{z}))}\|\nabla_{\mathbf{z}} s_\theta(\mathbf{x},\mathbf{z})\|_2$ on a neighborhood of the surface (thus reducing $L_{\mathbf{z}}$ in Assumption~\ref{ass:regular_sensitivity}(2)), then the mismatch bound Eq.~\ref{eq:D_mismatch_simple} tightens proportionally, making detached gradients closer to the full reduced gradient and improving stability of latent shape optimization.
\end{corollary}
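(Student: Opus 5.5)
The statement to prove is the last one above, Corollary~\ref{cor:stable_reduces_mismatch}. I would obtain it as a direct consequence of Theorem~\ref{thm:grad_mismatch} and Lemma~\ref{lem:dx_dz_bound}, reading off how the right-hand side of Eq.~\ref{eq:D_mismatch_simple} depends on $L_{\mathbf{z}}$.

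\textbf{Step 1: fix the sensitivity constant.} Set
\[
L_{\mathbf{z}} \triangleq \sup_{\mathbf{x}\in N_\rho(\Gamma(\mathbf{z}))}\|\nabla_{\mathbf{z}} s_\theta(\mathbf{x},\mathbf{z})\|_2 .
\]
This is the smallest constant for which Assumption~\ref{ass:regular_sensitivity}(2) holds on the tube, so it is a legitimate choice. Training that lowers this supremum therefore lowers $L_{\mathbf{z}}$ itself. The remaining constants are held fixed: $m$ (level-set regularity), $N$, and the field-level factor $\opnorm{\nabla_{\mathbf{X}}\widehat{\mathcal{J}}}$, which is determined by the frozen predictor $\widehat{u}_\phi$ and the loss $\psi$.

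\textbf{Step 2: apply the per-point bound.} Lemma~\ref{lem:dx_dz_bound} applies at every sample $\mathbf{x}_i\in\Gamma(\mathbf{z})\subset N_\rho(\Gamma(\mathbf{z}))$ and gives $\opnorm{d\mathbf{x}_i/d\mathbf{z}}\le L_{\mathbf{z}}/m$. This is the hypothesis of Theorem~\ref{thm:grad_mismatch}, so the theorem yields
\[
\|\nabla_{\mathbf{z}}\mathcal{J}-\mathbf{g}_{\mathrm{det}}\|_2 \le C\,L_{\mathbf{z}},
\qquad
C \triangleq \frac{\sqrt{N}}{m}\,\opnorm{\nabla_{\mathbf{X}}\widehat{\mathcal{J}}} .
\]
Because this bound is linear in $L_{\mathbf{z}}$, replacing $L_{\mathbf{z}}$ by a smaller value $L'_{\mathbf{z}}$ scales the bound by exactly $L'_{\mathbf{z}}/L_{\mathbf{z}}$. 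That is the proportional tightening the corollary claims.

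\textbf{Step 3: the stability claim.} I would make "improving stability" precise through a cosine argument. Write $\mathbf{g}=\nabla_{\mathbf{z}}\mathcal{J}$ and $e=\|\mathbf{g}-\mathbf{g}_{\mathrm{det}}\|_2$. Whenever $e<\|\mathbf{g}\|_2$, the detached direction is a descent direction for $\mathcal{J}$:
\[
\langle\mathbf{g}_{\mathrm{det}},\mathbf{g}\rangle \ge \|\mathbf{g}\|_2^2 - e\,\|\mathbf{g}\|_2 > 0 .
\]
A smaller $L_{\mathbf{z}}$ enlarges the set of $\mathbf{z}$ for which this condition holds.

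\textbf{Linking to the denoising mechanism.} The connection to \textsc{StableSDF} comes from Corollary~\ref{cor:near_opt}: the denoising objective approximates a penalty on $\|\nabla_{\mathbf{z}} s_\theta\|_2$. Strictly, that penalty acts on average over the sampled points rather than on the supremum, so I would state the StableSDF link as motivation. This is also why the corollary is phrased conditionally ("if training reduces").

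\textbf{Main obstacle.} The delicate point is keeping $m$ and $\opnorm{\nabla_{\mathbf{X}}\widehat{\mathcal{J}}}$ from changing unfavourably when $L_{\mathbf{z}}$ shrinks. I would handle this by stating the proportionality with these quantities held fixed. The cleanest concrete guarantee is that an approximately eikonal decoder ($\|\nabla_{\mathbf{x}} s_\theta\|\approx 1$, as a trained SDF should be) keeps $m$ bounded away from zero.
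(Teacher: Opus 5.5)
Your Steps~1--2 match the paper's reasoning. The paper gives no separate proof and reads the corollary off the fact that the right-hand side of Eq.~\ref{eq:D_mismatch_simple}, obtained from Lemma~\ref{lem:dx_dz_bound} and Theorem~\ref{thm:grad_mismatch}, is linear in $L_{\mathbf{z}}$, with $N$, $m$ and $\|\nabla_{\mathbf{X}}\widehat{\mathcal{J}}\|_{\mathrm{op}}$ implicitly held fixed; your caveats on those constants, and on Corollary~\ref{cor:near_opt} penalizing an average rather than the supremum, are apt.

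Step~3 goes beyond the paper. Its monotonicity claim concerns only the sufficient condition $C L_{\mathbf{z}}<\|\mathbf{g}\|_2$, so it also needs the gradient scale held fixed: a pure rescaling of the latent space lowers $L_{\mathbf{z}}$ and $\|\mathbf{g}\|_2$ by the same factor. Since $\mathbf{g}$ itself contains the transport term, it is cleaner to state it via the computed direction: $\langle \mathbf{g}_{\mathrm{det}},\mathbf{g}\rangle \ge \|\mathbf{g}_{\mathrm{det}}\|_2^2 - e\,\|\mathbf{g}_{\mathrm{det}}\|_2 > 0$ whenever $C L_{\mathbf{z}} < \|\mathbf{g}_{\mathrm{det}}\|_2$.
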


% ============================================================
\subsection{Null-space projection in latent space}
\label{app:nullspace_projection}

\paragraph{Overview.}
This section establishes a projection mechanism to preserve linearized implicit constraints in the latent space. In particular, the closest feasible update under the linearized constraint $\mathbf{G}(\mathbf{z})\Delta\mathbf{z}=\mathbf{0}$ is given by orthogonal projection onto $\mathrm{Null}(\mathbf{G})$, and we further show first-order constraint invariance under the projected update.

\paragraph{Setup.}
Let $\mathbf{X}_{\mathrm{const}}=\{\mathbf{x}_1,\dots,\mathbf{x}_M\}\subset\mathbb{R}^3$ be protected query points.
Define the constraint vector and its Jacobian
\[
\mathbf{c}(\mathbf{z})\triangleq
\begin{bmatrix}
s_\theta(\mathbf{x}_1,\mathbf{z})\\
\vdots\\
s_\theta(\mathbf{x}_M,\mathbf{z})
\end{bmatrix}\in\mathbb{R}^M,
\qquad
\mathbf{G}(\mathbf{z})\triangleq \nabla_{\mathbf{z}} \mathbf{c}(\mathbf{z})\in\mathbb{R}^{M\times d}.
\]
Given a proposed direction $\mathbf{g}\in\mathbb{R}^d$, we enforce first-order constraint preservation:
$\mathbf{c}(\mathbf{z}+\Delta\mathbf{z})\approx \mathbf{c}(\mathbf{z})\iff \mathbf{G}(\mathbf{z})\Delta \mathbf{z}=\mathbf{0}$.

\paragraph{Main theorem.}
\begin{theorem}[Optimality of null-space projection (closest feasible update)]
\label{thm:proj_optimality}
Fix $\mathbf{z}$ and write $\mathbf{G}\triangleq \mathbf{G}(\mathbf{z})$.
Consider
\[
\Delta \mathbf{z}^\star \in \arg\min_{\Delta \mathbf{z}\in\mathbb{R}^d}\ \|\Delta \mathbf{z}-\mathbf{g}\|_2^2
\quad\text{s.t.}\quad
\mathbf{G}\Delta\mathbf{z} = \mathbf{0}.
\]
Then one optimal solution is
\[
\Delta \mathbf{z}^\star = \mathbf{P}(\mathbf{z})\,\mathbf{g},
\qquad
\mathbf{P}(\mathbf{z})\triangleq \mathbf{I}_d - \mathbf{G}(\mathbf{z})^\dagger \mathbf{G}(\mathbf{z}),
\]
where $(\cdot)^\dagger$ is the Moore--Penrose pseudoinverse.
Moreover, $\mathbf{P}(\mathbf{z})$ is the orthogonal projector onto $\mathrm{Null}(\mathbf{G}(\mathbf{z}))$, i.e.,
\[
\mathbf{P}(\mathbf{z})^2=\mathbf{P}(\mathbf{z}),\qquad
\mathbf{P}(\mathbf{z})^\top=\mathbf{P}(\mathbf{z}),\qquad
\mathbf{G}(\mathbf{z})\mathbf{P}(\mathbf{z})=\mathbf{0}.
\]
\end{theorem}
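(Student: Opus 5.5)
The plan is to prove everything from the defining properties of the Moore--Penrose pseudoinverse, namely $\mathbf{G}\mathbf{G}^\dagger\mathbf{G}=\mathbf{G}$, $\mathbf{G}^\dagger\mathbf{G}\mathbf{G}^\dagger=\mathbf{G}^\dagger$, and $(\mathbf{G}^\dagger\mathbf{G})^\top=\mathbf{G}^\dagger\mathbf{G}$. I will first establish that $\mathbf{P}=\mathbf{I}_d-\mathbf{G}^\dagger\mathbf{G}$ is the orthogonal projector onto $\mathrm{Null}(\mathbf{G})$, and then obtain the optimality claim from the projection theorem in $\mathbb{R}^d$.

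\textbf{Step 1 (algebraic identities).}
\begin{itemize}
\item Symmetry: $\mathbf{P}^\top=\mathbf{P}$ follows at once from the symmetry of $\mathbf{G}^\dagger\mathbf{G}$.
\item Idempotence: expanding gives $\mathbf{P}^2=\mathbf{I}-2\mathbf{G}^\dagger\mathbf{G}+\mathbf{G}^\dagger(\mathbf{G}\mathbf{G}^\dagger\mathbf{G})=\mathbf{I}-\mathbf{G}^\dagger\mathbf{G}=\mathbf{P}$.
\item Annihilation: $\mathbf{G}\mathbf{P}=\mathbf{G}-\mathbf{G}\mathbf{G}^\dagger\mathbf{G}=\mathbf{0}$.
\end{itemize}

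\textbf{Step 2 (range of $\mathbf{P}$).} I will show $\mathrm{Range}(\mathbf{P})=\mathrm{Null}(\mathbf{G})$.
\begin{itemize}
\item $\mathrm{Range}(\mathbf{P})\subseteq\mathrm{Null}(\mathbf{G})$ is exactly $\mathbf{G}\mathbf{P}=\mathbf{0}$.
\item Conversely, if $\mathbf{G}\mathbf{v}=\mathbf{0}$, then $\mathbf{P}\mathbf{v}=\mathbf{v}-\mathbf{G}^\dagger\mathbf{G}\mathbf{v}=\mathbf{v}$, so $\mathbf{v}\in\mathrm{Range}(\mathbf{P})$.
\end{itemize}
A symmetric idempotent matrix is the orthogonal projector onto its range, which gives the second part of the theorem.

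\textbf{Step 3 (optimality).} The feasible set $\mathrm{Null}(\mathbf{G})$ is a linear subspace. Decompose $\mathbf{g}=\mathbf{P}\mathbf{g}+(\mathbf{I}-\mathbf{P})\mathbf{g}$. Take any feasible $\Delta\mathbf{z}$, so $\mathbf{P}\Delta\mathbf{z}=\Delta\mathbf{z}$. Then $\Delta\mathbf{z}-\mathbf{P}\mathbf{g}$ lies in $\mathrm{Null}(\mathbf{G})$, while $(\mathbf{I}-\mathbf{P})\mathbf{g}$ is orthogonal to it, because $\mathbf{P}^\top(\mathbf{I}-\mathbf{P})=\mathbf{P}-\mathbf{P}^2=\mathbf{0}$. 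Pythagoras then gives
\[
\|\Delta\mathbf{z}-\mathbf{g}\|_2^2=\|\Delta\mathbf{z}-\mathbf{P}\mathbf{g}\|_2^2+\|(\mathbf{I}-\mathbf{P})\mathbf{g}\|_2^2 .
\]
This is minimized exactly at $\Delta\mathbf{z}=\mathbf{P}\mathbf{g}$, which is feasible by Step 1, and the minimizer is in fact unique.

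There is no real obstacle. The only point needing care is to rely on the Penrose identities rather than a formula such as $\mathbf{G}^\top(\mathbf{G}\mathbf{G}^\top)^{-1}$. That formula requires full row rank, and rank deficiency is common here when $M>d$ or when constraint points are redundant.
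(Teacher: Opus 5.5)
Your proof is correct, but it runs in the opposite direction from the paper's.

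The paper derives the minimizer with a Lagrangian. Stationarity gives $\Delta\mathbf{z}=\mathbf{g}-\mathbf{G}^\top\boldsymbol{\lambda}$, and the constraint then gives $(\mathbf{G}\mathbf{G}^\top)\boldsymbol{\lambda}=\mathbf{G}\mathbf{g}$. To handle rank deficiency it picks $\boldsymbol{\lambda}=(\mathbf{G}\mathbf{G}^\top)^\dagger\mathbf{G}\mathbf{g}$, and it uses the identity $\mathbf{G}^\dagger=\mathbf{G}^\top(\mathbf{G}\mathbf{G}^\top)^\dagger$ to reach $(\mathbf{I}_d-\mathbf{G}^\dagger\mathbf{G})\mathbf{g}$. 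Only afterwards does it check the projector properties from the Penrose relations.

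You instead show first that $\mathbf{P}$ is the orthogonal projector onto $\mathrm{Null}(\mathbf{G})$, and then read off optimality from Pythagoras.

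The paper's route is constructive. It shows where the formula comes from, and how it reduces to the familiar $\mathbf{G}^\top(\mathbf{G}\mathbf{G}^\top)^{-1}$ form when $\mathbf{G}$ has full row rank. It does need the extra pseudoinverse identity, though. It also relies on the fact that a stationary point of the Lagrangian is a minimizer here only because the objective is strictly convex and the constraints are linear, which the paper leaves implicit.

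Your route uses nothing beyond the Penrose identities, and it gives global optimality and uniqueness in one line. It also supplies the inclusion $\mathrm{Null}(\mathbf{G})\subseteq\mathrm{Range}(\mathbf{P})$ via $\mathbf{P}\mathbf{v}=\mathbf{v}$. The paper needs that inclusion to say ``onto $\mathrm{Null}(\mathbf{G})$'' but never writes it down; $\mathbf{G}\mathbf{P}=\mathbf{0}$ alone only gives the reverse inclusion.
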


\begin{proof}
Introduce the Lagrangian
\[
\mathcal{L}(\Delta \mathbf{z},\boldsymbol{\lambda})
=
\|\Delta \mathbf{z}-\mathbf{g}\|_2^2
+
2\boldsymbol{\lambda}^\top(\mathbf{G}\Delta \mathbf{z}).
\]
Stationarity gives $2(\Delta \mathbf{z}-\mathbf{g})+2\mathbf{G}^\top\boldsymbol{\lambda}=\mathbf{0}$, hence $\Delta \mathbf{z}=\mathbf{g}-\mathbf{G}^\top\boldsymbol{\lambda}$. Imposing $\mathbf{G}\Delta \mathbf{z}=\mathbf{0}$ yields $(\mathbf{G}\mathbf{G}^\top)\boldsymbol{\lambda}=\mathbf{G}\mathbf{g}$.
In the general (possibly rank-deficient) case, the minimum-norm solution is $\boldsymbol{\lambda}=(\mathbf{G}\mathbf{G}^\top)^\dagger \mathbf{G}\mathbf{g}$, giving
\[
\Delta \mathbf{z}^\star = \mathbf{g} - \mathbf{G}^\top(\mathbf{G}\mathbf{G}^\top)^\dagger \mathbf{G}\mathbf{g}.
\]
Using the identity $\mathbf{G}^\dagger=\mathbf{G}^\top(\mathbf{G}\mathbf{G}^\top)^\dagger$, we obtain $\Delta \mathbf{z}^\star=(\mathbf{I}_d-\mathbf{G}^\dagger \mathbf{G})\mathbf{g}=\mathbf{P}(\mathbf{z})\mathbf{g}$. The projector properties follow from standard Moore--Penrose relations: $\mathbf{G}\mathbf{G}^\dagger \mathbf{G}=\mathbf{G}$ implies $\mathbf{G}\mathbf{P}=\mathbf{0}$; symmetry $(\mathbf{G}^\dagger \mathbf{G})^\top=\mathbf{G}^\dagger \mathbf{G}$ implies $\mathbf{P}^\top=\mathbf{P}$; and $(\mathbf{G}^\dagger \mathbf{G})^2=\mathbf{G}^\dagger \mathbf{G}$ implies $\mathbf{P}^2=\mathbf{P}$.
\end{proof}

\begin{corollary}[First-order invariance under projected updates]
\label{cor:first_order_invariance}
Let $\mathbf{z}^+=\mathbf{z}-\eta \mathbf{P}(\mathbf{z})\mathbf{g}$ for any $\mathbf{g}\in\mathbb{R}^d$ and $\eta>0$.
Then, as $\eta\to 0$,
\[
\mathbf{c}(\mathbf{z}^+) = \mathbf{c}(\mathbf{z}) + \mathcal{O}(\eta^2).
\]
\end{corollary}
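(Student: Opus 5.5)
The plan is a second-order Taylor expansion of the constraint map $\mathbf{c}$ along the projected step, using Theorem~\ref{thm:proj_optimality} to remove the first-order term. Since $s_\theta$ is assumed $\mathcal{C}^2$ in $\mathbf{z}$ (as in Assumption~\ref{ass:regular_sensitivity}, restricted to the finitely many protected points $\mathbf{x}_1,\dots,\mathbf{x}_M$), each component $c_m(\mathbf{z})=s_\theta(\mathbf{x}_m,\mathbf{z})$ is $\mathcal{C}^2$ on a ball $B_{\rho_{\mathbf{z}}}(\mathbf{z})$. Set $\Delta\mathbf{z}=-\eta\mathbf{P}(\mathbf{z})\mathbf{g}$.

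\textbf{Step 1.} Bound the step size. Because $\mathbf{P}(\mathbf{z})$ is an orthogonal projector (Theorem~\ref{thm:proj_optimality}), $\|\mathbf{P}(\mathbf{z})\|_{\mathrm{op}}\le 1$, hence $\|\Delta\mathbf{z}\|_2\le\eta\|\mathbf{g}\|_2$. For $\eta$ small enough that $\eta\|\mathbf{g}\|_2\le\rho_{\mathbf{z}}$, the segment $[\mathbf{z},\mathbf{z}^+]$ lies in the region where the Hessians are controlled.

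\textbf{Step 2.} Expand each component. Taylor's theorem with Lagrange remainder, as in the proof of Theorem~\ref{theo:latent_jacobian}, gives some $t_m\in(0,1)$ with
\[
c_m(\mathbf{z}^+)=c_m(\mathbf{z})+\nabla_{\mathbf{z}}c_m(\mathbf{z})^\top\Delta\mathbf{z}+\tfrac12\Delta\mathbf{z}^\top\nabla^2_{\mathbf{z}}c_m(\mathbf{z}+t_m\Delta\mathbf{z})\,\Delta\mathbf{z}.
\]
Stacked over $m$, the linear terms form $\mathbf{G}(\mathbf{z})\Delta\mathbf{z}=-\eta\,\mathbf{G}(\mathbf{z})\mathbf{P}(\mathbf{z})\mathbf{g}$, which vanishes by the identity $\mathbf{G}\mathbf{P}=\mathbf{0}$ from Theorem~\ref{thm:proj_optimality}.

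\textbf{Step 3.} Bound the remainder. Let $H$ be a bound on $\|\nabla^2_{\mathbf{z}}s_\theta(\mathbf{x}_m,\cdot)\|_{\mathrm{op}}$ over the closed ball. Such a bound exists by continuity on a compact set, or directly from a uniform Hessian bound like the $M$ of Theorem~\ref{theo:latent_jacobian}. Then each remainder is at most $\tfrac{H}{2}\eta^2\|\mathbf{g}\|_2^2$, so
\[
\|\mathbf{c}(\mathbf{z}^+)-\mathbf{c}(\mathbf{z})\|_2\le\tfrac{\sqrt{M}H}{2}\,\eta^2\|\mathbf{g}\|_2^2=\mathcal{O}(\eta^2).
\]

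The main obstacle is only a matter of hypotheses. The statement does not explicitly assume second-order smoothness in $\mathbf{z}$ at the protected points, and these points need not lie near $\Gamma(\mathbf{z})$. I would therefore state that $s_\theta(\mathbf{x}_m,\cdot)$ is $\mathcal{C}^2$ near $\mathbf{z}$, which holds for smooth-activation decoders. I would also note that $\mathbf{G}\mathbf{P}=\mathbf{0}$ holds exactly even when $\mathbf{G}$ is rank-deficient, through the Moore--Penrose identity $\mathbf{G}\mathbf{G}^\dagger\mathbf{G}=\mathbf{G}$. With these in place the argument is just the three steps above.
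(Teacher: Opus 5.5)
Your proposal is correct and follows the paper's own proof: Taylor-expand $\mathbf{c}$ at $\mathbf{z}$, then note that the linear term $-\eta\,\mathbf{G}(\mathbf{z})\mathbf{P}(\mathbf{z})\mathbf{g}$ vanishes because $\mathbf{G}\mathbf{P}=\mathbf{0}$. Your version adds two things the paper leaves implicit: an explicit bound on the $\mathcal{O}(\eta^2)$ remainder (via $\|\mathbf{P}\|_{\mathrm{op}}\le 1$ and a local Hessian bound), and the hypothesis that $s_\theta(\mathbf{x}_m,\cdot)$ is $\mathcal{C}^2$ near $\mathbf{z}$.
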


\begin{proof}
By Taylor expansion,
$\mathbf{c}(\mathbf{z}^+)=\mathbf{c}(\mathbf{z})+\mathbf{G}(\mathbf{z})(\mathbf{z}^+-\mathbf{z})+\mathcal{O}(\|\mathbf{z}^+-\mathbf{z}\|_2^2)
=\mathbf{c}(\mathbf{z})-\eta \mathbf{G}(\mathbf{z})\mathbf{P}(\mathbf{z})\mathbf{g}+\mathcal{O}(\eta^2)$.
Since $\mathbf{G}(\mathbf{z})\mathbf{P}(\mathbf{z})=\mathbf{0}$, the first-order term vanishes.
\end{proof}

% ============================================================
\subsection{Remeshing-free projection in spatial domain}
\label{app:remeshing_free_projection}

\paragraph{Overview.}
This section justifies the remeshing-free reprojection step that maps off-surface points back to the implicit zero level set. We show it can be interpreted as a Gauss--Newton step for minimizing the squared level-set residual, and establish local quadratic residual contraction and a distance-to-surface bound under a regular level-set assumption.

After each latent update, the implicit surface
$\Gamma(\mathbf{z}) \triangleq \{\mathbf{x}\in\mathbb{R}^3:\ s_\theta(\mathbf{x},\mathbf{z})=0\}$ changes, and previously sampled boundary queries may drift off the updated boundary. We therefore reproject each query point back to the zero level set using the update
\begin{equation}
\label{eq:point_reprojection_app}
\mathbf{x}^{+}
\leftarrow
\mathbf{x}
-
s_{\theta}(\mathbf{x},\mathbf{z})\,
\frac{\nabla_{\mathbf{x}} s_{\theta}(\mathbf{x},\mathbf{z})}
{\|\nabla_{\mathbf{x}} s_{\theta}(\mathbf{x},\mathbf{z})\|_2^{2}+\varepsilon},
\end{equation}
where $\varepsilon>0$ is a small constant for numerical stability. In this appendix, we justify Eq.~\ref{eq:point_reprojection_app} as (i) a Gauss--Newton step for a natural projection objective, and (ii) a locally convergent root-finding method that drives points to the boundary efficiently.

\paragraph{Setup.}
Fix a latent code $\mathbf{z}$ and define the scalar field
\[
g(\mathbf{x}) \triangleq s_\theta(\mathbf{x},\mathbf{z}),
\qquad
\Gamma \triangleq \{\mathbf{x}:\ g(\mathbf{x})=0\}.
\]
We assume $g$ is twice continuously differentiable in a tubular neighborhood of $\Gamma$.

\begin{assumption}[Regular level set and bounded curvature]
\label{ass:reproj_reg}
There exist constants $m>0$, $M<\infty$, and $\rho>0$ such that for all $\mathbf{x}\in N_\rho(\Gamma)\triangleq\{\mathbf{x}:\mathrm{dist}(\mathbf{x},\Gamma)\le \rho\}$:
\begin{equation}
\label{eq:reproj_bounds}
\|\nabla_{\mathbf{x}} g(\mathbf{x})\|_2 \ge m,
\qquad
\|\nabla^2_{\mathbf{x}\mathbf{x}} g(\mathbf{x})\|_{\mathrm{op}} \le M.
\end{equation}
\end{assumption}

\paragraph{Eq.~\ref{eq:point_reprojection_app} as a Gauss--Newton step}
A natural way to reproject a point $\mathbf{x}$ onto $\Gamma$ is to minimize the squared level-set residual:
\begin{equation}
\label{eq:phi_def}
\min_{\mathbf{y}\in\mathbb{R}^3}\ \phi(\mathbf{y})
\quad\text{where}\quad
\phi(\mathbf{y}) \triangleq \frac{1}{2} g(\mathbf{y})^2.
\end{equation}
We have $\nabla_{\mathbf{x}} \phi(\mathbf{x}) = g(\mathbf{x})\nabla_{\mathbf{x}} g(\mathbf{x})$ and
\[
\nabla^2_{\mathbf{x}\mathbf{x}}\phi(\mathbf{x})
=
\nabla_{\mathbf{x}} g(\mathbf{x})\nabla_{\mathbf{x}} g(\mathbf{x})^\top
+
g(\mathbf{x})\,\nabla^2_{\mathbf{x}\mathbf{x}} g(\mathbf{x}).
\]
The Gauss--Newton method ignores the second term and uses the approximate Hessian
$\mathbf{B}(\mathbf{x})=\nabla_{\mathbf{x}} g(\mathbf{x})\nabla_{\mathbf{x}} g(\mathbf{x})^\top$.
The Gauss--Newton step $\Delta$ solves
\[
\mathbf{B}(\mathbf{x})\Delta = -\nabla_{\mathbf{x}}\phi(\mathbf{x})
\quad\Longleftrightarrow\quad
\bigl(\nabla_{\mathbf{x}} g\,\nabla_{\mathbf{x}} g^\top\bigr)\Delta = -g\,\nabla_{\mathbf{x}} g.
\]
Since $\mathbf{B}(\mathbf{x})$ is rank-1, the minimum-norm solution lies in the span of $\nabla_{\mathbf{x}} g(\mathbf{x})$ and is given by
\begin{equation}
\label{eq:gn_step}
\Delta_{\mathrm{GN}}
=
-\,g(\mathbf{x})\,
\frac{\nabla_{\mathbf{x}} g(\mathbf{x})}{\|\nabla_{\mathbf{x}} g(\mathbf{x})\|_2^2}.
\end{equation}
Thus, when $\varepsilon=0$, Eq.~\ref{eq:point_reprojection_app} is exactly one Gauss--Newton step for minimizing
$\tfrac12 g(\mathbf{x})^2$, i.e., it moves along the local normal direction and cancels the first-order residual.

\paragraph{Numerical stabilization.}
With $\varepsilon>0$, Eq.~\ref{eq:point_reprojection_app} is a damped variant of Eq.~\ref{eq:gn_step}:
\[
\Delta_\varepsilon
=
-\,g(\mathbf{x})\,
\frac{\nabla_{\mathbf{x}} g(\mathbf{x})}{\|\nabla_{\mathbf{x}} g(\mathbf{x})\|_2^2+\varepsilon},
\]
which prevents overly large steps when $\|\nabla_{\mathbf{x}} g\|_2$ is small (rare under Assumption~\ref{ass:reproj_reg} but helpful in practice).

\paragraph{Local residual reduction and convergence.}
We now show that the reprojection step exhibits \emph{quadratic residual reduction} under Assumption~\ref{ass:reproj_reg}.

\begin{proposition}[\textbf{One-step residual contraction (quadratic)}]
\label{prop:reproj_quadratic}
Under Assumption~\ref{ass:reproj_reg}, fix any $\mathbf{x}\in N_\rho(\Gamma)$ and define the update
\begin{equation}
\label{eq:reproj_step_nodamp}
\mathbf{x}^{+}
\triangleq
\mathbf{x}
-
g(\mathbf{x})\,\frac{\nabla_{\mathbf{x}} g(\mathbf{x})}{\|\nabla_{\mathbf{x}} g(\mathbf{x})\|_2^2}.
\end{equation}
Then the step length and the post-update residual satisfy
\begin{equation}
\label{eq:reproj_bounds_main}
\|\mathbf{x}^{+}-\mathbf{x}\|_2 \le \frac{|g(\mathbf{x})|}{m},
\qquad
|g(\mathbf{x}^{+})|
\le
\frac{M}{2m^2}\,|g(\mathbf{x})|^2.
\end{equation}
Consequently, if $|g(\mathbf{x})|$ is sufficiently small, repeated application of Eq.~\ref{eq:reproj_step_nodamp} drives $\mathbf{x}$ to $\Gamma$ rapidly (quadratic decrease of $|g|$).
\end{proposition}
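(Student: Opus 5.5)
The proof handles the two bounds in (\ref{eq:reproj_bounds_main}) separately. The step-length bound is immediate from the definition (\ref{eq:reproj_step_nodamp}):
\[
\|\mathbf{x}^+-\mathbf{x}\|_2=\frac{|g(\mathbf{x})|}{\|\nabla_{\mathbf{x}} g(\mathbf{x})\|_2}\le\frac{|g(\mathbf{x})|}{m},
\]
using the lower gradient bound of Assumption~\ref{ass:reproj_reg} at $\mathbf{x}\in N_\rho(\Gamma)$.

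For the residual bound, write $\Delta=\mathbf{x}^+-\mathbf{x}$ and Taylor-expand $g$ along the segment $[\mathbf{x},\mathbf{x}^+]$ with Lagrange remainder:
\[
g(\mathbf{x}^+)=g(\mathbf{x})+\nabla_{\mathbf{x}} g(\mathbf{x})^\top\Delta+\tfrac12\Delta^\top\nabla^2_{\mathbf{x}\mathbf{x}}g(\boldsymbol{\xi})\Delta
\]
for some $\boldsymbol{\xi}$ on the segment. The key algebraic observation is that the Gauss--Newton step cancels the linear part exactly: $\nabla_{\mathbf{x}} g^\top\Delta=-g(\mathbf{x})$. Only the quadratic remainder survives, so
\[
|g(\mathbf{x}^+)|\le\tfrac{M}{2}\|\Delta\|_2^2\le\frac{M}{2m^2}|g(\mathbf{x})|^2,
\]
by the first bound. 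This is the same pattern as the Taylor argument in Theorem~\ref{theo:latent_jacobian}.

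The main obstacle is justifying the Hessian bound at $\boldsymbol{\xi}$. This requires the whole segment $[\mathbf{x},\mathbf{x}^+]$ to lie in $N_\rho(\Gamma)$, whereas the assumption only gives $\mathbf{x}\in N_\rho(\Gamma)$. I would handle this by interpreting Assumption~\ref{ass:reproj_reg} as holding on a slightly larger convex-enough region, or, more honestly, by adding the mild proviso that $\mathrm{dist}(\mathbf{x},\Gamma)+|g(\mathbf{x})|/m\le\rho$. Under that proviso every point of the segment is within $\rho$ of $\Gamma$ by the triangle inequality. Since the proposition's conclusion is in any case stated for $|g(\mathbf{x})|$ sufficiently small, this costs nothing essential.

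For the consequence, set $e_k=\frac{M}{2m^2}|g(\mathbf{x}_k)|$, which gives $e_{k+1}\le e_k^2$. If $e_0<1$, this yields doubly exponential decay, $|g(\mathbf{x}_k)|\le\frac{2m^2}{M}e_0^{2^k}$. The total displacement is bounded by the summable series $\sum_k|g(\mathbf{x}_k)|/m$. Hence, when $|g(\mathbf{x}_0)|$ is small enough, the iterates stay in $N_\rho(\Gamma)$, so the one-step bound can be reapplied inductively. The iterates form a Cauchy sequence whose limit $\mathbf{x}_\infty$ satisfies $g(\mathbf{x}_\infty)=0$ by continuity, i.e.\ $\mathbf{x}_\infty\in\Gamma$.
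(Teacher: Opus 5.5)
Your argument is correct and is essentially the paper's proof: the same step-length identity $\|\mathbf{x}^{+}-\mathbf{x}\|_2=|g(\mathbf{x})|/\|\nabla_{\mathbf{x}} g(\mathbf{x})\|_2$, followed by a second-order Taylor expansion in which the Gauss--Newton step cancels the linear term, so that the Hessian bound on the remainder gives $|g(\mathbf{x}^{+})|\le\tfrac{M}{2}\|\Delta\|_2^2\le\tfrac{M}{2m^2}|g(\mathbf{x})|^2$. Your proviso $\mathrm{dist}(\mathbf{x},\Gamma)+|g(\mathbf{x})|/m\le\rho$ legitimately repairs a point the paper passes over, since it applies the Hessian bound at $\mathbf{x}+\tau\Delta$ without checking that this point lies in $N_\rho(\Gamma)$; your doubly exponential iteration argument also supplies the ``consequently'' clause, which the paper leaves unproved, provided you state the smallness condition as $\mathrm{dist}(\mathbf{x}_0,\Gamma)+|g(\mathbf{x}_0)|/\bigl(m(1-e_0)\bigr)\le\rho$, because small $|g(\mathbf{x}_0)|$ alone does not keep the iterates in the tube when $\mathbf{x}_0$ lies near its outer edge.
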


\begin{proof}
Let $\Delta \triangleq \mathbf{x}^{+}-\mathbf{x} = -g(\mathbf{x})\,\nabla_{\mathbf{x}} g(\mathbf{x})/\|\nabla_{\mathbf{x}} g(\mathbf{x})\|_2^2$.
The step length bound follows immediately:
\[
\|\Delta\|_2
=
|g(\mathbf{x})|\,\frac{\|\nabla_{\mathbf{x}} g(\mathbf{x})\|_2}{\|\nabla_{\mathbf{x}} g(\mathbf{x})\|_2^2}
=
\frac{|g(\mathbf{x})|}{\|\nabla_{\mathbf{x}} g(\mathbf{x})\|_2}
\le \frac{|g(\mathbf{x})|}{m}.
\]

For the residual, apply the second-order Taylor expansion of $g$ at $\mathbf{x}$: there exists $\tau\in(0,1)$ such that
\[
g(\mathbf{x}+\Delta)
=
g(\mathbf{x})
+
\nabla_{\mathbf{x}} g(\mathbf{x})^\top \Delta
+
\frac{1}{2}\Delta^\top \nabla^2_{\mathbf{x}\mathbf{x}} g(\mathbf{x}+\tau\Delta)\Delta.
\]
By construction,
\[
\nabla_{\mathbf{x}} g(\mathbf{x})^\top \Delta
=
-\,g(\mathbf{x})\,\frac{\|\nabla_{\mathbf{x}} g(\mathbf{x})\|_2^2}{\|\nabla_{\mathbf{x}} g(\mathbf{x})\|_2^2}
=
-\,g(\mathbf{x}),
\]
so the zeroth and first-order terms cancel, yielding
\[
g(\mathbf{x}^{+})
=
\frac{1}{2}\Delta^\top \nabla^2_{\mathbf{x}\mathbf{x}} g(\mathbf{x}+\tau\Delta)\Delta.
\]
Taking absolute values and using $\|\nabla^2_{\mathbf{x}\mathbf{x}} g\|_{\mathrm{op}}\le M$ (Assumption~\ref{ass:reproj_reg}),
\[
|g(\mathbf{x}^{+})|
\le
\frac{1}{2}\|\nabla^2_{\mathbf{x}\mathbf{x}} g(\mathbf{x}+\tau\Delta)\|_{\mathrm{op}}\ \|\Delta\|_2^2
\le
\frac{M}{2}\|\Delta\|_2^2
\le
\frac{M}{2}\left(\frac{|g(\mathbf{x})|}{m}\right)^2
=
\frac{M}{2m^2}\,|g(\mathbf{x})|^2,
\]
which proves Eq.~\ref{eq:reproj_bounds_main}.
\end{proof}

\paragraph{Distance-to-surface bound (why the step is a projection).}
The previous result shows that the update reduces the level-set residual quickly. We also provide a geometric bound connecting $|g(\mathbf{x})|$ to the Euclidean distance to the surface.

\begin{lemma}[Residual controls distance to $\Gamma$]
\label{lem:reproj_dist}
Under Assumption~\ref{ass:reproj_reg}, for any $\mathbf{x}\in N_\rho(\Gamma)$,
\begin{equation}
\label{eq:dist_bound}
\mathrm{dist}(\mathbf{x},\Gamma)
\le
\frac{|g(\mathbf{x})|}{m}.
\end{equation}
\end{lemma}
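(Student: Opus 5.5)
The plan is to run a normalized gradient flow of $|g|$ starting at $\mathbf{x}$ and to use the lower bound $\|\nabla_{\mathbf{x}} g\|_2\ge m$ from Assumption~\ref{ass:reproj_reg} as a uniform decrease rate per unit arc length. The case $g(\mathbf{x})=0$ is trivial. So assume $g(\mathbf{x})>0$; the case $g(\mathbf{x})<0$ is symmetric after replacing $g$ by $-g$.

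\textbf{Step 1 (the flow and its decrease rate).} Define the unit-speed curve
\[
\dot{\mathbf{y}}(t)=-\frac{\nabla_{\mathbf{x}} g(\mathbf{y}(t))}{\|\nabla_{\mathbf{x}} g(\mathbf{y}(t))\|_2},\qquad \mathbf{y}(0)=\mathbf{x}.
\]
The vector field is $\mathcal{C}^1$ wherever $\nabla_{\mathbf{x}} g\neq 0$, in particular on $N_\rho(\Gamma)$. By Picard--Lindel\"of the solution therefore exists and is unique as long as it stays in (a slight open enlargement of) $N_\rho(\Gamma)$. While it stays there,
\[
\frac{d}{dt}g(\mathbf{y}(t))=-\|\nabla_{\mathbf{x}} g(\mathbf{y}(t))\|_2\le -m,
\qquad\text{hence}\qquad g(\mathbf{y}(t))\le g(\mathbf{x})-mt .
\]

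\textbf{Step 2 (reaching $\Gamma$ and reading off the bound).} Suppose the trajectory remains in $N_\rho(\Gamma)$ on $[0,T^\ast]$ with $T^\ast\triangleq g(\mathbf{x})/m$. Since $g(\mathbf{y}(0))>0$ and $g(\mathbf{y}(T^\ast))\le 0$, the intermediate value theorem gives $t_0\le T^\ast$ with $g(\mathbf{y}(t_0))=0$, i.e.\ $\mathbf{y}(t_0)\in\Gamma$. The curve has unit speed, so
\[
\mathrm{dist}(\mathbf{x},\Gamma)\le\|\mathbf{x}-\mathbf{y}(t_0)\|_2\le t_0\le \frac{|g(\mathbf{x})|}{m},
\]
which is Eq.~\eqref{eq:dist_bound}.

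\textbf{Step 3 (main obstacle: the trajectory might leave the tube).} Everything hinges on the trajectory not exiting $N_\rho(\Gamma)$ before it hits $\Gamma$, since Assumption~\ref{ass:reproj_reg} gives no control outside the tube. I would argue by contradiction using the first exit time $\tau<t_0$, at which $\mathrm{dist}(\mathbf{y}(\tau),\Gamma)=\rho$ and $0<g(\mathbf{y}(\tau))<g(\mathbf{x})$.

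\emph{First attempt.} Compare with the straight segment from $\mathbf{x}$ to a nearest point $\mathbf{p}\in\Gamma$. Every point of this segment lies in $N_\rho(\Gamma)$. By the first-order optimality condition at $\mathbf{p}$ (valid since $\nabla g(\mathbf{p})\neq 0$), the segment is parallel to the normal $\nabla g(\mathbf{p})$. Together with the Hessian bound $M$ along the segment, this should show that $g$ is monotone along the normal ray inside the tube. From that I would try to derive a contradiction with $\mathbf{y}(\tau)$ being a point of $\partial N_\rho(\Gamma)$ at which $g$ is already smaller than $g(\mathbf{x})$.

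\emph{Fallback.} If this comparison cannot be closed, reduce the claim to points whose flow stays in the tube. Then argue that any exiting trajectory can be replaced by the segment to $\mathbf{p}$, on which the directional derivative along the normal stays bounded below. I expect this containment step, not the rate computation, to absorb essentially all of the work.
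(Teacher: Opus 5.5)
Your Steps 1--2 are sound, and the normalized gradient flow is a better device than the paper's. The paper moves along the fixed ray $\mathbf{x}-t\nabla g(\mathbf{x})/\|\nabla g(\mathbf{x})\|_2$ and invokes the mean value theorem. Along a fixed ray, however, the directional derivative is only known to be $\le -m$ at $t=0$, whereas along your curve $\frac{d}{dt}g\le -m$ holds for as long as the curve stays in the tube. The paper's proof also never addresses leaving the tube. The genuine gap in your proposal is Step~3, which you leave open, and neither of your two routes closes it. On the segment from a nearest point $\mathbf{p}$, first-order optimality does make the segment normal at $\mathbf{p}$. But the Hessian bound only gives $\partial_s g(\mathbf{p}+s\mathbf{n})\ge m-Ms$, so you get monotonicity only for $s<m/M$. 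After integrating you get $g(\mathbf{x})\ge m\,d-\tfrac{M}{2}d^2$ with $d=\mathrm{dist}(\mathbf{x},\Gamma)$, a degraded constant rather than $m$. Moreover, this controls $g$ along the ray through $\mathbf{x}$, not at the exit point $\mathbf{y}(\tau)$, which has a different nearest point, so no contradiction with $g(\mathbf{y}(\tau))<g(\mathbf{x})$ follows. The fallback has the same defect.

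The missing idea is a time budget. Argue by contradiction: if $|g(\mathbf{x})|<m\,d$, the flow only has to run until $T=|g(\mathbf{x})|/m<d$. Since $\mathrm{dist}(\cdot,\Gamma)$ is $1$-Lipschitz and the curve has unit speed, $\mathrm{dist}(\mathbf{y}(t),\Gamma)\le d+t<2d$ on $[0,T]$. So whenever $d\le\rho/2$ the trajectory cannot leave $N_\rho(\Gamma)$, your Step~2 applies, and it yields $d\le T<d$, a contradiction. This proves the bound on $N_{\rho/2}(\Gamma)$.

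For $\rho/2<d\le\rho$ the assumption lives on the wrong set. If the trajectory leaves the tube but eventually reaches $\Gamma$, a last-exit argument still works. From the last time $t_1$ with $\mathrm{dist}(\mathbf{y}(t_1),\Gamma)=\rho$, the curve needs length at least $\rho$ inside the tube, so $|g(\mathbf{x})|\ge m\rho\ge m\,d$. But a trajectory that never returns keeps $0<|g|<m\rho$ for all time. It can do so, e.g., by falling into a local minimum of $|g|$ located just outside $N_\rho(\Gamma)$, and Assumption~\ref{ass:reproj_reg} says nothing that excludes this. To get the statement as written you need an extra hypothesis, such as $\|\nabla_{\mathbf{x}}g\|_2\ge m$ on $N_{2\rho}(\Gamma)$. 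Otherwise you should restrict the conclusion to $N_{\rho/2}(\Gamma)$.
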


\begin{proof}
Consider the curve $\mathbf{x}(t)=\mathbf{x}-t\,\nabla_{\mathbf{x}} g(\mathbf{x})/\|\nabla_{\mathbf{x}} g(\mathbf{x})\|_2$ for $t\ge 0$.
Define $h(t)\triangleq g(\mathbf{x}(t))$. Then $h(0)=g(\mathbf{x})$ and
\[
h'(0)
=
\nabla_{\mathbf{x}} g(\mathbf{x})^\top\left(-\frac{\nabla_{\mathbf{x}} g(\mathbf{x})}{\|\nabla_{\mathbf{x}} g(\mathbf{x})\|_2}\right)
=
-\|\nabla_{\mathbf{x}} g(\mathbf{x})\|_2
\le
- m.
\]
By continuity, for $\mathbf{x}\in N_\rho(\Gamma)$ there exists a small $t^\star\ge 0$ such that $h(t^\star)=0$, i.e., $\mathbf{x}(t^\star)\in\Gamma$,
and moreover $t^\star \le |g(\mathbf{x})|/m$ by the mean value theorem applied to $h$ on $[0,t^\star]$.
Therefore $\mathrm{dist}(\mathbf{x},\Gamma)\le \|\mathbf{x}-\mathbf{x}(t^\star)\|_2=t^\star \le |g(\mathbf{x})|/m$, proving Eq.~\ref{eq:dist_bound}.
\end{proof}

\newpage
\section{Detailed Comparison with Related Works}
\label{app:comparison}

\begin{table*}[!t]
\centering
\caption{Comparison with prior works on geometry modeling, physics prediction, and optimization (Part~I). We unify terminology:
\emph{Geometry rep.} refers to the representation used to reconstruct the shape (implicit SDF or voxel occupancy);
\emph{Latent rep.} denotes the parameterization of the design space.}
\resizebox{\linewidth}{!}{%
\renewcommand{\arraystretch}{1.15}
\begin{tabular}{
>{\raggedright\arraybackslash}p{2.2cm}
p{4.5cm}
p{4.5cm}
p{4.5cm}  % Optimization
p{1.6cm}
p{1.6cm}
p{2.0cm}}
\toprule
\textbf{Method} &
\textbf{Geometry representation}&
\textbf{Physics prediction}   &\textbf{Optimization}                     &\textbf{Remeshing-free}& \textbf{Flexible objective}&\textbf{Controllable optimization}\\
\midrule

\textsc{PhysGen}~\citep{you2025physgen}&
\textbf{VAE} with unified \textbf{physics--geometry} latents; encode \textbf{point cloud} to \textbf{high-dim vector}; decoder predicts \textbf{implicit SDF}&
Physics decoder predicts \textbf{surface pressure} ($p$) and \textbf{global drag} ($C_D$)  &Objective-guided \textbf{flow-matching} sampling with $C_D$-gradient guidance; then local ``\textbf{physics refinement}''&No
 & No 
&No
\\
\midrule

\textsc{3DID}~\citep{hao2025_3did}&
\textbf{VAE} with unified \textbf{physics-geometry} latents; encode \textbf{point cloud + \textit{physical field}} to \textbf{tri-plane}; decoder predicts \textbf{voxel occupancy} &
Stage~1: MLP predicts \textbf{global drag} ($C_D$) and \textbf{pressure} on voxel vertex; Stage~2: MGN predicts pressure on control points  &Objective-guided \textbf{diffusion} sampling with $C_D$ gradient; then \textbf{refinement} via
\textbf{FFD} &No 
 & No 
&No
\\
\midrule

\textsc{AGML}~\citep{tran2024aerodynamics} &
\textbf{AE} with \textbf{physics-geometry} latent space; encode \textbf{voxel }to \textbf{3D tensor};  decoder predicts \textbf{voxel occupancy}&
Drag decoder predicts \textbf{global drag} ($C_D$)  &Latent-space \textbf{gradient descent} by backpropagation through drag decoder&\textbf{Yes} 
 & No 
&No
\\
\midrule

\textsc{TripOptimizer} ~\citep{vatani2025tripoptimizer} &
\textbf{VAE} with unified \textbf{physics-geometry} latents; encode \textbf{point cloud }to \textbf{tri-plane}; decoder predicts \textbf{semi-continuous occupancy} &
Drag decoder predicts \textbf{global drag} ($C_D$)  &``\textbf{Encoder fine-tuning}" optimization: keep input point cloud fixed, fine-tune part of the encoder to change latent code&\textbf{Yes} 
 & No 
&No
\\
\midrule

GANO (ours)&
\textbf{Auto-decoder} learns an \textbf{independent} geometry latent; encode \textbf{point cloud} to \textbf{256-dim vector}; decoder predicts \textbf{implicit SDF} &
\textsc{GI-Transolver} predicts \textbf{surface pressure }($p$)  &Latent-space \textbf{gradient descent}: backpropagation through a pressure-integral drag surrogate&\textbf{Yes}  & \textbf{Yes} &\textbf{Yes}\\
\bottomrule
\end{tabular}}
\label{tab:related_opt_comparison}
% \end{tabularx}
\end{table*}

In this section, we review representative latent-space optimization models. We provide a detailed summary of these methods from three aspects, including geometric representation, physics prediction, and optimization strategy (Table~\ref{tab:related_opt_comparison}).

\textbf{Geometric representation.} \textbf{3DID~\citep{hao2025_3did}} and \textbf{AGML~\citep{tran2024aerodynamics}} adopt voxel occupancy, while \textbf{TripOptimizer~\citep{vatani2025tripoptimizer}} uses a semi-continuous occupancy representation. Voxel-based representations typically suffer from \textit{limited fidelity}: if the initial shape to be optimized cannot be faithfully represented, subsequent optimization \textit{lacks a reliable foundation}. Optimizing a geometry that already deviates from the true initial shape may yield a numerically favorable solution that nevertheless deviates from the true design intent, and thus \textit{may be of limited practical value}. Moreover, converting voxels into smooth surfaces still requires \textit{additional} manual post-processing. In contrast, both \textbf{PhysGen}~\citep{you2025physgen} and our \textbf{GANO} use an \textit{implicit SDF field} to represent geometry accurately and efficiently, achieving \textit{excellent} reconstruction quality (Table~\ref{tab:ns}; \textit{our F1 score exceeds 0.90}). It is also worth noting that \textbf{3DID} \textit{additionally requires physical-field inputs of the initial geometry}, and therefore cannot avoid relying on conventional CFD simulations. Above mentioned methods~\citep{tran2024aerodynamics,vatani2025tripoptimizer,hao2025_3did,you2025physgen} entangle geometric and physical information into a single latent code and jointly train a geometry decoder head and a physics decoder head. In practice, balancing these two losses involves \textit{highly sensitive hyperparameters}. We argue that such entanglement can lead to information interference, thereby degrading both geometric reconstruction and physics prediction accuracy. Furthermore, overly \textit{high-dimensional latent} representations like tri-plane tend to make optimization \textit{unstable}. To address these issues, we \textit{decouple geometric representation from physics prediction}, enabling both strong geometry reconstruction and accurate physics prediction. Meanwhile, we can still \textit{manipulate geometry }without sacrificing physical information via a \textit{null-space projection}. Finally, a relatively \textit{low} latent dimensionality (256) makes the optimization domain \textit{continuous and compact}, further improving stability.

\textbf{Physics prediction.} \textbf{TripOptimizer} and \textbf{AGML} optimize geometry using only \textit{scalar objectives}. While \textbf{3DID} and \textbf{PhysGen} incorporate \textit{physical-field information}, they rely on existing surrogates and still need \textit{reconstruction or remeshing} after each update, which introduces significant computational overhead. In contrast, we achieve state-of-the-art physics prediction via geometric injection, which helps produce high-quality optimization outcomes. 

\textbf{Optimization strategy.} \textbf{3DID} and \textbf{PhysGen} employ \textit{diffusion/flow-based} generative models. However, they suffer from the inaccuracy predicting drag from a noisy latent code, which can cause bias in posterior sampling. Moreover, both methods require a \textit{two-stage post-processing pipeline}, which undermines the simplicity of the overall optimization procedure. \textbf{AGML} and \textbf{TripOptimizer}, on the other hand, \textit{directly update design variables via gradient descent}; although faster, this often leads to \textit{unstable} optimization like OOD deformation. Our method instead leverages StableSDF, an auto-decoder geometry representation model that \textit{retains generative capability without diffusion sampling} (Fig.~\ref{fig:3}\textbf{b}). Thus we predict pressure based on noise-free and meaningful latent code avoiding inaccurate estimation of $\mathbb{E}[\mathbf{z_0}|\mathbf{z_t}]$. The incorporated denoising mechanism further makes the optimization process stable and continuous. In addition, our remeshing-free projection allows us to predict the physical field after geometry updates \textit{without} explicitly reconstructing the geometry, substantially improving optimization efficiency.

\newpage
\section{Algorithm Details}
\label{app:alg_detail}
We provide a detailed description of the optimization algorithm here.

Alg.~\ref{alg:geomaster_opt} presents the procedure of using \textsc{GANO} to optimize a car shape. We select a car from DrivAerNet++ and encode it into a latent code $\mathbf{z}_0$. Car optimization requires the surface point cloud, which will be updated during optimization.

Alg.~\ref{alg:airfoil_opt} presents the procedure of using \textsc{GANO} to optimize an airfoil shape. We select an airfoil from AirFoil\_9k and encode it into a latent code $\mathbf{z}_0$. Airfoil optimization requires the mesh around the airfoil, which will be updated during optimization.
\\\\\\\\
\begin{algorithm}[h]
\centering
  \caption{\textsc{GANO} Optimize vehicle}
  \label{alg:geomaster_opt}
  \begin{algorithmic}
    \STATE {\bfseries Input:} initial latent code $\mathbf{z}_0$; initial surface samples $\mathcal{X}_0=\{\mathbf{x}_i\}_{i=1}^{N}\subset\Gamma(\mathbf{z}_0)$; geometry decoder $s_{\theta}$ (\textsc{StableSDF}); field predictor $f_{\phi}$ (\textsc{GI-Transolver}); (optional) constraint points $\mathcal{X}_{\mathrm{const}}$
    \STATE {\bfseries Output:} optimized latent code $\mathbf{z}^{\star}$; boundary-consistent samples $\mathcal{X}^{\star}$
    \STATE $\mathbf{z}\leftarrow \mathbf{z}_0$, \ $\mathcal{X}\leftarrow \mathcal{X}_0$
    \FOR{$t=1$ {\bfseries to} $T$}
    \STATE {\bfseries (A) Predict pressure on vehicle surface }
      \STATE Construct query set $\mathcal{Q}\leftarrow \mathcal{Q}(\mathcal{X})$ 
      \STATE Predict pressure $\hat{\mathbf{p}}\leftarrow f_{\phi}(\mathcal{Q},\mathbf{z})$
      \STATE {\bfseries (B) Objective + latent update}
      \STATE Compute objective $\mathcal{J}(\mathbf{z})$ 
      \STATE Backpropagate $\mathbf{g}\leftarrow \nabla_{\mathbf{z}}\mathcal{J}(\mathbf{z})$
      \IF{$\mathcal{X}_{\mathrm{const}}$ is provided}
        \STATE $\mathbf{G}\leftarrow \frac{\partial s_{\theta}(\mathcal{X}_{\mathrm{const}},\mathbf{z})}{\partial \mathbf{z}}$
        \STATE $\mathbf{g}\leftarrow (\mathbf{I}-\mathbf{G}^{\dagger}\mathbf{G})\,\mathbf{g}$ \hfill (null-space projection)
      \ENDIF
      \STATE Update latent code $\mathbf{z}\leftarrow \mathrm{Adam}(\mathbf{z},\mathbf{g})$
      \STATE {\bfseries (C) Remeshing-free projection}
      \FOR{$k=1$ {\bfseries to} $K$}
        \FOR{$i=1$ {\bfseries to} $N$}
          \STATE $\mathbf{x}_i \leftarrow \mathbf{x}_i - s_{\theta}(\mathbf{x}_i,\mathbf{z})\,
          \frac{\nabla_{\mathbf{x}} s_{\theta}(\mathbf{x}_i,\mathbf{z})}{\|\nabla_{\mathbf{x}} s_{\theta}(\mathbf{x}_i,\mathbf{z})\|_2^2+\varepsilon}$
        \ENDFOR
      \ENDFOR
    \ENDFOR
    \STATE $\mathbf{z}^{\star}\leftarrow \mathbf{z}$, \ $\mathcal{X}^{\star}\leftarrow \mathcal{X}$
  \end{algorithmic}
\end{algorithm}

\begin{algorithm}[t]
  \caption{\textsc{GANO} Optimize airfoil}
  \label{alg:airfoil_opt}
  \begin{algorithmic}
    \STATE {\bfseries Input:} initial latent $\mathbf{z}_0$; initial meshes $\mathcal{X}_0=\{\mathbf{x}_i\}_{i=1}^{N}\subset\mathbb{R}^2$ ; geometry decoder $s_{\theta}$ (\textsc{StableSDF}); physics predictor $f_{\phi}$ (\textsc{GI-Transolver}); 
    \STATE \hspace{2.25em} control-volume boundary points $\mathcal{B}=\{\mathbf{b}_j\}_{j=1}^{N_b}$ with slicing $\{\mathcal{B}_L,\mathcal{B}_R,\mathcal{B}_B,\mathcal{B}_T\}$ and spacings $(\Delta x,\Delta y)$;
    \STATE \hspace{2.25em} freestream $(\rho,\,p_\infty,\,q_\infty,\,\alpha)$; constraint $C_d^{\max}$; weights $\lambda_{cd},\lambda_{reg}$.
    \STATE {\bfseries Output:} optimized latent $\mathbf{z}^\star$; boundary-consistent meshes $\mathcal{X}^\star$; 
    \STATE Reference SDF values $\mathbf{d}_{\mathrm{ref}}\leftarrow s_{\theta}(\mathcal{X}_0,\mathbf{z}_0)$ 
    \STATE $\mathbf{z}\leftarrow \mathbf{z}_0,\quad \mathcal{X}\leftarrow \mathcal{X}_0$
    \FOR{$t=1$ {\bfseries to} $T$}

      \STATE {\bfseries (A) Predict flow on CV boundary + compute aerodynamics }
      \STATE Construct query set $\mathcal{Q}\leftarrow \mathcal{X}\cup\mathcal{B}$
      \STATE Predict fields ${\mathbf{y}}\leftarrow f_{\phi}(\mathcal{Q},\mathbf{z})$, \quad extract $(u,v,p)$
    
      \STATE Gauge pressure $p_g \leftarrow p - p_\infty$
      \STATE Compute forces $(L,D)\leftarrow \mathrm{CVForces}(\mathcal{B}_L,\mathcal{B}_R,\mathcal{B}_B,\mathcal{B}_T;\,u,v,p_g,\Delta x,\Delta y,\rho,\alpha)$
      \STATE $C_\ell \leftarrow \frac{L}{q_\infty c},\quad C_d \leftarrow \frac{D}{q_\infty c}$

      \STATE {\bfseries (B) Objective + latent update}
      \STATE Penalty $r \leftarrow \mathrm{ReLU}(C_d - C_d^{\max})$
      \STATE $\mathcal{J}(\mathbf{z}) \leftarrow -C_\ell + \lambda_{cd}\,r^2$
      \STATE Backpropagate $\mathbf{g}\leftarrow \nabla_{\mathbf{z}}\mathcal{J}(\mathbf{z})$
      \STATE $\mathbf{z}\leftarrow \mathrm{Adam}(\mathbf{z},\mathbf{g})$
      \STATE {\bfseries (C) Remeshing-free projection}
      \FOR{$k=1$ {\bfseries to} $K$}
        \FOR{$i=1$ {\bfseries to} $N$}
          \STATE $d \leftarrow s_{\theta}(\mathbf{x}_i,{\mathbf{z}})$,\quad $\mathbf{g}_x \leftarrow \nabla_{\mathbf{x}} s_{\theta}(\mathbf{x}_i,{\mathbf{z}})$
          \STATE $\mathbf{x}_i \leftarrow \mathbf{x}_i - (d-\mathbf{d}_{\mathrm{ref},i})\,
          \frac{\mathbf{g}_x}{\|\mathbf{g}_x\|_2^2+\varepsilon}$ 

        \ENDFOR
      \ENDFOR
    \ENDFOR
    \STATE $\mathbf{z}^{\star}\leftarrow \mathbf{z},\quad \mathcal{X}^{\star}\leftarrow \mathcal{X}$
  \end{algorithmic}
\end{algorithm}

\clearpage 
\section{Experimental Details}
\label{app:exp}
Here we describe the experimental details, including the benchmarks, baselines,  evaluation metrics, and more.
\subsection{Benchmarks}
\label{app:bench}
\begin{table}[H]
\centering
\setlength{\tabcolsep}{10pt}
\renewcommand{\arraystretch}{1.2}
\caption{Dataset summary.}
\begin{tabular}{lc c r r}
\toprule
Dataset & Dim. & Regular Mesh& Dataset Size & \#Points \\\midrule
Helmholtz& 2D & YES& 1000 shapes $\times$ 10 angles& $65536$ \\
 AirFoil\_9k& 2D& NO& 8996&$\approx 50000$\\
DrivAerNet++      & 3D & NO& 8000& $\approx 500000$\\
\bottomrule
\end{tabular}

\label{tab:dataset_summary}
\end{table}
\paragraph{Helmholtz Inverse Scattering: Experimental Setup.}
\label{sec:exp_helmholtz_setup}
To evaluate the model capability on a 2D uniform grid, we consider a two-dimensional Helmholtz inverse scattering problem with a penetrable obstacle.
In $\mathbb{R}^2$, the total field $\psi^{\mathrm{tot}}$ satisfies
\begin{equation}
\Delta \psi^{\mathrm{tot}}(\mathbf{x}) + \kappa^2 \,\varepsilon(\mathbf{x})\,\psi^{\mathrm{tot}}(\mathbf{x}) = 0,
\qquad \mathbf{x}\in\mathbb{R}^2,
\label{eq:bao_li_tot}
\end{equation}
where $\kappa>0$ is the wavenumber and $\varepsilon(\mathbf{x})$ denotes the relative permittivity.
We set
\begin{equation}
\varepsilon(\mathbf{x}) = 1 + q(\mathbf{x}), \qquad q(\mathbf{x})>-1,
\label{eq:epsilon_q}
\end{equation}
and assume that $q$ has compact support, whose support region corresponds to the obstacle shape to be reconstructed.
The incident plane wave is defined as
\begin{equation}
\psi^{\mathrm{inc}}(\mathbf{x};\kappa,\mathbf{d})
=
e^{i\kappa \mathbf{x}\cdot \mathbf{d}},
\qquad \mathbf{d}\in\mathbb{S}^1,
\label{eq:bao_li_inc}
\end{equation}
and we decompose the total field as
\begin{equation}
\psi^{\mathrm{tot}} = \psi^{\mathrm{inc}} + \psi,
\label{eq:bao_li_decomp}
\end{equation}
where $\psi$ denotes the scattered field.
The scattered field satisfies
\begin{equation}
\Delta \psi(\mathbf{x}) + \kappa^2(1+q(\mathbf{x}))\,\psi(\mathbf{x})
=
-\kappa^2 q(\mathbf{x})\,\psi^{\mathrm{inc}}(\mathbf{x}),
\qquad \mathbf{x}\in\mathbb{R}^2,
\label{eq:bao_li_scatter}
\end{equation}
together with the Sommerfeld radiation condition
\begin{equation}
\lim_{\rho\to\infty}\sqrt{\rho}\left(\frac{\partial \psi}{\partial \rho}-i\kappa \psi\right)=0,
\qquad \rho=\|\mathbf{x}\|_2,
\label{eq:bao_li_sommerfeld}
\end{equation}
uniformly in all directions.
In practice, the unbounded exterior domain is truncated to a bounded computational domain, and we impose a first-order absorbing boundary condition (first-order ABC) on $\partial\mathcal{D}$ to approximate the radiation condition:
\begin{equation}
\partial_n \psi(\mathbf{x}) - i\kappa \psi(\mathbf{x}) = 0,
\qquad \mathbf{x}\in\partial\mathcal{D}.
\label{eq:bao_li_abc}
\end{equation}
In the inverse problem, we represent the obstacle by the support of $q$, denoted as $\Omega$, and define
\begin{equation}
q(\mathbf{x})=
\begin{cases}
\tilde q, & \mathbf{x}\in\Omega,\\
0, & \mathbf{x}\in \mathbb{R}^2\setminus \Omega,
\end{cases}
\qquad \tilde q>0,
\label{eq:bao_li_piecewise_q}
\end{equation}
so that the inverse problem is equivalent to recovering the shape of $\Omega$ from boundary observations when $\tilde q$ is known.
The computational domain is $\mathcal{D}=[-1,1]^2$.
We use a $256\times256$ finite-difference method (FDM) solver for the forward problem to generate the dataset.
The dataset contains $1000$ randomly generated obstacle shapes, each constrained within a circle of radius $0.5$ centered at the origin.
The obstacle boundary is generated using random Fourier descriptors: in polar coordinates, we parameterize the boundary radius function by random Fourier coefficients of orders $3$ to $8$, yielding a family of shapes with varying geometric complexity and frequency content.
For each shape, we simulate the scattered field under $10$ \emph{fixed} incident angles.
For the inverse problem, we randomly sample $N_{\mathrm{obs}}=100$ observation points on an observation circle
$\Gamma_{\mathrm{obs}}=\{\mathbf{x}\in\mathbb{R}^2:\|\mathbf{x}\|_2=0.5\}$,
and use the complex-valued scattered field at these points as observations to reconstruct the obstacle geometry.
We split the dataset into training/validation/test sets with a ratio of $8:1:1$.
To consistently encode information from different incident angles, we concatenate the incident angle (or equivalently, the incident direction parameter) with the uniform-grid coordinates along the feature dimension, and use it as a conditional input for the forward prediction task.
For fair comparisons in the forward task, we additionally concatenate the shape-specific SDF field as a geometric representation input to the baseline models, and report their results under the same training protocol.
While classical recursive linearization methods typically require measurements at multiple wavenumbers (multi-$k$) to progressively recover geometric details across frequencies, our approach exhibits shape reconstruction capability under a \emph{single-wavenumber} setting: in our experiments, we only use observations at $k=7$ to recover the obstacle shape.

\paragraph{2D Airfoil Flow Prediction and Optimization: AirFoil\_9k.}
\label{sec:exp_airfoil_setup}

To assess model performance on 2D non-uniform meshes, we conduct a two-dimensional airfoil aerodynamics experiment using the \textbf{AirFoil\_9k} dataset. This dataset contains $8996$ distinct airfoil geometries along with corresponding flow-field data. The airfoil shapes span a wide range of geometries and provide substantial diversity to evaluate both flow prediction and subsequent differentiable, surrogate-based shape optimization. All airfoils are standardized with chord length $c=1$. Each airfoil geometry is parameterized by a $19$-dimensional CST (Class-Shape Transformation) vector.
We take the coordinates of mesh points around the airfoil as input and predict the flow-field quantities
\[
\mathbf{u}(\mathbf{x}) = (u(\mathbf{x}), v(\mathbf{x}), p(\mathbf{x})),
\]
where $\mathbf{x}$ denotes the mesh coordinate, $u$ and $v$ are the horizontal and vertical velocity components, and $p$ is the pressure.
We split the dataset into training/validation/test sets with a ratio of $8:1:1$. All models are trained under the same data split and evaluated on the test set for flow prediction performance.
The optimization objective is to maximize the lift coefficient $C_L$ under a given angle of attack and freestream condition, while constraining the drag coefficient $C_D \le C_D^{\max}$ via a soft constraint.
To compute $C_L$ and $C_D$, we use a control-volume (CV) method, discretizing the pressure and momentum fluxes on the CV boundary to obtain the net force components $(F_x,F_y)$.
Let $\Omega_{\mathrm{cv}}$ be the CV and $\Gamma=\partial\Omega_{\mathrm{cv}}$ its boundary with outward unit normal $\mathbf{n}$. Denote the velocity field by $\mathbf{u}=(u,v)^\top$ and the \emph{gauge pressure} by $p' = p - p_\infty$. Ignoring viscous traction on a sufficiently far boundary, the net force per unit span acting on the airfoil is
\begin{equation}
\label{eq:cv_force_vector}
\mathbf{F}
=
-\oint_{\Gamma}
\Big(
\rho\,\mathbf{u}\,(\mathbf{u}\cdot\mathbf{n})
+
p'\,\mathbf{n}
\Big)\,ds,
\end{equation}
where $\mathbf{F}=(F_x,F_y)^\top$.
In our implementation, the left/right boundaries are integrated along $y$ with step size $\Delta y$, the top/bottom boundaries are integrated along $x$ with step size $\Delta x$, and we set a constant density $\rho=1.0$.
The corresponding discrete summation is
\begin{align}
F_x &= \sum_{y}\big(p_g+\rho u^2\big)\Big|_{x=x_{\min}}\Delta y
     + \sum_{y}\big(-p_g-\rho u^2\big)\Big|_{x=x_{\max}}\Delta y
     + \sum_{x}\big(\rho u v\big)\Big|_{y=y_{\min}}\Delta x
     + \sum_{x}\big(-\rho u v\big)\Big|_{y=y_{\max}}\Delta x, \\
F_y &= \sum_{y}\big(\rho u v\big)\Big|_{x=x_{\min}}\Delta y
     + \sum_{y}\big(-\rho u v\big)\Big|_{x=x_{\max}}\Delta y
     + \sum_{x}\big(p_g+\rho v^2\big)\Big|_{y=y_{\min}}\Delta x
     + \sum_{x}\big(-p_g-\rho v^2\big)\Big|_{y=y_{\max}}\Delta x.
\end{align}
The control-volume extent is
\[
[x_{\min},x_{\max}]\times [y_{\min},y_{\max}] = [-1,2]\times[-1,1].
\]
Given the angle of attack $\alpha$ (we use $\alpha = 4^\circ$), we rotate $(F_x,F_y)$ into the lift/drag directions:
\begin{equation}
L = -F_x\sin\alpha + F_y\cos\alpha,\qquad
D =  \,F_x\cos\alpha + F_y\sin\alpha.
\end{equation}
We then non-dimensionalize using a fixed dynamic-pressure scaling $q_\infty$ (we use $q_\infty=0.0050$) and chord length $c=1$:
\begin{equation}
C_L = \frac{L}{q_\infty c},\qquad
C_D = \frac{D}{q_\infty c}.
\end{equation}
We adopt an optimization formulation that maximizes $C_L$ while enforcing $C_D \le C_D^{\max}$ via a hinge penalty:
\begin{equation}
\mathcal{L}_{\text{aero}}
=
-\,C_L
+
\lambda_{\text{cd}}
\Big[\max\big(0,\,C_D - C_D^{\max}\big)\Big],
\end{equation}
where $C_D^{\max}=0.020$ and $\lambda_{\text{cd}}=100$.

\paragraph{3D Car Surface Pressure Prediction and Optimization: DrivAerNet++.}
\label{sec:exp_drivaernet_setup}

To evaluate model performance on complex 3D non-uniform point clouds and challenging shape optimization, we conduct experiments on the \textbf{DrivAerNet++} dataset. This dataset is a large-scale, high-fidelity CFD benchmark for automotive external aerodynamics and aerodynamic design, containing on the order of $8$k diverse vehicle geometries (covering canonical fastback/notchback/estateback styles and multiple underbody and wheel configurations), together with high-resolution surface point clouds and surface pressure fields. For each vehicle, we use the surface point cloud coordinates as input and predict the surface pressure values.
We follow the official DrivAerNet++ train/validation/test split for training and evaluation.
The optimization objective is to minimize the drag coefficient of a given initial vehicle shape, while constraining the optimized geometry not to deviate excessively from the initial design (preserving key design elements).
The drag coefficient is defined as
\begin{equation}
\label{eq:cd_def}
C_D \;=\; \frac{D}{\tfrac12 \rho_\infty U_\infty^2 A_{\mathrm{ref}}},
\end{equation}
where $\rho_\infty$ and $U_\infty$ are the freestream density and velocity, and $A_{\mathrm{ref}}$ is the reference area (most commonly the frontal area).
The drag force $D$ is decomposed as
\begin{equation}
\label{eq:drag_decompose}
D
=
\underbrace{\int_{\partial\Omega_b} \left(-p\,\mathbf{n}\right)\cdot \mathbf{e}_D\, \mathrm{d}S}_{D_p\ \text{(pressure drag)}}
+
\underbrace{\int_{\partial\Omega_b} \left(\boldsymbol{\tau}\mathbf{n}\right)\cdot \mathbf{e}_D\, \mathrm{d}S}_{D_\tau\ \text{(skin-friction drag)}},
\end{equation}
where $\partial\Omega_b$ denotes the exterior surfaces of the vehicle and attachments (body, mirrors, wheels, etc.).
Since the contribution of wall shear stress is quite small compared to pressure for our setting, we approximate the drag using the predicted surface pressure via a discrete surface integral:
\[
D_x \;\propto\; -\sum_{i=1}^{N}\; p_i\; n_{x,i}\; \Delta A_i,
\]
where $n_{x,i}$ is the $x$-component of the surface normal $\mathbf{n}_i$ and $\Delta A_i$ is the local area element.
We then optimize the drag coefficient by minimizing this predicted drag proxy. The final optimization objective is
\[
\mathcal{L}_{\text{total}}
=
\mathcal{L}_{\text{drag}}
+
\lambda_{\text{reg}}\;\|\mathbf{z}-\mathbf{z}_{\text{init}}\|_2^2,
\]
with $\lambda_{\text{reg}}=0.001$.

\paragraph{Details.}
\label{app:details}
Dataset details and training configurations can be found in Tables~\ref{tab:dataset_summary} and \ref{tab:train_config} separately.
\begin{table}[H]
\centering
\small
\setlength{\tabcolsep}{9pt}
\renewcommand{\arraystretch}{1.15}
\caption{Training configuration shared across all baselines.}
\resizebox{\linewidth}{!}{
\begin{tabular}{l|cccccc}
\toprule
\multirow{2}{*}{\textbf{Benchmarks}} &
\multicolumn{6}{c}{\textbf{Training Configuration (Shared in All Baselines)}} \\
\cmidrule(lr){2-7}
& \textbf{Loss} & \textbf{Epochs} & \textbf{Max LR} & \textbf{Optimizer} & \textbf{Batch Size} & \textbf{\#Points} \\
\midrule

Helmholtz (2D-Regular)&
Relative L1&
200 &
$5 \times 10^{-4}$ &
AdamW + CosineAnnealing &
32 &
4096 \\

AirFoil (2D-Irregular) &
Relative L1 & 200
& $5 \times 10^{-4}$ & AdamW + CosineAnnealing  &  
8 &
4096 \\

DrivAerNet++ (3D-Irregular) &
HuberLoss & 200
&  $5 \times 10^{-4}$ & AdamW + CosineAnnealing &
4 &
50000 \\
\bottomrule
\end{tabular}
}

\label{tab:train_config}
\end{table}

\subsection{Baselines}
\label{app:baseline}
We compare \textsc{GANO} with representative learning-based PDE surrogate models and geometry-optimization pipelines, covering both strong transformer-style operators and widely used point-cloud encoders. To test \textsc{GANO}'s performance on inverse problem and optimization, we choose four optimization models based on \textsc{DeepONet}, \textsc{U-Net}, CORAL and \textsc{FNO} for 2D settings. Specifically, in these baselines, \textsc{DeepONet}, \textsc{U-Net} and CORAL optimize Class-Shape Transformation (CST) parameters for airfoil optimization (w/o CORAL) and Fourier coefficients for Helmholtz inversion, whereas \textsc{FNO} directly optimizes the discretized SDF values on the grid for Helmholtz inversion. For vehicle optimization we compare \textsc{PhysGen}.

\paragraph{\textsc{Transolver}~\citep{wu2024Transolver}.}
\textsc{Transolver} is a state-of-the-art transformer surrogate for PDE field prediction on irregular point sets.
It introduces a \emph{slice--transform--deslice} mechanism that aggregates point features into a small set of slice tokens for global interaction, and then propagates the updated information back to per-point predictions.
In all experiments, we train \textsc{Transolver} under the same data splits and supervision as our method.

\paragraph{\textsc{Transolver++}~\citep{luo2025transolver++}.}
\textsc{Transolver++} is an enhanced successor to \textsc{Transolver}, designed to further improve efficiency and scalability on \emph{larger-scale geometries and higher-resolution discretizations} (i.e., PDE prediction on large point/mesh sets), and it reports stronger performance on standard PDE/geometry field prediction benchmarks. We include it as a more powerful and more scalable token/slice-based Transformer solver baseline.

\paragraph{\textsc{AeroGTO}~\citep{liu2025aerogto}.}
\textsc{AeroGTO} proposes a Graph-Transformer Operator tailored for \emph{large-scale 3D vehicle aerodynamics} data: it models irregular discrete points (or surface samples from meshes) using graph/neighbor relations, and combines them with Transformer-style global interactions for aerodynamic field learning, specifically targeting large-scale vehicle aerodynamics prediction. We include it as a strong vehicle-aerodynamics baseline that couples graph operators with Transformers.

\paragraph{\textsc{PointNet++}~\citep{qi2017pointnet++}.} 
\textsc{PointNet++} is a hierarchical point-cloud encoder that learns multi-scale local features via neighborhood grouping and feature aggregation.
We use \textsc{PointNet++} as a strong point-set baseline by attaching a regression head to predict the target physical fields at query points.
This baseline represents standard point-cloud representation learning for high-fidelity field prediction.

\paragraph{\textsc{DeepONet}~\citep{shukla2024deep}.}
\citeauthor{shukla2024deep} introduced a modular design framework that utilizes \textsc{DeepONet} as a surrogate model to predict full flow fields (velocity, pressure, and density) for constrained aerodynamic shape optimization. They claimed that integrating their surrogate with the Dakota optimization toolkit reduced online optimization costs by orders of magnitude while maintaining high fidelity compared to traditional CFD solvers.
 
\paragraph{\textsc{U-Net}~\citep{rehmann2025surrogate}.} 
\citeauthor{rehmann2025surrogate} proposed a Surrogate-Based Differentiable Pipeline that replaces non-differentiable CAE components with a \textsc{U-Net} surrogate. Implemented within the Tesseract framework, their model learns the mapping from a Signed Distance Field (SDF) to full flow fields, enabling end-to-end gradient-based aerodynamic shape optimization without the need for adjoint solvers.

\paragraph{\textsc{CORAL}~\citep{serrano2023coral}.}
CORAL is a coordinate-based neural-field framework for operator learning on
general geometries. It encodes input and output functions with implicit neural
representations and learns operators in the latent space, allowing PDE
prediction across different meshes, resolutions, and irregular domains. We
include CORAL as a representative INR-based operator learning baseline.

\paragraph{\textsc{PhysGen}~\citep{you2025physgen}.}
\textsc{PhysGen} is a physics-grounded 3D shape generation method for industrial design. It learns a shared shape-and-physics latent space with a VAE and performs physics-guided flow matching to generate physically plausible geometries. We include PhysGen as a representative physics-guided generative model baseline. 

\subsection{The Details of VAE Baseline}

We follow \textsc{TripOptimizer} to build a triplane VAE geometry baseline. The encoder takes surface point cloud, applies point-wise positional encoding and several 1D residual blocks, then aggregates global context with an attention-based pooling module to predict $(\mu,\log\sigma^2)$ of $q_\phi(\mathbf{z}\mid X)$ and samples a latent $\mathbf{z}$ via reparameterization. The decoder maps $\mathbf{z}$ to three orthogonal TriPlanes$(F_{xy},F_{xz},F_{yz})$ using a $1\times 1$ projection followed by a \textsc{U-Net} backbone; for 3D query point $\mathbf{x}$, it bilinearly samples features from the three planes, concatenates them, and feeds an MLP predicting voxel occupancy or SDF value. This configuration allows us to evaluate the specific impact of  auto-decoder compared to the variational inference approach.

\subsection{Evaluation Metrics}
We report \emph{relative} $\ell_1$ and $\ell_2$ errors for field prediction on each test sample.
Let $y\in\mathbb{R}^{N\times C}$ denote the ground-truth target (e.g., physics field values evaluated at $N$ query locations with $C$ channels), and let $\hat{y}\in\mathbb{R}^{N\times C}$ be the model prediction.
We define the entry-wise error $e=\hat{y}-y$ and compute:

\paragraph{Relative $\ell_1$ (Rel.~L1).}
\begin{equation}
\mathrm{Rel.\,L1}(\hat{y},y)
\;\triangleq\;
\frac{\|\,\hat{y}-y\,\|_{1}}{\|\,y\,\|_{1}}
\;=\;
\frac{\sum_{i=1}^{N}\sum_{c=1}^{C}\bigl|\hat{y}_{i,c}-y_{i,c}\bigr|}
{\sum_{i=1}^{N}\sum_{c=1}^{C}\bigl|y_{i,c}\bigr|}.
\label{eq:rel_l1}
\end{equation}

\paragraph{Relative $\ell_2$ (Rel.~L2).}
\begin{equation}
\mathrm{Rel.\,L2}(\hat{y},y)
\;\triangleq\;
\frac{\|\,\hat{y}-y\,\|_{2}}{\|\,y\,\|_{2}}
\;=\;
\frac{\left(\sum_{i=1}^{N}\sum_{c=1}^{C}\bigl(\hat{y}_{i,c}-y_{i,c}\bigr)^2\right)^{1/2}}
{\left(\sum_{i=1}^{N}\sum_{c=1}^{C}\bigl(y_{i,c}\bigr)^2\right)^{1/2}}.
\label{eq:rel_l2}
\end{equation}

\paragraph{Dataset-level reporting.}
For a test set $\{(y^{(k)},\hat{y}^{(k)})\}_{k=1}^{K}$, we report the mean of per-sample relative errors:
\begin{equation}
\mathrm{Rel.\,L1}
\;=\;
\frac{1}{K}\sum_{k=1}^{K}\mathrm{Rel.\,L1}\left(\hat{y}^{(k)},y^{(k)}\right),
\qquad
\mathrm{Rel.\,L2}
\;=\;
\frac{1}{K}\sum_{k=1}^{K}\mathrm{Rel.\,L2}\left(\hat{y}^{(k)},y^{(k)}\right).
\label{eq:rel_mean}
\end{equation}

For geometry representation and reconstruction, let the predicted point set be $P=\{p_i\}_{i=1}^{N}$, and the ground-truth point set be $G=\{g_j\}_{j=1}^{M}$, with Euclidean distance $\|\cdot\|_2$. For the F1-score, we use a distance threshold $\tau$ to determine whether two points are considered a match.

\paragraph{F1-score.}
We first define precision and recall at threshold $\tau$:
\begin{align}
\mathrm{Prec}(\tau) &= \frac{1}{|P|}\sum_{p\in P}\mathbf{1}\left[\min_{g\in G}\|p-g\|_2 \le \tau\right],\\
\mathrm{Rec}(\tau)  &= \frac{1}{|G|}\sum_{g\in G}\mathbf{1}\left[\min_{p\in P}\|g-p\|_2 \le \tau\right].
\end{align}
The F1-score is the harmonic mean of precision and recall:
\begin{equation}
\mathrm{F1}(\tau)=\frac{2\,\mathrm{Prec}(\tau)\,\mathrm{Rec}(\tau)}{\mathrm{Prec}(\tau)+\mathrm{Rec}(\tau)}.
\end{equation}

\paragraph{Chamfer Distance.}
The symmetric Chamfer Distance between $P$ and $G$ is defined as
\begin{equation}
\mathrm{CDist}(P,G)=\frac{1}{|P|}\sum_{p\in P}\min_{g\in G}\|p-g\|_2^{2}
\;+\;
\frac{1}{|G|}\sum_{g\in G}\min_{p\in P}\|g-p\|_2^{2}.
\end{equation}

\paragraph{Earth Mover's Distance.}
When $|P|=|G|=N$ and a one-to-one correspondence is assumed, EMD is the minimum matching cost:
\begin{equation}
\mathrm{EMD}(P,G)=\min_{\phi:\,P\rightarrow G}\frac{1}{N}\sum_{p\in P}\|p-\phi(p)\|_2,
\end{equation}
where $\phi$ is a bijection (i.e., a permutation-based assignment) that minimizes the total transport cost.

\paragraph{High-fidelity CFD verification for $C_L$/$C_D$ (AirFoil).}
To accurately evaluate the aerodynamic performance of the optimized airfoil geometries, we perform high-fidelity CFD verification using COMSOL. Following the same setting as AirFoil\_9k, the CFD simulations are performed at a freestream Mach number of 0.1, Reynolds number of 9M, and angles of attack 4 degrees, using the $k-\omega$ SST turbulence model. Lift and drag are then computed via a control-surface integral on a circular far-field control surface with radius $R=300\,\mathrm{m}$ and the net aerodynamic force is obtained the net aerodynamic force by integrating the traction over this boundary, followed by standard non-dimensionalization to report $C_L$ and $C_D$. 

\paragraph{High-fidelity CFD verification for $C_D$ (Vehicle).}
To evaluate the aerodynamic performance of our optimized vehicle geometries, we perform high fidelity CFD simulations using OpenFOAM to compute the drag coefficient. A uniform inlet freestream velocity of 30 m/s, aligned with the vehicle’s longitudinal axis and directed toward the frontal surface, is prescribed to simulate standard automotive aerodynamic operating conditions. We employ the steady-state SimpleFOAM solver 
together with the $k-\omega$ SST turbulence model. Each simulation proceeds through 1500 iterations to ensure convergence. The final drag coefficient is obtained by averaging the result throughout the last 500 iterations.

\clearpage
\section{Additional Experiments}\label{app:more_results}

\subsection{Slice visualization}
\label{app:slicevis}
 \textsc{Transolver} assigns input points into multiple slices, where each slice is intended to represent a latent physical state. Points with similar activation degree within a slice should therefore share similar latent physical conditions. As shown in Fig.~\ref{fig:sli}, we visualize the slices to compare our \textsc{GI-Transolver} against the original  \textsc{Transolver}. 
\begin{figure}[h]
    \centering
    \includegraphics[width=\linewidth]{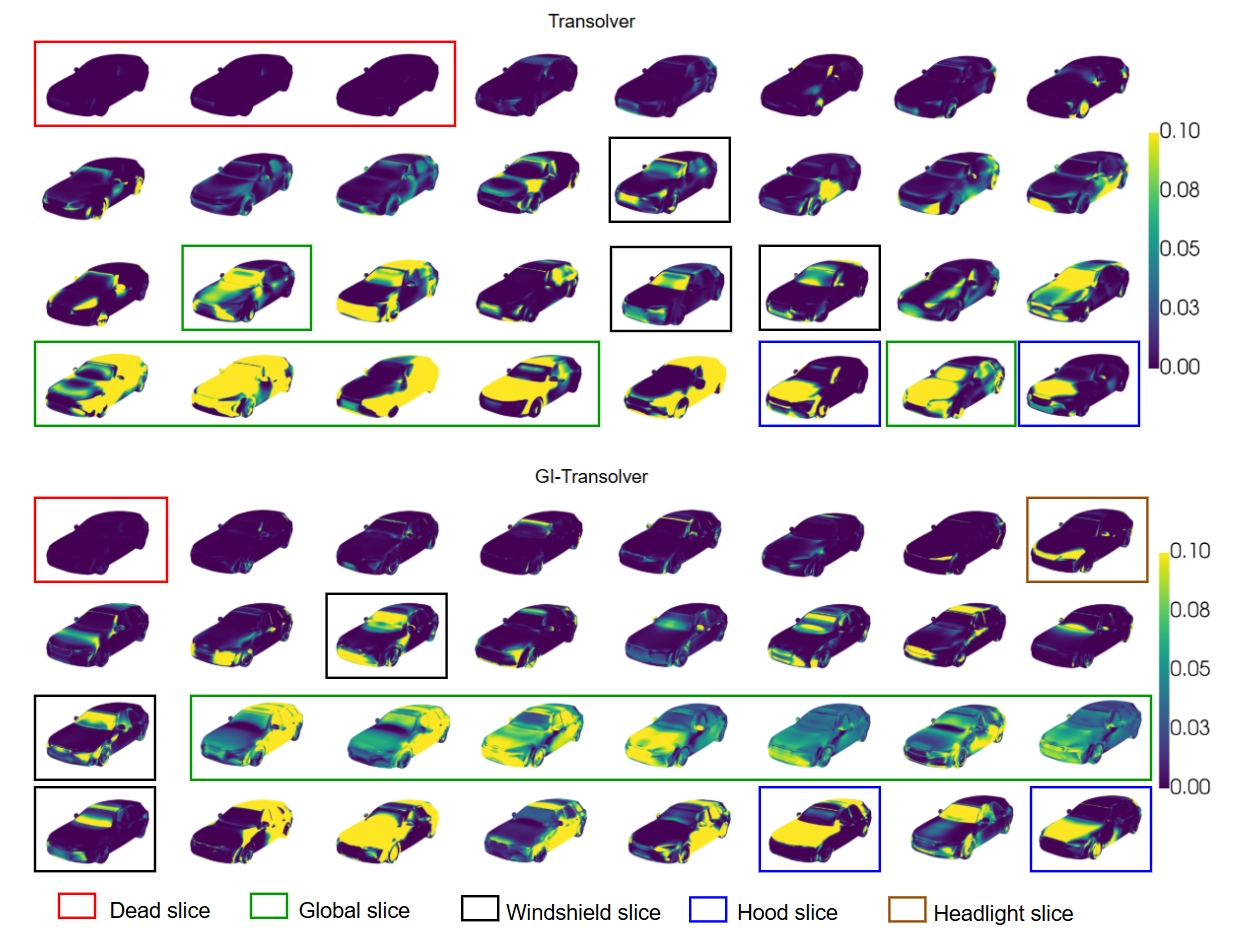}
\caption{\textbf{Comparison of slices of GI-Transolver and Transolver.} 1) \textcolor{red}{Red} indicates dead slices. A dead slice refers to a slice with very low activation values and no clear semantic meaning. GI-Transolver has fewer dead slices than Transolver. 2) \textcolor{green}{Green} indicates global slices. A global slice has relatively high activation values over the whole object. Compared with localized activations, such globally high activations can be interpreted as corresponding to a global physical feature. Due to our colorbar setting, the visual differences within the range from 0.03 to 0.08 are not very pronounced, so the slice may appear blurry for GI-Transolver. 3) Black indicates the windshield slice. Both models capture this region. 4) \textcolor{blue}{Blue} indicates the hood slice. The hood region learned by our model is more complete than that learned by Transolver. 5) \textcolor{brown}{Brown} indicates the headlight slice. In contrast to ours, Transolver shows a less localized pattern on this region.}
    \label{fig:sli}
\end{figure}
 \subsection{The Effects of Gaussian Noise}
 \label{app:noise}
To study whether latent perturbation degrades reconstruction fidelity, we inject different scale isotropic Gaussian noise into the latent code during training and evaluate reconstruction quality without noise injection on unseen geometry. This experiment is conducted on DrivAerNet++.
Specifically, we train \textsc{DeepSDF} auto-decoders with noise standard deviation $\sigma\in\{0,\,0.005,\,0.01,\,0.05\}$ and report F1-score, Chamfer Distance (CDist), and Earth Mover's Distance (EMD).
As shown in Table~\ref{tab:ns}, adding Gaussian noise does \emph{not} noticeably reduce reconstruction quality: F1-score remains stable around $0.911$-$0.912$, while CDist and EMD show negligible changes across all settings. This suggests that, owing to the strong representational capacity of \textsc{DeepSDF}, moderate latent noise
injection can be absorbed without sacrificing geometric fidelity.
\begin{table}[H]
\centering
\setlength{\tabcolsep}{10pt}

\renewcommand{\arraystretch}{1.25}
\caption{Ablation of noise scale.}
\begin{tabular}{l c c c l}
\toprule
 $\sigma$ & \textbf{0}& \textbf{0.005} & \textbf{0.01} &\textbf{0.05}\\
\midrule
F1-score $\uparrow$&0.911&0.912&0.912&0.911\\
%MAE sigma&1.21e-05&1.11e-05&1.04e-05 &1.36e-05\\
CDist $\downarrow$&8.26e-05&8.25e-05&8.24e-05&8.26e-05\\
%CD sigma&3.31e-06&3.24e-06&3.19e-06&3.34e-06\\
EMD $\downarrow$&6.17e-02&6.11e-02&6.23e-02&6.18e-02\\
%EMD sigma&6.22e-03&5.47e-03&7.10e-03&7.18e-03\\ 
\bottomrule
\end{tabular}

\label{tab:ns}
\end{table}

\subsection{The Effects of Number of Sensors}
\label{app:sensors}
We further investigate how the number of boundary sensors affects inverse scattering reconstruction quality.
We vary the sensor count from extremely sparse measurements ($10$ sensors) to dense observations ($400$ sensors), while keeping all other settings fixed. Fig.~\ref{fig:sen} shows representative reconstructions and the corresponding pixel-wise error maps.
Overall, our method remains robust even under highly sparse sensing. With only $10$ sensors, the recovered obstacle already captures the correct global silhouette and preserves the main concave/convex boundary structures. Increasing the number of sensors provides only marginal improvements: the reconstruction errors fluctuate within a narrow range (approximately $6\%$-$7\%$ across all sensor counts in this example), and the predicted shapes remain visually similar. These results indicate that our latent-space inversion is not overly reliant on dense measurements.
\begin{figure}[H]
    \centering
        
    \includegraphics[width=1\linewidth]{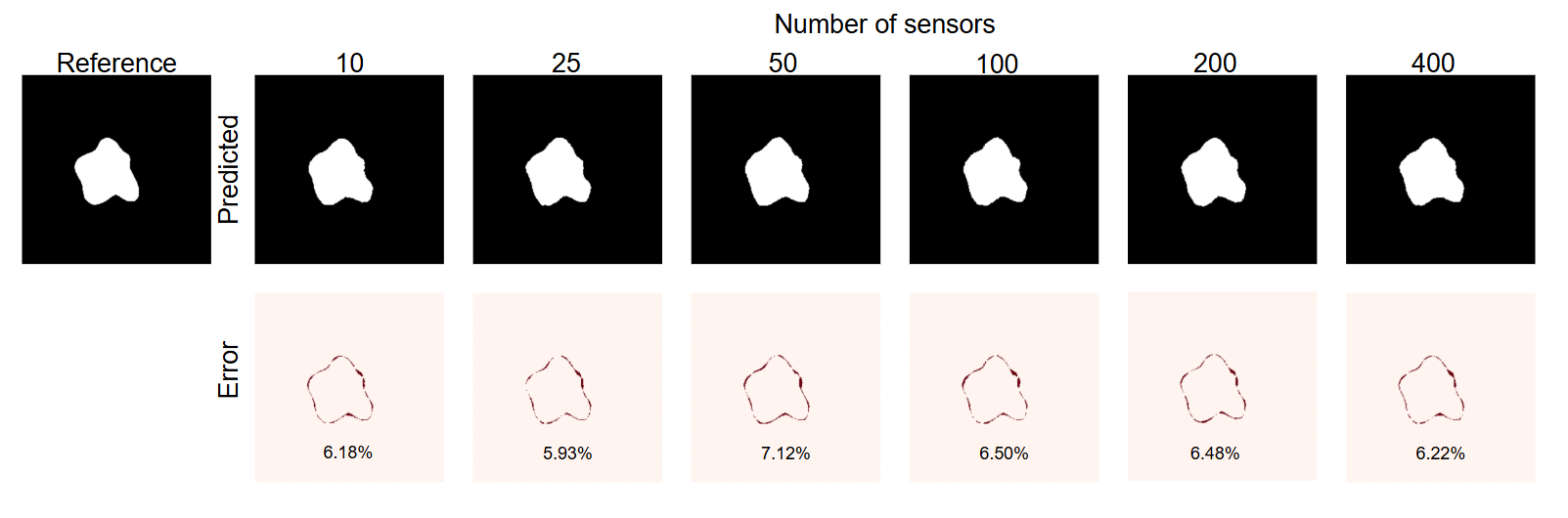}
\caption{Ablation of the number of sensors on Helmholtz inversion.}
    \label{fig:sen}
\end{figure}

\subsection{The Effects of Number of Sampling Points for \textsc{StableSDF}}
\label{app:sps}
We study how the number of input samples at inference time affects reconstruction quality of \textsc{StableSDF} on DrivAerNet++. While our model is trained with $50000$ surface points per shape, we evaluate reconstruction metrics using different numbers of sampled surface points, i.e., $10000$, $20000$, and $50000$, and report F1-score, Chamfer Distance (CDist), and Earth Mover's Distance (EMD).
As shown in Table~\ref{tab:spn}, reducing the number of samples significantly degrades reconstruction quality. In particular, using only $10000$ points leads to a clear drop in F1-score ($0.693$ vs.\ $0.944$) and a substantially higher CDist, indicating that sparse sampling struggles to preserve fine details and structures. Increasing the input to $20000$ points markedly improves reconstruction quality (F1-score $0.854$; CDist $10.3\times 10^{-5}$), yielding results that are already acceptable. As expected, performance further improves and saturates when matching the training-time setting ($50000$ points). Overall, these results suggest that while \textsc{StableSDF} benefits from dense sampling for high-fidelity vehicle reconstruction, a moderate input resolution (around $20000$ points) can provide a practical accuracy-efficiency trade-off at inference time.
\begin{table}[H]
\centering
\setlength{\tabcolsep}{10pt}
\renewcommand{\arraystretch}{1.25}
\caption{Ablation of sampling points numbers of \textsc{StableSDF}.}
\begin{tabular}{l c c  l}
\toprule
 \textbf{Points} & \textbf{10000}& \textbf{20000}& \textbf{50000}\\
\midrule
F1-score $\uparrow$&0.693&0.854&0.944\\
CDist $\downarrow$&16.2e-05&10.3e-05&6.62e-05\\ 
EMD $\downarrow$&6.15e-02&6.28e-02&6.20e-02\\ \bottomrule
\end{tabular}

\label{tab:spn}
\end{table}

\subsection{Ablation Study of Geometric Injection}
\label{app:gi}
We study different strategies for injecting geometric latent $\mathbf{z}$ into \textsc{Transolver}:
(i) \textbf{Additive injection}, directly adding $\mathbf{z}$ to slice tokens;
(ii) \textbf{Cross-attention injection}, generating geometry embeddings from $\mathbf{z}$ through MLP and interacting with slices via cross-attention;
(iii) \textbf{Gated injection}, using slice-conditioned gates to selectively inject $\mathbf{z}$.
As shown in Table~\ref{tab:gim}, on DrivAerNet++, gated injection provides the best performance, outperforming simpler additive fusion and cross-attention.

\begin{table}[H]
\centering
\caption{Ablation of geometric injection methods. Additive Injection (Add), Cross-attention Injection (CA), and the proposed Gated Injection (\textsc{GANO} - ours).}
\begin{tabular}{l c c c }
\toprule
 \textbf{Method}& \textbf{Add}  & \textbf{CA} & \textbf{\textsc{GANO} (ours)} \\
\midrule
\textbf{Rel. L2} $\downarrow$&18.2\%  &18.5\%  &17.8\%    \\
\textbf{Rel. L1} $\downarrow$&16.8\%  &17.3\%  &16.5\%   \\ \bottomrule
\end{tabular}

\label{tab:gim}

\end{table}

\subsection{The Effects of Remeshing-free Projection}
\label{app:proj}
We employ a remeshing-free projection scheme that maps query points onto the zero-level set of the updated surface to replace time-consuming remeshing. We evaluate the performance of this projection in Fig.~\ref{fig:4}. As shown in the histogram in Fig.~\ref{fig:4}\textbf{a}, we add a Gaussian perturbation to surface points, the resulting distribution (red) exhibits significant deviation from the surface. After applying our method, the distances converge effectively to zero (green), demonstrating that the points now lie on the surface. This is visually corroborated in Fig.~\ref{fig:4}\textbf{b}, where the point cloud, initially colored by distance errors, is corrected to perfectly align with the car geometry (shown in black). Notably, this operation is computationally efficient, projecting $10^4$ points in approximately 30 ms within only 5 iterations. Table~\ref{tab:projection_success_rate} further evaluates projection success rates under different absolute SDF thresholds on a larger experiment scale. Fig.~\ref{fig:jcb} demonstrates that the Jacobian norm of the StableSDF output with respect to \(\mathbf{x}\) remains around 1 before and after perturbation, which is a desirable property for an SDF field.

\begin{figure*}[!h]
    \centering
    \includegraphics[width=0.9\linewidth]{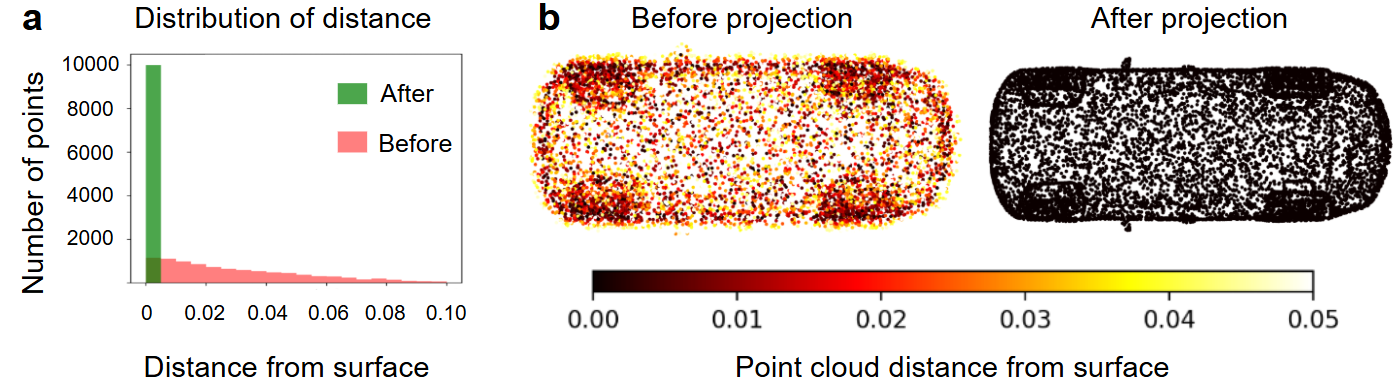}
\caption{\textbf{Remeshing-free projection for boundary-consistent queries.} \textbf{a}. Histogram of point-to-surface distances before (red) and after (green) projection. \textbf{b}. Point set colored by distance to the surface before (after) projection. We project $10^4$ points onto the updated surface in $\approx 30$~ms within 5 iterations.}
\label{fig:4}
\vspace{-6pt}
\end{figure*}
\begin{table}[h]
\centering
\setlength{\tabcolsep}{8pt}
\renewcommand{\arraystretch}{1.15}
\caption{\textbf{Projection success rates under different absolute SDF thresholds.} The statistics are computed on 100 vehicles, each with $10^5$ query points, resulting in $10^7$ points in total. A point is considered successfully projected if its absolute SDF value is below the threshold.}
\label{tab:projection_success_rate}
\begin{tabular}{lcc}
\toprule
\textbf{Threshold} & \textbf{Before projection} & \textbf{After projection} \\
\midrule
$10^{-6}$ & 0.0024\% & 98.6350\% \\
$10^{-5}$ & 0.0231\% & 99.7322\% \\
$10^{-4}$ & 0.2281\% & 99.9739\% \\
$10^{-3}$ & 2.2684\% & 99.9998\% \\
\bottomrule
\end{tabular}
\end{table}
\begin{figure}[h]
    \centering
    \includegraphics[width=\linewidth]{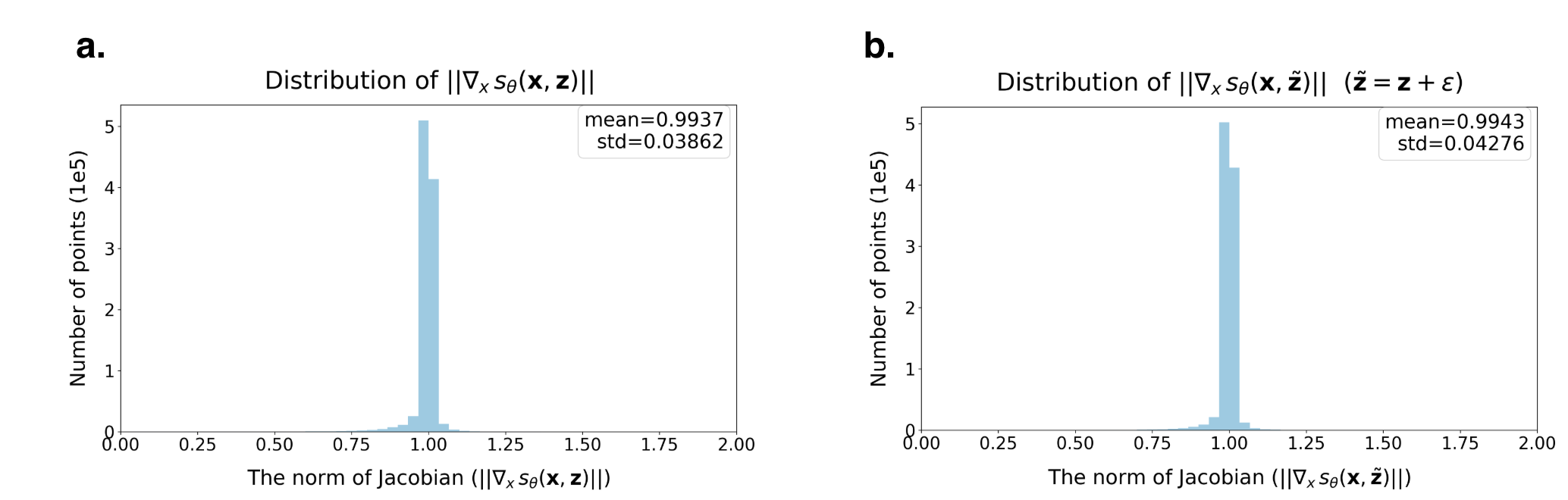}
\caption{\textbf{Distribution of the Jacobian norm of the StableSDF output with respect to \(\mathbf{x}\):} \textbf{a.} before random perturbation and \textbf{b.} after random perturbation. The statistics are computed from 100 randomly sampled test vehicles, with 10,000 points sampled from each vehicle.}
    \label{fig:jcb}
\end{figure}
\subsection{Computational Cost}
\label{app:cc}
\paragraph{Forward prediction.}
We report inference efficiency in Table~\ref{tab:model_comparison} in terms of per-sample latency, peak GPU memory, and parameter count. Overall, our method shows similar speed and memory use as \textsc{Transolver}: on Helmholtz we match its memory footprint (5.83 vs.\ 5.82~GB) while being faster (4.97 vs.\ 5.89~ms), and on AirFoil\_9k we achieve comparable latency (4.53 vs.\ 4.63~ms) with fewer parameters (2.49M vs.\ 3.78M), at a modest increase in memory (2.40 vs.\ 1.68~GB). In contrast, \textsc{AeroGTO} is consistently a high-memory outlier  (44.04/7.79/74.00~GB on Helmholtz/AirFoil\_9k/DrivAerNet++) which significantly limits its training speed and scalability to dense mesh, whereas our method stays in the single-digit GB regime and is also faster.
\begin{table*}[h]
\centering
\caption{Performance comparison on Helmholtz, Airfoil, Vehicle datasets. We report the \textbf{Inference Time per sample} (ms), \textbf{Training GPU Memory} (MB), and Parameters (M). Note that the GPU memory consumption is measured during training and depends on the Batch Size (BS). The specific settings for training BS and number of points per sample ($N$) are: \textbf{Helmholtz} (BS=32, $N$=4096), \textbf{Airfoil} (BS=8, $N$=4096), and \textbf{Vehicle} (BS=4, $N$=50k).}
\label{tab:model_comparison}

\begin{tabular}{l *{9}{>{\centering\arraybackslash}p{1.1cm}}}
\toprule
\multirow{2}{*}{\textbf{Model}}  & \multicolumn{3}{c}{\textbf{Helmholtz}}& \multicolumn{3}{c}{\textbf{Airfoil}} & \multicolumn{3}{c}{\textbf{Vehicle}}  \\
\cmidrule(lr){2-4} \cmidrule(lr){5-7} \cmidrule(lr){8-10}
 & Time (ms) & Mem. (MB) & Param. (M) & Time (ms) & Mem. (MB) & Param. (M) & Time (ms) & Mem. (MB) & Param. (M) \\
\midrule
\textsc{AeroGTO}& 7.43 & 45102 & 2.09 & 11.54 & 7979 & 2.09 & 66.16 & 75780 & 2.08  \\
\textsc{Deeponet} & 0.47 & 441 & 0.11& 0.31 & 179 & 0.13 & - & - & -  \\
\textsc{FNO} & 3.12 & 677 & 4.74& - & - & - & - & - & -  \\
\textsc{PointNet++} & 223.00 & 4503 & 0.94 & 299.47 & 1150 & 0.96 & 255.97 & 11502 & 1.03 \\
\textsc{Transolver} & 5.89 & 5957 & 1.73& 4.63 & 1725 & 3.77 & 9.05 & 9218 & 2.00  \\
\textsc{Transolver++}  & 4.88 & 7807 & 1.99& 5.05 & 1015 & 2.13 & 11.14 & 6444 & 1.74  \\
\textsc{U-Net} & 18.55 & 15313 & 4.24& 15.52 & 3667 & 4.24 & - & - & -  \\
\textsc{GANO} (ours)& 4.97 & 5972 & 1.83 & 4.53 & 2456 & 2.48 & 32.08 & 9310 & 2.07  \\
\bottomrule
\end{tabular}

\end{table*}

\paragraph{Optimization.}
We report the end-to-end runtime of the full pipeline for optimization/inversion of Helmholtz, airfoil, and vehicle, covering geometry encoding, forward prediction, points projection time, and full round time.
All experiments are conducted on a single NVIDIA A100 GPU. 
We run the optimizer for 100 steps, using 100 observed points on Helmholtz; for 100 steps, using about 50k mesh points on airfoil; for 40 steps, using about 50k surface points on a car.
 Summarized in Table~\ref{tab:runtime_breakdown}, results demonstrate the efficiency of our pipeline.

\begin{table}[h]
\centering
\setlength{\tabcolsep}{8pt}
\renewcommand{\arraystretch}{1.15}
\caption{Runtime breakdown (in seconds) of the optimization/inversion pipeline on Helmholtz, airfoil, vehicle, where ``-" means not required.}
\label{tab:runtime_breakdown}
\begin{tabular}{l lc c }
\toprule
\textbf{Time (s)}&\textbf{Helmholtz}& \textbf{Airfoil} & \textbf{Vehicle}\\
\midrule
Geometry encoding  &  -& 0.53 & 1.10 \\
Forward prediction&1.92& 1.19 & 0.64 \\
Points projection&-& 16.35 & 4.10 \\
\midrule
Full round&3.93& 35.69 & 7.39 \\
\bottomrule
\end{tabular}
\end{table}

\newpage

\section{Additional Results}
\label{app:add_results}
We provide additional qualitative visualizations of the experimental results here. Fig.~\ref{fig:hf} shows the full qualitative comparison for the baseline forward problem in the Helmholtz experiment. In (a), the $45^\circ$ incidence angle is out-of-distribution, while in (b), the $0^\circ$ incidence angle is seen during training. For each case, we visualize the real part and imaginary part of the scattered field, as well as the corresponding error maps. Fig.~\ref{fig:af} shows the full qualitative comparison for the baseline forward problem in the airfoil experiment. We report the predicted fields and error maps for the horizontal velocity, vertical velocity, and pressure. Fig.~\ref{fig:afo} illustrates the evolution of the airfoil shape and the automatically projected mesh during the \textsc{GANO}-based airfoil optimization process. Fig.~\ref{fig:helmholtz_forward_inverse} gives a comparison of geometry-informed \textsc{UNet} and \textsc{DeepONet}, \textsc{CORAL}, and \textsc{GANO} on 2D Helmholtz task. Fig.~\ref{fig:image-20260329144826094-image-20260329144856680} compares the results after optimizing the same Fastback and Estateback car for 20 steps using \textsc{StableSDF} and \textsc{DeepSDF} under the same settings. Fig.~\ref{fig:image-20260330102134564} visualizes the results of GANO and \textsc{PhysGen} in predicting the pressure field.

\begin{figure}[!h]
    \centering
    \includegraphics[width=1\linewidth]{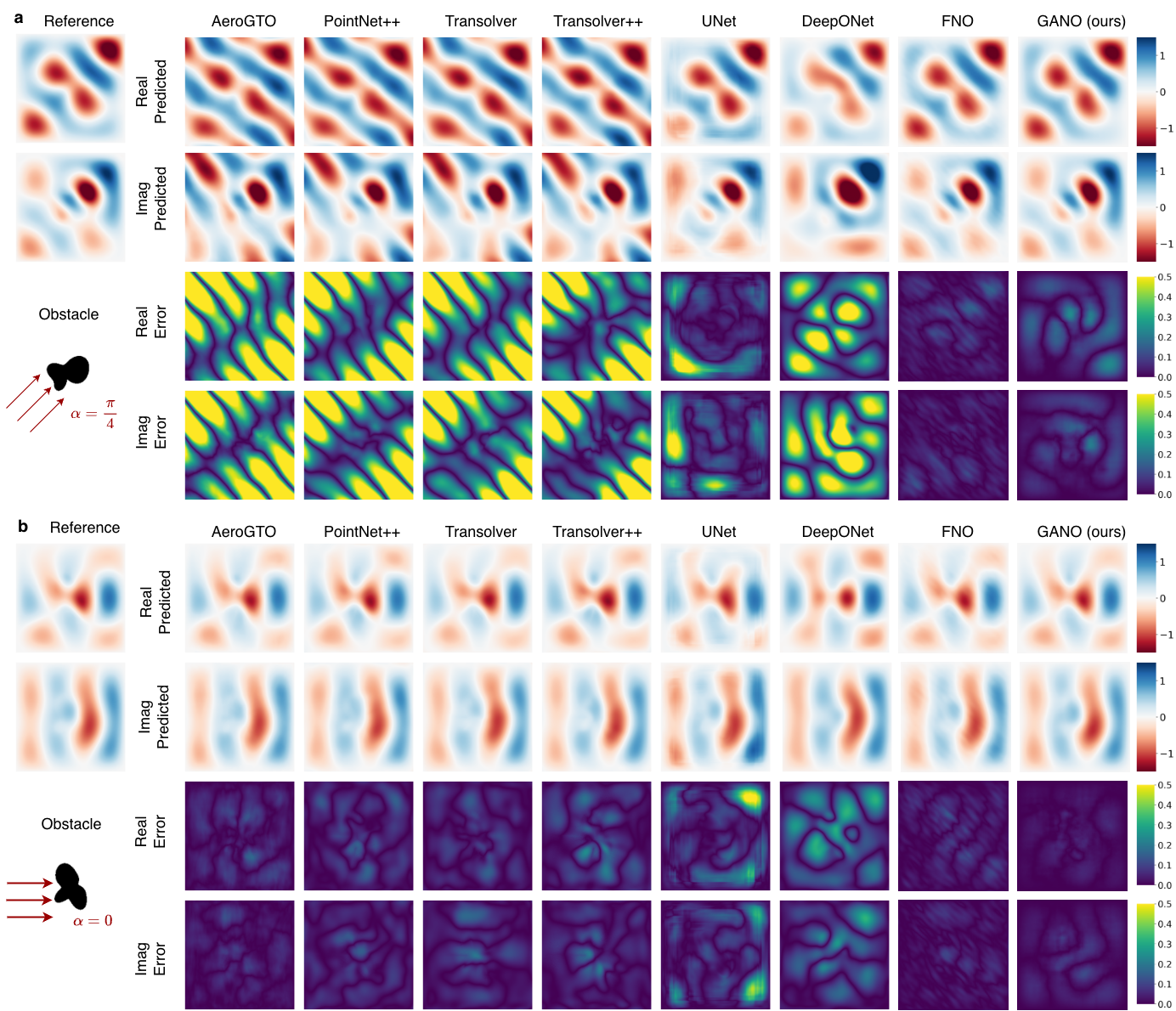}
    \caption{Full results on Helmholtz dataset.}
    \label{fig:hf}
\end{figure}

\begin{figure}[!t]
    \centering
    \includegraphics[width=1\linewidth]{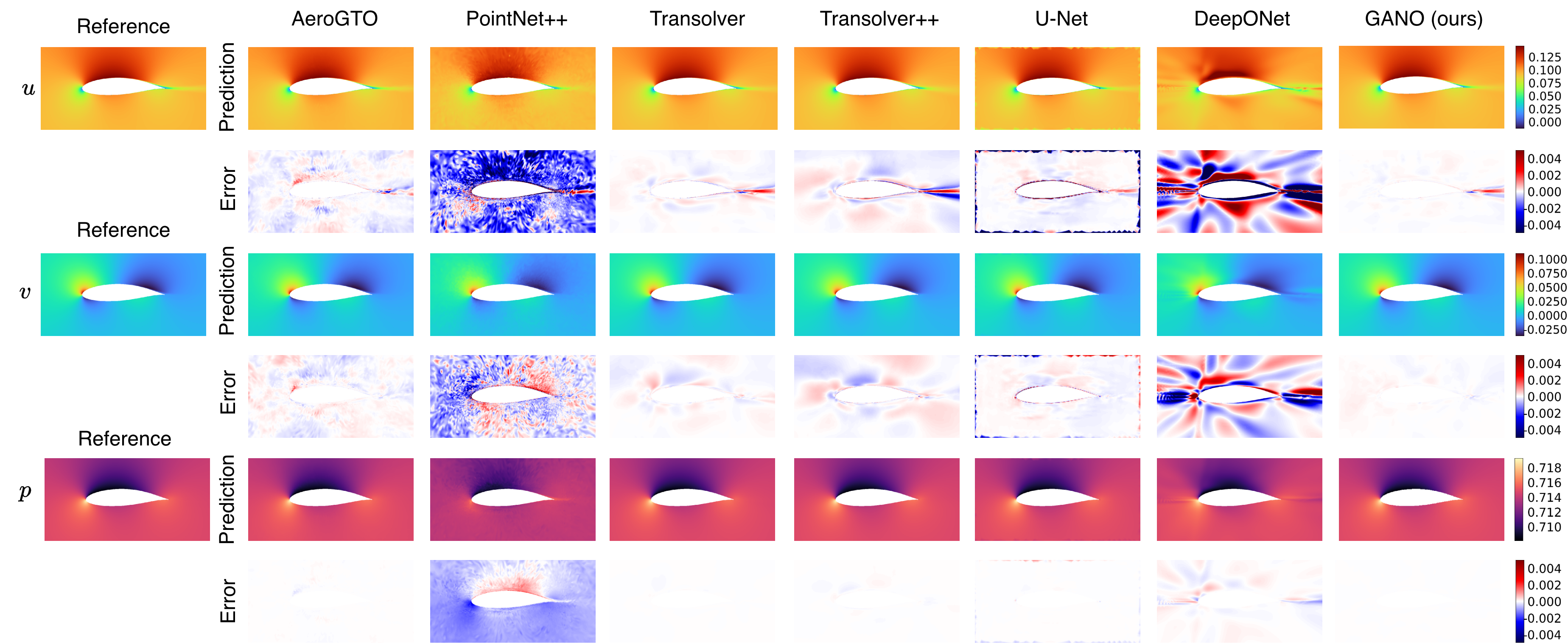}
    \caption{Full results on Airfoil dataset.}
    \label{fig:af}
\end{figure}

\begin{figure}[!t]
    \centering
    \includegraphics[width=1\linewidth]{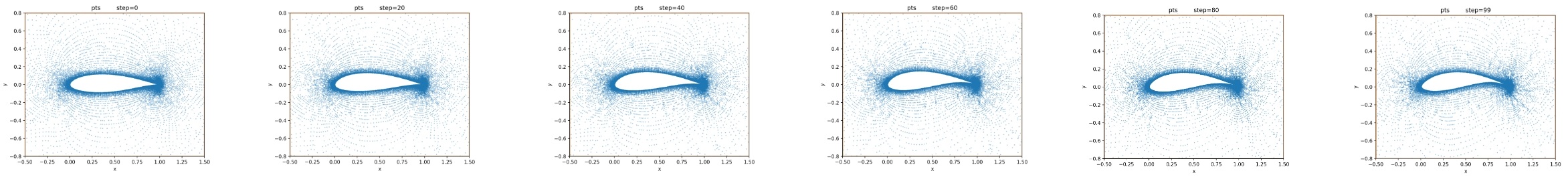}
    \caption{Remeshing free optimization process for an airfoil, showing how sampling points changes with the geometry.}
    \label{fig:afo}
\end{figure}
\label{fig:inr}
\begin{figure}[t]
    \centering

    \includegraphics[width=14cm]{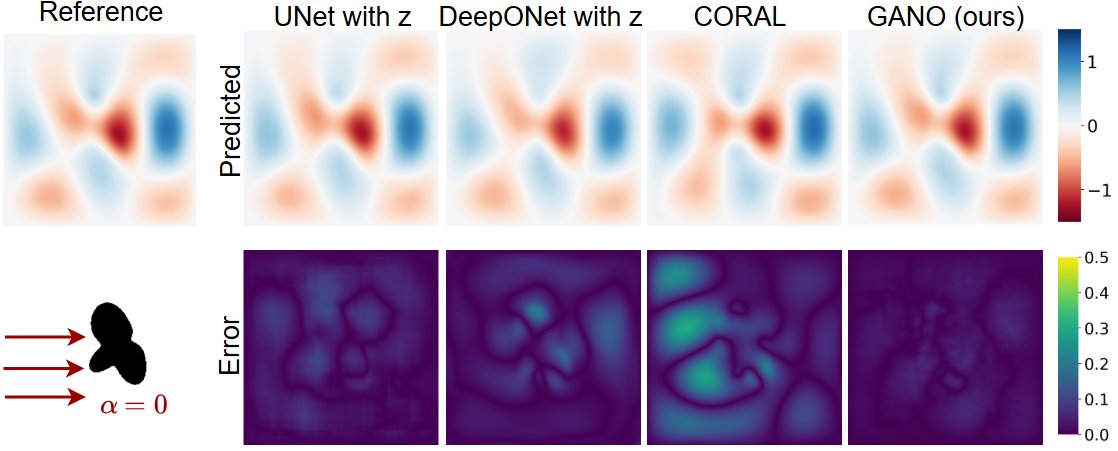}
    
    \vspace{0.2em}
    
    \makebox[\linewidth][c]{\small (a)}
    
    \vspace{0.4em}

    \includegraphics[width=14cm]{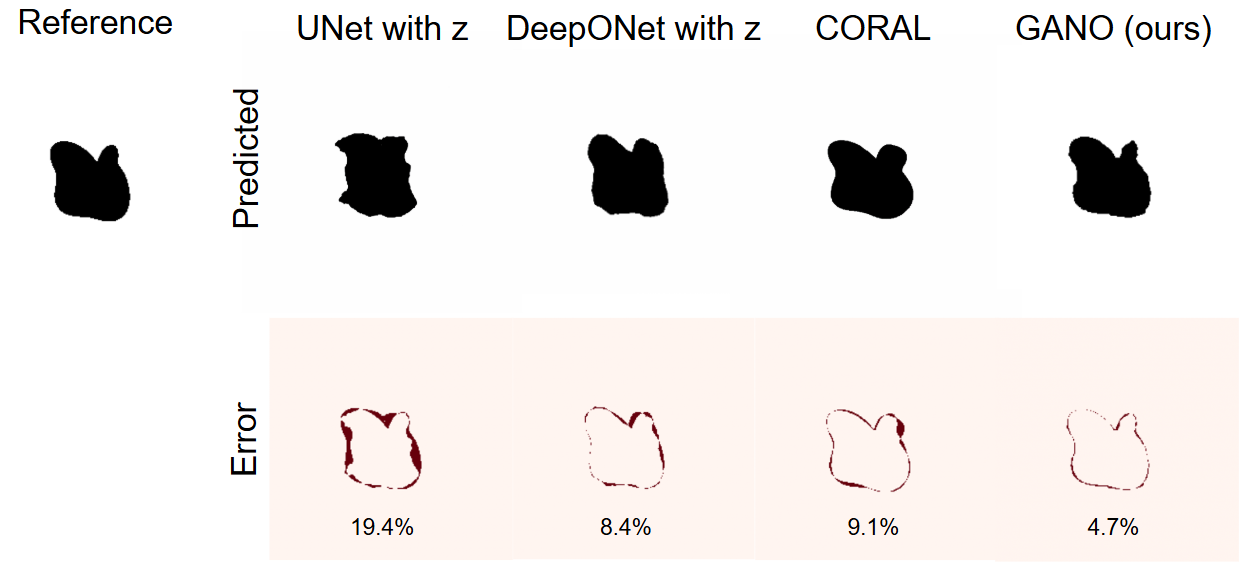}
    
    \vspace{0.2em}
    
    \makebox[\linewidth][c]{\small (b)}

    \caption{\textbf{2D Helmholtz tasks: comparison of geometry-informed UNet and DeepONet, CORAL, and GANO.} \textbf{a.} Forward task. In CORAL, Fourier shape parameters are first used to generate the obstacle boundary, and the region between this boundary and a fixed outer boundary is then interpolated into a 2-channel coordinate field as the geometric input encoded by CORAL. When predicting the latent physical field, the geometric latent and the incident angle are concatenated as the input to the MLP. Other settings are consistent with its open-source code. \textbf{b.} Inversion task.}
    \label{fig:helmholtz_forward_inverse}
\end{figure}

\label{fig:sdf}
\begin{figure}[h]
    \centering
    \includegraphics[width=\linewidth]{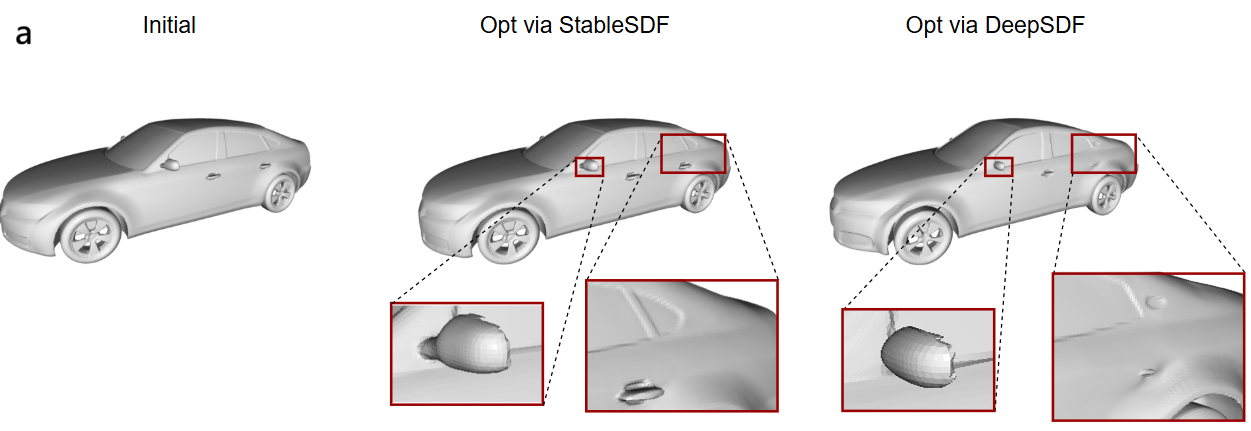}
    \vspace{0.4em}
    \includegraphics[width=\linewidth]{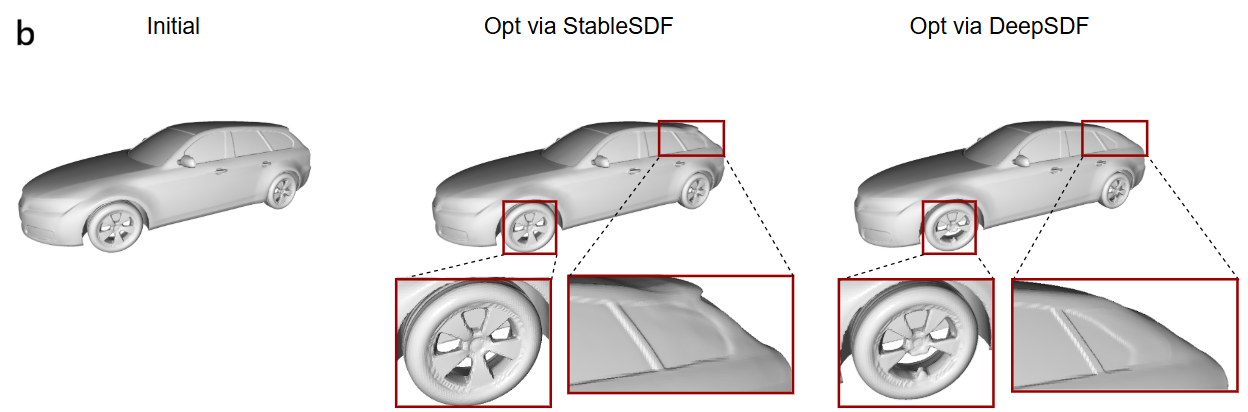}    
    \caption{\textbf{Comparison of the results after optimizing the same Fastback and Estateback car for 20 steps using StableSDF and DeepSDF under the same settings.} \textbf{a.} After optimization with DeepSDF, the deformation of the car details is relatively severe: the side mirrors detach from the car body, and the rear door handle as well as the rear window also undergo severe deformation. \textbf{b.} After optimization with DeepSDF, the wheels show obvious damage, and the rear back of the car deviates severely from the original design, changing from an Estateback into a Fastback.}
    \label{fig:image-20260329144826094-image-20260329144856680}
\end{figure}

\begin{figure}[h]
    \centering
    \includegraphics[width=12cm]{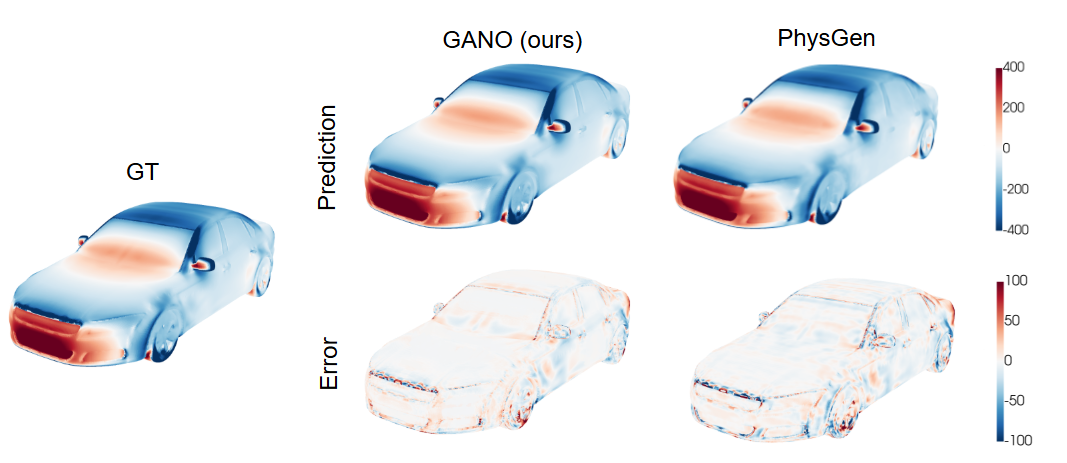}
    \caption{\textbf{Comparison of the results of GANO and PhysGen in predicting the pressure field.}}
    \label{fig:image-20260330102134564}
\end{figure}

\end{document}